\newcommand{\lyxdot}{.}
\theoremstyle{plain}
\newtheorem{thm}{\protect\theoremname}
\theoremstyle{plain}
\newtheorem{prop}[thm]{\protect\propositionname}
\theoremstyle{plain}
\newtheorem{cor}[thm]{\protect\corollaryname}
\theoremstyle{plain}
\newtheorem{lem}[thm]{\protect\lemmaname}
\newcommand\myshade{55}
\colorlet{mylinkcolor}{violet}
\colorlet{mycitecolor}{blue!80!black}
\colorlet{myurlcolor}{Aquamarine}
\providecommand{\corollaryname}{Corollary}
\providecommand{\lemmaname}{Lemma}
\providecommand{\propositionname}{Proposition}
\providecommand{\theoremname}{Theorem}
\begin{document}
\newcommand{\ourtitle}{Testing Goodness of Fit of Conditional Density Models with Kernels}


\ifx\arxiv
\renewcommand\Authsep{, \quad}
\renewcommand\Authands{, \quad}
\author[1]{Wittawat Jitkrittum\thanks{Contact: \texttt{wittawat@tuebingen.mpg.de}.}} 
\affil[1]{MPI for Intelligent Systems, T\"{u}bingen}
\author[2]{Heishiro Kanagawa}
\affil[2]{Gatsby Unit, University College London}
\author[1]{Bernhard Sch\"{o}lkopf}

\else
\fi

%
\author{ 
{\bf Wittawat Jitkrittum\thanks{Now with Google Research.}}  \\ 
\texttt{wittawatj@gmail.com} \\
MPI for Intelligent Systems \\ 
T\"ubingen, Germany 
\And  
{\bf Heishiro Kanagawa} \\ 
\texttt{heishiro.kanagawa@gmail.com} \\
Gatsby Unit, University College London \\ 
London, United Kingdom\\ 
\And {\bf Bernhard Sch\"{o}lkopf}   \\ 
\texttt{bs@tuebingen.mpg.de}\\
MPI for Intelligent Systems \\ 
T\"ubingen, Germany }

\global\long\def\rxess{r_{x}\mbox{-}\mathrm{ess}}%

\global\long\def\rxeq{\stackrel{r_{x}}{=}}%
\global\long\def\rxneq{\stackrel{r_{x}}{\neq}}%

\global\long\def\tph{\widehat{T_{p}^{V}}}%
\global\long\def\tpvh{\widehat{T_{p}^{V}}}%
\global\long\def\tpv{T_{p}^{V}}%

\global\long\def\hbpv{\overline{H}_{p}^{V}}%

\global\long\def\dph{\widehat{D_{p}}}%

\title{\ourtitle}
\maketitle
\begin{abstract}
We propose two nonparametric statistical tests of goodness of fit
for conditional distributions: given a conditional probability density
function $p(\mathbf{y}|\mathbf{x})$ and a joint sample, decide whether
the sample is drawn from $p(\mathbf{y}|\mathbf{x})r_{x}(\mathbf{x})$
for some density $r_{x}$. Our tests, formulated with a Stein operator,
can be applied to any differentiable conditional density model, and
require no knowledge of the normalizing constant. We show that 1)
our tests are consistent against any fixed alternative conditional
model; 2) the statistics can be estimated easily, requiring no density
estimation as an intermediate step; and 3) our second test offers
an interpretable test result providing insight on where the conditional
model does not fit well in the domain of the covariate. We demonstrate
the interpretability of our test on a task of modeling the distribution
of New York City's taxi drop-off location given a pick-up point. To
our knowledge, our work is the first to propose such conditional goodness-of-fit
tests that simultaneously have all these desirable properties.

\end{abstract}

\section{INTRODUCTION}

Conditional distributions provide a versatile tool for capturing the
relationship between a target variable and a conditioning variable
(or covariate). The last few decades has seen a broad range of modeling
applications across multiple disciplines including econometrics in
particular \citep{Mor2003,Zhe2000}, machine learning \citep{DutSalHenDei2018,UriCotGreMurLar2016},
among others. In many cases, estimating a conditional density function
from the observed data is a one of the first crucial steps in the
data analysis pipeline. While the task of conditional density estimation
has received a considerable attention in the literature, fewer works
have investigated the equally important task of evaluating the goodness
of fit of a given conditional density model.

Several approaches that address the task of conditional model evaluation
take the form of a hypothesis test. Given a conditional density model,
and a joint sample containing realizations of both target variables
and covariates, test the null hypothesis stating that the model is
correctly specified, against the alternative stating that it is not.
The model does not specify the marginal distribution of the covariates.
We refer to this task as \emph{conditional goodness-of-fit testing}.
One of the early nonparametric tests is \citet{And1997}, which extended
the classic Kolmogorov test to the conditional case. \citet{Zhe2000}
considered the first-order linear expansion of the Kullback-Leibler
divergence as the test statistic, and showed that the resulting test
is consistent against any fixed alternative under technical assumptions.
The conditional Kolmogorov test however requires estimation of the
cumulative distribution function (CDF), and may only be applied to
data of low dimension. Zheng's test involves density estimation as
part the test statistic, and test consistency is only guaranteed with
a decaying smoothing bandwidth whose rate can be challenging to control.
While there are other tests which are more computationally tractable,
these tests are only designed for conditional models from a specific
family: \citet{Mor2003} for structural equation models, \citet{StuZhu2002}
for generalized linear models, to name a few.

Another line of work which is prominent in econometrics is based on
the conditional moment restrictions (CMR). In CMR based tests, the
conditional model is specified by a conditional moment function which
has an important property that its conditional expectation under the
true data distribution is zero if and only if the model is correct.
This formulation is general, and in fact nests testing a conditional
mean regression model as a special case.
To guarantee consistency, \citet{BiePlo1997,Bie1990} use a class
of weight functions indexed by a continuous nuisance parameter so
that an infinite number of moment conditions can be considered, resulting
in a powerful test which detects any departure from the null model.
For testing the conditional mean of a regression model, the conditional
moment function can be set to the squared loss between the model output
and the target variable. However, for testing the goodness of fit
of a conditional density model, specifying the conditional moment
function is challenging, especially for a complex model whose normalizing
constant is intractable.

A related thread of development of omnibus tests for model goodness
of fit has arisen in the machine learning community recently through
the use of kernel methods and Stein operators. The combination of
Stein's identity and kernel methods was investigated in \citet{OatGirCho2017}
for the purpose of reducing the variance of Monte Carlo integration.
\citet{ChwStrGre2016,LiuLeeJor2016} independently proposed a consistent,
nonparametric test of goodness of fit of a marginal density model
known as the Kernel Stein Discrepancy (KSD) test. The KSD test has
proved successful in many applications and has spawned a number of
further studies including \citet{GorMac2017} which considered the
KSD for checking the convergence of an MCMC procedure, \citet{YanLiuRaoNev2018}
which extended the KSD test to a discrete domain, and \citet{HugMac2018,JitSzaGre2017}
which developed linear-time variants of the KSD. While proven to be
powerful, an issue with the KSD is that it is only applicable to marginal
(unconditional) density models. To our knowledge, there has been no
attempt of extending the KSD test to handle conditional density models.

In the present work, we are interested in constructing omnibus statistics
which can detect any departure from the specified conditional density
model in the null hypothesis. We propose two nonparametric, general
conditional goodness-of-fit tests which require no density estimation
as an intermediate step. Our first test, the Kernel Conditional Stein
Discrepancy (KCSD, described in Section \ref{sec:proposed_kssd}),
generalizes the KSD to conditional goodness-of-fit testing. Briefly,
we consider the KSD's Stein witness function conditioned on the covariate.
The KCSD statistic is defined as the norm, in a vector-valued reproducing
kernel Hilbert space (RKHS), of a kernel integral operator applied
to the conditional witness function. The use of the kernel integral
operator ensures that the discrepancy between the conditional model
and the data can be detected for any realization of the conditioning
variable. We prove that the KCSD test is consistent against any fixed
alternative conditional model, for any $C_{0}$-universal positive
definite kernels used; importantly, in the case of Gaussian kernels,
the consistency holds regardless of the bandwidth parameter (not necessarily
decaying in contrast to \citet{Zhe2000}).

Our second proposed test, referred to as the Finite Set Conditional
Discrepancy (FSCD, described in Section \ref{sec:proposed_fscd}),
further extends the KCSD test to also return \emph{test locations
}(a set of points) that indicate realizations of the covariate at
which the conditional model does not fit well. The FSCD test thus
offers an interpretable indication of where the conditional model
fails as evidence for rejecting the null hypothesis. Thanks to the
Stein operator, our proposed tests do not require the normalizing
constant of the conditional model. In experiments on both homoscedastic
and heteroscedastic models, we show that the KCSD test is suited for
detecting global differences, whereas the use of test locations in
the FSCD makes it more sensitive to local departure from the null
model.

\section{BACKGROUND}

\label{sec:background}This section gives background materials which
will be needed when we propose our new tests: the Kernel Conditional
Stein Discrepancy (KCSD, Section \ref{sec:proposed_kssd}) and the
Finite Set Conditional Discrepancy (FSCD, Section \ref{sec:proposed_fscd}).
We describe two known (unconditional) goodness-of-fit tests: the Kernel
Stein Discrepancy (KSD) test of \citet{ChwStrGre2016,LiuLeeJor2016}
in Section \ref{sec:ksd}, and the Finite Set Stein Discrepancy (FSSD)
of \citet{JitXuSzaFukGre2017} in Section \ref{sec:fssd}. We will
see in Sections \ref{sec:proposed_kssd} and \ref{sec:proposed_fscd}
that our proposed KCSD and FSCD are generalizations of KSD and FSSD,
respectively, to the conditional goodness-of-fit testing problem.

\subsection{KERNEL STEIN DISCREPANCY (KSD)}

\label{sec:ksd}Consider probability distributions supported on an
open subset $\mathcal{X}\subseteq\mathbb{R}^{d}$ for $d\in\mathbb{N}$.
 The Kernel Stein Discrepancy (KSD) between probability distributions
$P$ and $R$ is a divergence measure defined as $S_{P}(R)\coloneqq\sup_{\|\mathbf{f}\|_{\mathcal{F}^{d}}\le1}\left|\mathbb{E}_{\mathbf{x}\sim R}T_{P}\mathbf{f}(\mathbf{x})-\mathbb{E}_{\mathbf{x}\sim P}T_{P}\mathbf{f}(\mathbf{x})\right|$,
where $\mathbf{f}\in\mathcal{F}^{d}$, $\mathcal{F}^{d}=\times_{j=1}^{d}\mathcal{F}$,
and $\mathcal{F}$ is the reproducing kernel Hilbert space (RKHS,
\citet{BerTho2011}) associated with a positive definite kernel $k:\mathbb{R}^{d}\times\mathbb{R}^{d}\to\mathbb{R}$.

Key to the KSD is $T_{P}$, a Stein operator constructed such that
the expectation under the distribution $P$ vanishes, i.e., $\mathbb{E}_{\mathbf{x}\sim P}T_{P}\mathbf{f}(\mathbf{x})=0$,
for any function $\mathbf{f}\in\mathcal{F}^{d}$. For a distribution
$P$ admitting a differentiable, strictly positive density $p:\mathcal{X}\to(0,\infty)$,
the Langevin Stein operator of differentiable functions defined by
$T_{p}\mathbf{f}(\mathbf{x})=\mathbf{s}_{p}(\mathbf{x)}{}^{\top}\mathbf{f}(\mathbf{x})+\nabla_{\mathbf{x}}\mathbf{f}(\mathbf{x})\in\mathbb{R}^{d}$
satisfies the aforementioned condition, where $\mathbf{s}_{p}(\mathbf{x}):=\nabla_{\mathbf{x}}\log p(\mathbf{x})$
is the score function (under suitable boundary conditions \citep[Assumption A2']{OatGirCho2017}).
Thus, the KSD can be equivalently written as $\sup_{\|\mathbf{f}\|_{\mathcal{F}^{d}}\le1}\left|\mathbb{E}_{\mathbf{x}\sim R}T_{p}\mathbf{f}(\mathbf{x})\right|$
It can be shown that if the kernel $k$ is $C_{0}$-universal \citep{SriFukLan2011},
and $R$ has a density $r$ such that $\mathbb{E}_{\mathbf{x}\sim r}\lVert\nabla_{\mathbf{x}}\log p(\mathbf{x})-\nabla_{\mathbf{x}}\log r(\mathbf{x})\rVert_{2}^{2}<\infty$,
then $S_{p}(r)=0$ if and only if $p=r$ \citep[Theorem 2.2]{ChwRamSejGre2015}.

The KSD can be rewritten in a form that can be estimated easily. Assume
that the kernel $k$ is differentiable. Then, for any function $\mathbf{f}\in\mathcal{F}^{d}$,
we have $T_{p}\mathbf{f}(\mathbf{x})=\langle\mathbf{f},\xi_{p}(\mathbf{x},\cdot)\rangle_{\mathcal{F}^{d}}$
where $\xi_{p}(\mathbf{x},\cdot):=\mathbf{s}_{p}(\mathbf{x})k(\mathbf{x},\cdot)+\nabla_{x}k(\mathbf{x},\cdot)$,
due to the reproducing property of $k$, where $\langle\mathbf{f},\mathbf{g}\rangle_{\mathcal{F}^{d}}=\sum_{j=1}^{d}\langle f_{j},g_{j}\rangle_{\mathcal{F}}$
is the inner product on $\mathcal{F}^{d}$. Assuming Bochner integrability
of $\xi_{p}(\mathbf{x},\cdot)$ as in \citet{ChwStrGre2016,LiuLeeJor2016},
it follows that 
\begin{align*}
S_{p}(r) & =\sup_{\mathbf{f}\in\mathcal{F}^{d}}\left|\langle\mathbf{f},\mathbb{E}_{\mathbf{x}\sim r}\xi_{p}(\mathbf{x},\cdot)\rangle_{\mathcal{F}^{d}}\right|=\lVert\mathbf{g}_{p,r}\rVert_{\mathcal{F}^{d},}
\end{align*}
where  $\mathbf{g}_{p,r}(\cdot)=\mathbb{E}_{\mathbf{x}\sim r}\xi_{p}(\mathbf{x},\cdot)\in\mathcal{F}^{d}$
is the function that achieves the supremum, and is known as the Stein
witness function \citep{JitSzaGre2017}.  The squared KSD admits
the expression $S_{p}^{2}(r)=\lVert\mathbf{g}_{p,r}\rVert_{\mathcal{F}^{d},}^{2}=\mathbb{E}_{\mathbf{x},\mathbf{x}'\sim r}h_{p}(\mathbf{x},\mathbf{x}')$
where
\begin{align*}
h_{p}(\mathbf{x,\mathbf{x}'}): & =k(\mathbf{x},\mathbf{x}')\mathbf{s}_{p}^{\top}(\mathbf{x})\mathbf{s}_{p}(\mathbf{x}')+\sum_{i=1}^{d}\frac{\partial^{2}k(\mathbf{x},\mathbf{x}')}{\partial x_{i}\partial x_{i}'}\\
 & \phantom{=}+\mathbf{s}_{p}^{\top}(\mathbf{x})\nabla_{\mathbf{x}'}k(\mathbf{x},\mathbf{x}')+\mathbf{s}_{p}^{\top}(\mathbf{x}')\nabla_{\mathbf{x}}k(\mathbf{x},\mathbf{x}').
\end{align*}

Given a sample $\{\mathbf{x}_{i}\}_{i=1}^{n}\sim r$, the squared
KSD has an unbiased estimator $\hat{S}_{p}^{2}(r)\coloneqq\frac{1}{n(n-1)}\sum_{i\neq j}h_{p}(\mathbf{x}_{i},\mathbf{x}_{j})$,
which is a U-statistic \citep{Ser2009}. Since the KSD only depends
on $p$ through $\nabla_{\mathbf{x}}\log p(\mathbf{x})$, the normalizing
constant of $p$ is not required. The squared KSD has been successfully
used in \citet{ChwStrGre2016,LiuLeeJor2016} as the test statistic
for goodness-of-fit testing: given a marginal density model $p$ (known
up to the normalizing constant), and a sample $\{\mathbf{x}_{i}\}_{i=1}^{n}\sim r$,
test whether $p$ is the correct model.

\subsection{FINITE SET STEIN DISCREPANCY (FSSD)}

The Finite Set Stein Discrepancy (FSSD, \citet{JitSzaGre2017}) is
one of several extensions of the original KSD aiming to construct
a goodness-of-fit test of an unconditional density model that runs
in linear time (i.e., $\mathcal{O}(n)$ runtime complexity), and that
offers an interpretable test result. Key to the FSSD is the observation
that the KSD $S_{p}(r)=0$ if and only if $p=r$, assuming conditions
described in Section \ref{sec:ksd}. As a result, $\mathbf{g}_{p,r}$
is a zero function if and only if $p=r$, implying that the departure
of $\mathbf{g}_{p,r}$ from the zero function can be used to determine
whether $p$ and $r$ are the same. In contrast to the KSD which relies
on the RKHS norm $\|\cdot\|_{\mathcal{F}^{d}}$, the FSSD statistic
evaluates the Stein witness function to check this departure. Specifically,
given a finite set $V:=\{\mathbf{v}_{1},\ldots,\mathbf{v}_{J}\}\subset\mathcal{X}$
(known as the set of test locations), the squared FSSD is defined
as $\mathrm{FSSD}_{p}^{2}(r):=\frac{1}{dJ}\sum_{j=1}^{J}\|\mathbf{g}_{p,r}(\mathbf{v}_{j})\|_{2}^{2}$.
It is shown in \citet{JitXuSzaFukGre2017} that if $V$ is drawn from
a distribution with a density supported on $\mathcal{X}$, then $\mathrm{FSSD}_{p}^{2}(r)=0$
if and only if $p=r$. The squared FSSD can be estimated in linear
time, and $V$ can be optimized by maximizing the test power of the
FSSD statistic. The optimized $V$ reveals where $p$ and $r$ differ.

\label{sec:fssd}

\section{THE KERNEL CONDITIONAL STEIN DISCREPANCY (KCSD)}

\label{sec:proposed_kssd}In this section, we propose our first test
statistic called the Kernel Conditional Stein Discrepancy (KCSD) for
distinguishing two conditional probability density functions. All
omitted proofs can be found in Section \ref{sec:proofs} (appendix). 

\textbf{Problem Setting} Let $X$ and $Y$ be two random vectors taking
values in $\mathcal{X}\times\mathcal{Y}\subset\mathbb{R}^{d_{x}}\times\mathbb{R}^{d_{y}}$.
Let $p=p(\mathbf{y}|\mathbf{x})$ be a conditional density function
representing a candidate model for modeling the conditional distribution
of $\mathbf{y}$ given\textbf{ $\mathbf{x}$}.\footnote{Note that $p$ and $r$ are conditional density functions from Section
\ref{sec:proposed_kssd} onward.} Given a joint sample $Z_{n}=\{(\mathbf{x}_{i},\mathbf{y}_{i})\}_{i=1}^{n}\stackrel{i.i.d.}{\sim}r_{xy}$
where $r_{xy}(\mathbf{x},\mathbf{y})=r(\mathbf{y}|\mathbf{x})r_{x}(\mathbf{x})$
is a joint density defined on $\mathcal{X}\times\mathcal{Y}$, conditional
goodness-of-fit testing tests
\begin{align}
H_{0}\colon p & \rxeq r\text{ \ensuremath{\quad}vs\ensuremath{\quad}}H_{1}\colon p\rxneq r,\label{eq:null_hypothesis}
\end{align}
where we write $p\rxeq r$ if for $r_{x}$-almost all $\mathbf{x}$
and for all $\mathbf{y}\in\mathcal{Y}$, $p(\mathbf{y}|\mathbf{x})=r(\mathbf{y}|\mathbf{x})$.
The alternative hypothesis $H_{1}$ is the negation of $H_{0}$ and
is equivalent to the statement ``there exists a set $U\subseteq\mathcal{X}$
with $r_{x}(U)>0$ such that $p(\cdot|\mathbf{x})\neq r(\cdot|\mathbf{x})$
for all $\mathbf{x}\in U$.''  Note that $r_{xy}$ is only observed
through the joint sample $Z_{n}$; and $p$ only specifies the conditional
model. That is, $p$ does not specify a marginal model for $\mathbf{x}$.
 This subtlety is what distinguishes the conditional goodness-of-fit
testing from testing the difference between two joint distributions.

\textbf{Rationale }For machine learning applications, the proposed
null hypothesis in (\ref{eq:null_hypothesis}) allows testing the
goodness of fit of a wide range of conditional density models, including
regression models with homoscedastic or heteroscedastic noise. The
underlying prediction function can be a neural network or other arbitrarily
nonlinear functions as long as $\nabla_{\mathbf{y}}\log p(\mathbf{y}|\mathbf{x})$
is differentiable, and satisfies conditions in Theorem \ref{thm:pop_dp}.
In this work we consider $Y$ to be a continuous random vector. However,
our proposed tests can be extended to handle a discrete $Y$ to allow
testing, for instance, Bayesian classifier models $p(y|\mathbf{x})$
where $y$ represents the classification label. While the formulated
hypothesis in the current form allows testing only a fixed conditional
model (i.e., all model parameters if any must have been learned before
the test) and may appear restrictive in some cases, our goal is not
to advocate this particular null hypothesis. Rather, we see this formulation
as a first step for more realistic null hypotheses that are yet to
come; for instance, testing whether $p(\mathbf{y}|\mathbf{x},\theta)=r(\mathbf{y}|\mathbf{x})$
for some parameter vector $\theta\in\Theta$, or testing the relative
fit (with respect to the true distribution $r$) of two competing
candidate conditional models $p$ and $q$. Future tests that consider
these hypotheses can build on the results in this paper. We leave
these questions for future work.

\textbf{Vector-valued reproducing kernels} We will require vector-valued
reproducing kernels for the construction of our new tests. We briefly
give a brief introduction to this concept here. For further details,
please see Section 2.2 of \citet{CarDeToiUma2008} and \citet{CarDeToi2006,SriFukLan2011,SzaSri2018}.
Let $\mathcal{L}(\mathcal{H};\mathcal{H}')$ be the Banach space of
bounded operators from a Hilbert space $\mathcal{H}$ to $\mathcal{H}'$
endowed with the uniform norm. We write $\mathcal{L}(\mathcal{H})$
for $\mathcal{L}(\mathcal{H};\mathcal{H})$. A kernel $K\colon\mathcal{X}\times\mathcal{X}\to\mathcal{L}(\mathcal{Z})$
is said to be a $\mathcal{Z}$-reproducing kernel if $\sum_{i=1}^{N}\sum_{j=1}^{N}\left\langle K(\mathbf{x}_{i},\mathbf{x}_{j})\mathbf{z}_{i},\mathbf{z}_{j}\right\rangle _{\mathcal{Z}}\ge0$
for any $N\ge1,\{\mathbf{x}_{i}\}_{i=1}^{N}\subset\mathcal{X},\{\mathbf{z}_{i}\}_{i=1}^{N}\subset\mathcal{Z},$
and $\left\langle \diamond,\diamond\right\rangle _{\mathcal{Z}}$
denotes the inner product on $\mathcal{Z}$. Given $\mathbf{x}\in\mathcal{X}$,
we write $K_{\mathbf{x}}\colon\mathcal{Z}\to\mathcal{L}(\mathcal{X};\mathcal{Z})$
to denote the linear operator such that $K_{\mathbf{x}}\mathbf{z}\in\mathcal{L}(\mathcal{X};\mathcal{Z})$
and $(K_{\mathbf{x}}\mathbf{z})(\mathbf{t})=K(\mathbf{x},\mathbf{t})\mathbf{z}\in\mathcal{Z}$,
for all $\mathbf{x},\mathbf{t}\in\mathcal{X}$ and all $\mathbf{z}\in\mathcal{Z}$.
As in the case of a real-valued reproducing kernel, given a $\mathcal{Z}$-reproducing
kernel $K$, there exists a unique reproducing kernel Hilbert space
(RKHS) $\mathcal{F}_{K}$ such that $K_{\mathbf{x}}\in\mathcal{L}(\mathcal{Z};\mathcal{F}_{K})$
and $f(\mathbf{x})=K_{\mathbf{x}}^{*}f$ (the reproducing property)
for all $\mathbf{x}\in\mathcal{X},f\in\mathcal{F}_{K}$ and $K_{\mathbf{x}}^{*}\colon\mathcal{F}_{K}\to\mathcal{Z}$
denotes the adjoint operator of $K_{\mathbf{x}}$.

Let $\mathcal{C}(\mathcal{X};\mathcal{Z})$ be the vector space of
continuous functions mapping from $\mathcal{X}$ to $\mathcal{Z}$.
In this work, we will assume that $\mathcal{X}$ and $\mathcal{Z}$
are Banach spaces. Let $\mathcal{C}_{0}(\mathcal{X};\mathcal{Z})\subset\mathcal{C}(\mathcal{X};\mathcal{Z})$
denote the subspace of continuous functions that vanish at infinity
i.e., $\|f(\mathbf{x})\|_{\mathcal{Z}}\to0$ as $\|\mathbf{x}\|\to\infty$.
A $\mathcal{Z}$-reproducing kernel $K\colon\mathcal{X}\times\mathcal{X}\to\mathcal{L}(\mathcal{Z})$
is said to be $C_{0}$ if $\mathcal{F}_{K}$ is a subspace of $\mathcal{C}_{0}(\mathcal{X};\mathcal{Z})$
\citep[Section 2.3, Definition 1]{CarDeToiUma2008}. A $C_{0}$-kernel
$K$ is said to be \emph{universal} if $\mathcal{F}_{K}$ is dense
in $L^{2}(\mathcal{X},\mu;\mathcal{Z})$ for any probability measure
$\mu$ \citep[Section 4.1]{CarDeToiUma2008}.

Let $l\colon\mathcal{Y}\times\mathcal{Y}\to\mathbb{R}$ be a positive
definite kernel associated with the RKHS $\mathcal{F}_{l}$. Write
$\mathcal{F}_{l}^{d_{y}}:=\times_{i=1}^{d_{y}}\mathcal{F}_{l}$ and
define $\left\langle \mathbf{a},\mathbf{b}\right\rangle _{\mathcal{F}_{l}^{d_{y}}}:=\sum_{i=1}^{d_{y}}\left\langle a_{i},b_{i}\right\rangle _{\mathcal{F}_{l}}$
to be the inner product on $\mathcal{F}_{l}^{d_{y}}$ for $\mathbf{a}:=(a_{1},\ldots,a_{d_{y}}),\mathbf{b}:=(b_{1},\ldots,b_{d_{y}})\in\mathcal{F}_{l}^{d_{y}}$.
Let $K\colon\mathcal{X}\times\mathcal{X}\to\mathcal{F}_{l}^{d_{y}}$
be a $\mathcal{F}_{l}^{d_{y}}$-reproducing kernel i.e., $\mathcal{Z}=\mathcal{F}_{l}^{d_{y}}$.
Let $k\colon\mathcal{X}\times\mathcal{X}\to\mathbb{R}$ be a real-valued
kernel associated with the RKHS $\mathcal{F}_{k}$. For brevity, we
write $\mathbb{E}_{\mathbf{xy}}$ for $\mathbb{E}_{(\mathbf{x},\mathbf{y})\sim r_{xy}}$.
In what follows, we will interchangeably write $p_{|\mathbf{x}}$
and $p(\cdot|\mathbf{x})$.

\textbf{Proposed statistic} Consider the following population statistic
defining a discrepancy between $p$ and $r$: 
\begin{align}
D_{p}(r): & =\big\|\mathbb{E}_{(\mathbf{x},\mathbf{y})\sim r_{xy}}K_{\mathbf{x}}\xi_{p_{|\mathbf{x}}}(\mathbf{y},\diamond)\big\|_{\mathcal{F}_{K}}^{2},\label{eq:kssd_pop}
\end{align}
where $\xi_{p_{|\mathbf{x}}}(\mathbf{y},\cdot):=l(\mathbf{y},\cdot)\nabla_{\mathbf{y}}\log p(\mathbf{y}|\mathbf{x})+\nabla_{\mathbf{y}}l(\mathbf{y},\cdot)\in\mathcal{F}_{l}^{d_{y}}$.
We refer to $D_{p}(r)$ as the Kernel Conditional Stein Discrepancy
(KCSD). Our first result in Theorem \ref{thm:pop_dp} shows that the
KCSD is zero if and only if $p\stackrel{r_{x}}{=}r$.
\begin{restatable}[$D_{p}(r)$ distinguishes conditional density functions]{thm}{popstatdiv}

\label{thm:pop_dp}Let $K\colon\mathcal{X}\times\mathcal{X}\to\mathcal{L}(\mathcal{F}_{l}^{d_{y}})$
and $l\colon\mathcal{Y}\times\mathcal{Y}\to\mathbb{R}$ be positive
definite kernels. Define $\mathbf{g}_{p,r}(\mathbf{w}|\mathbf{x}):=\mathbb{E}_{\mathbf{y}\sim r_{|\mathbf{x}}}\xi_{p_{|\mathbf{x}}}(\mathbf{y},\mathbf{w})\in\mathbb{R}^{d_{y}}$
where $\mathbf{g}_{p,r}(\cdot|\mathbf{x})\in\mathcal{F}_{l}^{d_{y}}$
for each $\mathbf{x}$. Assume that
\begin{enumerate}
\item $K$ and $l$ are $C_{0}$-universal;
\item $\rxess\sup_{\mathbf{x}}\mathbb{E}_{\mathbf{y}\sim r(\mathbf{y}|\mathbf{x})}\big\|\nabla_{\mathbf{y}}\log\frac{p(\mathbf{y}|\mathbf{x})}{r(\mathbf{y}|\mathbf{x})}\big\|_{2}^{2}<\infty$;
\item $\int_{\mathcal{X}}\|\mathbf{g}_{p,r}(\diamond|\mathbf{x})\|_{\mathcal{F}_{l}^{d_{y}}}^{2}r_{x}(\mathbf{x})\thinspace\mathrm{d}\mathbf{x}<\infty$.
\item \label{enu:assume_bochner}$\mathbb{E}_{\mathbf{xy}}\|K_{\mathbf{x}}\xi_{p_{|\mathbf{x}}}(\mathbf{y},\diamond)\|_{\mathcal{F}_{K}}<\infty$;\textcolor{red}{}
\end{enumerate}
Then $D_{p}(r)=0$ if and only if $p\rxeq r$ i.e., for $r_{x}$-almost
all $\mathbf{x}\in\mathcal{X}$, $p(\cdot|\mathbf{x})=r(\cdot|\mathbf{x})$.
\end{restatable}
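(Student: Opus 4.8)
The plan is to reduce the statement to the unconditional KSD characterization recalled in Section~\ref{sec:ksd} (\citet[Theorem 2.2]{ChwRamSejGre2015}) through two nested applications of $C_0$-universality: an outer one over the covariate $\mathbf{x}$ using $K$, and an inner one over the response $\mathbf{y}$ using $l$. First I would rewrite the statistic in terms of the conditional Stein witness function. Because each $K_{\mathbf{x}}$ is a bounded linear operator and Assumption~\ref{enu:assume_bochner} grants Bochner integrability, I can pass the inner expectation over $\mathbf{y}$ through $K_{\mathbf{x}}$ and apply the tower rule to obtain
\[
\mu := \mathbb{E}_{(\mathbf{x},\mathbf{y})\sim r_{xy}} K_{\mathbf{x}}\xi_{p_{|\mathbf{x}}}(\mathbf{y},\diamond) = \mathbb{E}_{\mathbf{x}\sim r_x}\bigl[K_{\mathbf{x}}\,\mathbf{g}_{p,r}(\diamond|\mathbf{x})\bigr]\in\mathcal{F}_K,
\]
so that $D_p(r)=\lVert\mu\rVert_{\mathcal{F}_K}^2$ and the claim is equivalent to $\mu=0\iff p\rxeq r$.

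For the ``if'' direction, suppose $p\rxeq r$. Then for $r_x$-almost every $\mathbf{x}$ the conditional Stein identity (valid under the standing boundary/regularity conditions on $p$) gives $\mathbf{g}_{p,r}(\diamond|\mathbf{x})=\mathbb{E}_{\mathbf{y}\sim p_{|\mathbf{x}}}\xi_{p_{|\mathbf{x}}}(\mathbf{y},\diamond)=0$ in $\mathcal{F}_l^{d_y}$, hence $K_{\mathbf{x}}\mathbf{g}_{p,r}(\diamond|\mathbf{x})=0$ almost everywhere and $\mu=0$. For the converse I would test $\mu$ against an arbitrary $f\in\mathcal{F}_K$: using the reproducing property $K_{\mathbf{x}}^*f=f(\mathbf{x})$ and exchanging the inner product with the Bochner integral,
\[
\langle\mu,f\rangle_{\mathcal{F}_K}=\mathbb{E}_{\mathbf{x}\sim r_x}\langle\mathbf{g}_{p,r}(\diamond|\mathbf{x}),f(\mathbf{x})\rangle_{\mathcal{F}_l^{d_y}}=\langle\mathbf{g}_{p,r},f\rangle_{L^2(\mathcal{X},r_x;\mathcal{F}_l^{d_y})},
\]
where $\mathbf{g}_{p,r}$ is read as the map $\mathbf{x}\mapsto\mathbf{g}_{p,r}(\diamond|\mathbf{x})$, which lies in $L^2(\mathcal{X},r_x;\mathcal{F}_l^{d_y})$ by Assumption~3. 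Thus $\mu=0$ forces $\mathbf{g}_{p,r}$ to be $L^2(r_x)$-orthogonal to every element of $\mathcal{F}_K$; since $K$ is $C_0$-universal, $\mathcal{F}_K$ is dense in $L^2(\mathcal{X},r_x;\mathcal{F}_l^{d_y})$ (applying the definition with the probability measure $r_x$ and $\mathcal{Z}=\mathcal{F}_l^{d_y}$), so $\mathbf{g}_{p,r}(\diamond|\mathbf{x})=0$ in $\mathcal{F}_l^{d_y}$ for $r_x$-almost every $\mathbf{x}$.

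It remains to descend to the inner level. For each fixed $\mathbf{x}$, $\mathbf{g}_{p,r}(\diamond|\mathbf{x})$ is exactly the unconditional Stein witness function built from $l$ between the conditionals $p(\cdot|\mathbf{x})$ and $r(\cdot|\mathbf{x})$, so $\lVert\mathbf{g}_{p,r}(\diamond|\mathbf{x})\rVert_{\mathcal{F}_l^{d_y}}$ equals the KSD $S_{p_{|\mathbf{x}}}(r_{|\mathbf{x}})$. Assumption~2 supplies, for $r_x$-almost every $\mathbf{x}$, the finiteness $\mathbb{E}_{\mathbf{y}\sim r_{|\mathbf{x}}}\lVert\nabla_{\mathbf{y}}\log(p/r)\rVert_2^2<\infty$ required by the KSD characterization, and $l$ is $C_0$-universal; hence $\mathbf{g}_{p,r}(\diamond|\mathbf{x})=0$ yields $p(\cdot|\mathbf{x})=r(\cdot|\mathbf{x})$ for $r_x$-almost every $\mathbf{x}$, i.e.\ $p\rxeq r$.

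I expect the main obstacle to lie in the measure-theoretic bookkeeping of the converse: rigorously justifying the vector-valued interchanges (the operator $K_{\mathbf{x}}$ with the $\mathbf{y}$-expectation, and the inner product with the Bochner integral), confirming that $C_0$-universality delivers density in $L^2$ for the data-dependent measure $r_x$ with the infinite-dimensional value space $\mathcal{Z}=\mathcal{F}_l^{d_y}$, and propagating the two ``$r_x$-almost every $\mathbf{x}$'' qualifiers onto a common full-measure set so that the outer universality argument and the inner KSD theorem can be invoked together.
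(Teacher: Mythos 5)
Your proof is correct and follows the paper's decomposition step for step: the tower-rule rewriting of $D_{p}(r)$ as $\lVert\mathbb{E}_{\mathbf{x}\sim r_{x}}K_{\mathbf{x}}\mathbf{g}_{p,r}(\diamond|\mathbf{x})\rVert_{\mathcal{F}_{K}}^{2}$, an outer universality argument showing this vanishes if and only if $\mathbf{g}_{p,r}(\diamond|\mathbf{x})=\mathbf{0}$ for $r_{x}$-almost every $\mathbf{x}$, and the inner reduction to the unconditional KSD characterization of \citet[Theorem 2.2]{ChwRamSejGre2015} via Assumption 2 and the $C_{0}$-universality of $l$. The one point of divergence is how the outer step is justified: the paper invokes, as a black box, the injectivity of the vector-valued integral operator $L_{r_{x}}\colon L^{2}(\mathcal{X},r_{x};\mathcal{F}_{l}^{d_{y}})\to\mathcal{F}_{K}$ (Lemma \ref{lem:carmeli_injective}, quoted from \citealp{CarDeToiUma2008}), whereas you prove that injectivity inline by duality --- pairing $\mu:=\mathbb{E}_{\mathbf{x}\sim r_{x}}K_{\mathbf{x}}\mathbf{g}_{p,r}(\diamond|\mathbf{x})$ against an arbitrary $f\in\mathcal{F}_{K}$, using the reproducing property $K_{\mathbf{x}}^{*}f=f(\mathbf{x})$ to identify $\langle\mu,f\rangle_{\mathcal{F}_{K}}$ with $\langle\mathbf{g}_{p,r},f\rangle_{L^{2}(\mathcal{X},r_{x};\mathcal{F}_{l}^{d_{y}})}$, and then appealing to density of $\mathcal{F}_{K}$ in $L^{2}$, which is exactly the paper's definition of a universal $C_{0}$-kernel. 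Since injectivity of a bounded operator is equivalent to density of the range of its adjoint (here the inclusion $\mathcal{F}_{K}\hookrightarrow L^{2}$), your argument is in essence a proof of the cited lemma; what it buys is a self-contained derivation making transparent that universality-as-$L^{2}$-density is precisely the needed hypothesis, at the cost of extra bookkeeping that the citation sidesteps. That bookkeeping does go through: Assumption 4 together with conditional Jensen gives Bochner integrability of $\mathbf{x}\mapsto K_{\mathbf{x}}\mathbf{g}_{p,r}(\diamond|\mathbf{x})$, justifying both interchanges; every $f\in\mathcal{F}_{K}\subset\mathcal{C}_{0}(\mathcal{X};\mathcal{F}_{l}^{d_{y}})$ is bounded and hence lies in $L^{2}(\mathcal{X},r_{x};\mathcal{F}_{l}^{d_{y}})$ under the probability measure $r_{x}$, so your $L^{2}$ pairing is well defined; and the two almost-everywhere qualifiers (vanishing of the witness, finiteness from Assumption 2) combine on an intersection of full-measure sets, so the inner KSD theorem applies where needed.
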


\begin{proof}[Proof (sketch)]
The idea is to rewrite (\ref{eq:kssd_pop}) into a form that involves
the Stein witness function (as described in Section \ref{sec:background})
$\mathbf{g}_{p,r}(\diamond|\mathbf{x})$ between $p(\cdot|\mathbf{x})$
and $r(\cdot|\mathbf{x})$. It then amounts to showing that $\mathbf{g}_{p,r}(\diamond|\mathbf{x})$
is a zero function for $r_{x}$-almost all $\mathbf{x}$. This is
done by applying the integral operator $\mathbf{f}_{\mathbf{x}}\mapsto\int K_{\mathbf{x}}\mathbf{f}_{\mathbf{x}}r_{x}(\mathbf{x})\thinspace\mathrm{d}\mathbf{x}$
on $\mathbf{g}_{p,r}(\diamond|\mathbf{x})$ to incorporate ($r_{x}$-almost)
all $\mathbf{x}$. The result is $G_{p,r}=\int K_{\mathbf{x}}\mathbf{g}_{p,r}(\diamond|\mathbf{x})r_{x}(\mathbf{x})\thinspace\mathrm{d}\mathbf{x}$.
Since $K$ is $C_{0}$-universal, this operator is injective, implying
$G_{p,r}$ is zero if and only if $\mathbf{g}_{p,r}(\diamond|\mathbf{x})$
is a zero function for $r_{x}$-almost all $\mathbf{x}$. But, $G_{p,r}=\mathbb{E}_{(\mathbf{x},\mathbf{y})\sim r_{xy}}K_{\mathbf{x}}\xi_{p_{|\mathbf{x}}}(\mathbf{y},\diamond)$.
Thus, taking the norm gives (\ref{eq:kssd_pop}). See Section \ref{subsec:proof_popstatdiv}
for the complete proof.
\end{proof}
In the proof sketch, we can see the application of the integral operator
$\mathbf{f}_{\mathbf{x}}\mapsto\int K_{\mathbf{x}}\mathbf{f}_{\mathbf{x}}r_{x}(\mathbf{x})\thinspace\mathrm{d}\mathbf{x}$
as taking into account the conditional Stein witness function $\mathbf{g}_{p,r}(\diamond|\mathbf{x})$
of ($r_{x}$-almost) all $\mathbf{x}$ at the same time. Theorem \ref{thm:pop_dp}
states that the population statistic in (\ref{eq:kssd_pop}) distinguishes
two conditional density functions under regularity conditions given
above. In particular, it is required that the two kernels $K$ and
$l$ are $C_{0}$-universal. Examples of a real-valued $C_{0}$-universal
kernels are the Gaussian kernel $l(\mathbf{y},\mathbf{y}'):=\exp\left(-\frac{\|\mathbf{y}-\mathbf{y}'\|_{2}^{2}}{2\sigma_{y}^{2}}\right)\in\mathbb{R}$,
Laplace kernel, and the inverse multiquadrics kernel \citep[p. 2397]{SriFukLan2011}.
An example of a $\mathcal{F}_{l}^{d_{y}}$-reproducing, $C_{0}$-universal
kernel $K$ is $K(\mathbf{x},\mathbf{x}')=k(\mathbf{x},\mathbf{x}')I$
where $k$ is a real-valued $C_{0}$-universal kernel, and $I\in\mathcal{L}(\mathcal{F}_{l}^{d_{y}})$
is the identity operator \citep[Example 14]{CarDeToiUma2008}. For
simplicity, in this work, we will assume a kernel $K$ that takes
this form. 

\subsection{HYPOTHESIS TESTING WITH KCSD}

\label{sec:kssd_test}To construct a statistical test for conditional
goodness of fit, we start by rewriting $D_{p}(r)$ in (\ref{eq:kssd_pop})
in a form that can be estimated easily as shown in Proposition \ref{prop:kssd_pop_ustat}. 
\begin{prop}
\label{prop:kssd_pop_ustat}Assume that $K(\mathbf{x},\mathbf{x}'):=k(\mathbf{x},\mathbf{x}')I$
for a positive definite kernel $k\colon\mathcal{X}\times\mathcal{X}\to\mathbb{R}$.
Define $\mathbf{s}_{p}(\mathbf{y}|\mathbf{x}):=\nabla_{\mathbf{y}}\log p(\mathbf{y}|\mathbf{x})$.
Then, 
\begin{equation}
D_{p}(r)=\mathbb{E}_{\mathbf{xy}}\mathbb{E}_{\mathbf{x'y'}}k(\mathbf{x},\mathbf{x}')h_{p}((\mathbf{x},\mathbf{y}),(\mathbf{x}',\mathbf{y}')),\label{eq:kssd_pop_ustat}
\end{equation}
where $h_{p}((\mathbf{x},\mathbf{y}),(\mathbf{x}',\mathbf{y}'))$\vspace{-3mm}
\begin{align}
 & :=l(\mathbf{y},\mathbf{y}')\mathbf{s}_{p}^{\top}(\mathbf{y}|\mathbf{x})\mathbf{s}_{p}(\mathbf{y}'|\mathbf{x}')+\sum_{i=1}^{d_{y}}\frac{\partial^{2}}{\partial y_{i}\partial y_{i}'}l(\mathbf{y},\mathbf{y}')\nonumber \\
 & \phantom{:=}+\mathbf{s}_{p}^{\top}(\mathbf{y}|\mathbf{x})\nabla_{\mathbf{y}'}l(\mathbf{y},\mathbf{y}')+\mathbf{s}_{p}^{\top}(\mathbf{y}'|\mathbf{x}')\nabla_{\mathbf{y}}l(\mathbf{y},\mathbf{y}'),\label{eq:unsmoothed_ustat_kernel}
\end{align}
\end{prop}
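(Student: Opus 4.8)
The plan is to expand the squared RKHS norm in \eqref{eq:kssd_pop} as an inner product, push the two expectations through it, and then collapse the vector-valued inner product to the scalar form in \eqref{eq:kssd_pop_ustat} using the special structure $K(\mathbf{x},\mathbf{x}')=k(\mathbf{x},\mathbf{x}')I$. First I would write $D_p(r)=\langle G_{p,r},G_{p,r}\rangle_{\mathcal{F}_K}$ with $G_{p,r}=\mathbb{E}_{\mathbf{xy}}K_{\mathbf{x}}\xi_{p_{|\mathbf{x}}}(\mathbf{y},\diamond)$, which is a well-defined Bochner integral in $\mathcal{F}_K$ by assumption \ref{enu:assume_bochner} of Theorem \ref{thm:pop_dp}. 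Since the inner product is continuous and commutes with Bochner integration, both expectations can be pulled outside, giving
\[
D_p(r)=\mathbb{E}_{\mathbf{xy}}\mathbb{E}_{\mathbf{x'y'}}\big\langle K_{\mathbf{x}}\xi_{p_{|\mathbf{x}}}(\mathbf{y},\diamond),\,K_{\mathbf{x}'}\xi_{p_{|\mathbf{x}'}}(\mathbf{y}',\diamond)\big\rangle_{\mathcal{F}_K}.
\]

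Next I would simplify the integrand. Applying the reproducing property of the vector-valued RKHS, $\langle K_{\mathbf{x}}\mathbf{a},f\rangle_{\mathcal{F}_K}=\langle \mathbf{a},f(\mathbf{x})\rangle_{\mathcal{F}_l^{d_y}}$, with $f=K_{\mathbf{x}'}\xi_{p_{|\mathbf{x}'}}(\mathbf{y}',\diamond)$ and the identity $(K_{\mathbf{x}'}\mathbf{b})(\mathbf{x})=K(\mathbf{x}',\mathbf{x})\mathbf{b}$, the $\mathcal{F}_K$-inner product becomes $\langle \xi_{p_{|\mathbf{x}}}(\mathbf{y},\diamond),\,K(\mathbf{x}',\mathbf{x})\xi_{p_{|\mathbf{x}'}}(\mathbf{y}',\diamond)\rangle_{\mathcal{F}_l^{d_y}}$. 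Substituting $K(\mathbf{x},\mathbf{x}')=k(\mathbf{x},\mathbf{x}')I$ (and using the symmetry of $k$) factors out the scalar $k(\mathbf{x},\mathbf{x}')$, leaving $k(\mathbf{x},\mathbf{x}')\langle \xi_{p_{|\mathbf{x}}}(\mathbf{y},\diamond),\xi_{p_{|\mathbf{x}'}}(\mathbf{y}',\diamond)\rangle_{\mathcal{F}_l^{d_y}}$, which already matches the outer structure of \eqref{eq:kssd_pop_ustat}.

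It then remains to identify the $\mathcal{F}_l^{d_y}$-inner product with $h_p$ of \eqref{eq:unsmoothed_ustat_kernel}. Writing it coordinate-wise as $\sum_{i=1}^{d_y}\langle l(\mathbf{y},\cdot)s_{p,i}(\mathbf{y}|\mathbf{x})+\partial_{y_i}l(\mathbf{y},\cdot),\,l(\mathbf{y}',\cdot)s_{p,i}(\mathbf{y}'|\mathbf{x}')+\partial_{y_i'}l(\mathbf{y}',\cdot)\rangle_{\mathcal{F}_l}$, I would expand bilinearly into four groups. The scalars $s_{p,i}$ pull out, and each surviving inner product is evaluated with the reproducing property of $l$ and its derivative version $\langle f,\partial_{y_i}l(\mathbf{y},\cdot)\rangle_{\mathcal{F}_l}=\partial_{y_i}f(\mathbf{y})$; this produces $l(\mathbf{y},\mathbf{y}')$, $\partial_{y_i'}l(\mathbf{y},\mathbf{y}')$, $\partial_{y_i}l(\mathbf{y},\mathbf{y}')$, and $\partial^2 l(\mathbf{y},\mathbf{y}')/\partial y_i\partial y_i'$. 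Summing over $i$ reproduces exactly the four terms of $h_p$, completing the identification. This calculation mirrors the unconditional KSD derivation recalled in Section \ref{sec:ksd}.

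The main obstacle is the justification of the derivative reproducing property—namely that $\partial_{y_i}l(\mathbf{y},\cdot)$ lies in $\mathcal{F}_l$ and represents the partial-derivative evaluation functional—which requires $l$ to be sufficiently smooth (continuous mixed partial derivatives), a standard fact that holds for the Gaussian, Laplace, and inverse multiquadric kernels. The only other point needing care is the interchange of the two expectations with the inner product, which is legitimate precisely because assumption \ref{enu:assume_bochner} guarantees Bochner integrability of $K_{\mathbf{x}}\xi_{p_{|\mathbf{x}}}(\mathbf{y},\diamond)$; the remaining manipulations are routine bookkeeping.
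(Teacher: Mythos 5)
Your proposal is correct and follows essentially the same route as the paper's proof: expand the squared norm as an inner product, interchange the expectations with the inner product via the Bochner integrability assumption \ref{enu:assume_bochner}, reduce $\left\langle K_{\mathbf{x}}\xi_{p_{|\mathbf{x}}}(\mathbf{y},\diamond),K_{\mathbf{x}'}\xi_{p_{|\mathbf{x}'}}(\mathbf{y}',\diamond)\right\rangle_{\mathcal{F}_K}$ to $k(\mathbf{x},\mathbf{x}')\left\langle \xi_{p_{|\mathbf{x}}}(\mathbf{y},\diamond),\xi_{p_{|\mathbf{x}'}}(\mathbf{y}',\diamond)\right\rangle_{\mathcal{F}_l^{d_y}}$ using the adjoint/reproducing property and $K(\mathbf{x},\mathbf{x}')=k(\mathbf{x},\mathbf{x}')I$, and identify the remaining inner product with $h_p$. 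Your coordinate-wise expansion of $h_p$ via the derivative reproducing property of $l$ is a detail the paper leaves implicit (deferring to the standard KSD computation), so your write-up is, if anything, slightly more complete.
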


Define $H_{p}((\mathbf{x},\mathbf{y}),(\mathbf{x}',\mathbf{y}')):=k(\mathbf{x},\mathbf{x}')h_{p}((\mathbf{x},\mathbf{y}),(\mathbf{x}',\mathbf{y}'))$.
Given an i.i.d. sample $\{(\mathbf{x}_{i},\mathbf{y}_{i})\}_{i=1}^{n}\sim r_{xy}$.
an unbiased, consistent estimator for (\ref{eq:kssd_pop_ustat}) is
given by 
\begin{align}
\widehat{D_{p}}:= & \frac{1}{n(n-1)}\sum_{i\neq j}H_{p}((\mathbf{x}_{i},\mathbf{y}_{i}),(\mathbf{x}_{j},\mathbf{y}_{j})),\label{eq:kssd_emp_ustat}
\end{align}
which is a second-order U-statistic with $H_{p}$ as the U-statistic
kernel \citep[Section 5]{Ser2009}, and can be computed easily. It
is clear from (\ref{eq:unsmoothed_ustat_kernel}) that the KCSD statistic
(both population and its estimator) depends on the model $p$ only
through $\nabla_{\mathbf{y}}\log p(\mathbf{y}|\mathbf{x})=\nabla_{\mathbf{y}}\log p(\mathbf{y},\mathbf{x})$
which is independent of the normalizer $p(\mathbf{x})$. The fact
that the KCSD does not require the normalizer is a big advantage since
modern conditional models tend to be complex and their normalizers
may not be tractable. A consequence of being a U-statistic is that
its asymptotic behaviors can be derived straightforwardly, as given
in Proposition \ref{prop:asymptotic_kssd}. 
\begin{prop}[Asymptotic distributions of $\dph$]
\label{prop:asymptotic_kssd} Assume all conditions in Theorem \ref{thm:pop_dp}
and assume that $\mathbb{E}_{\mathbf{xy}}\mathbb{E}_{\mathbf{x'y'}}H_{p}^{2}((\mathbf{x},\mathbf{y}),(\mathbf{x}',\mathbf{y}'))<\infty$.
Then,
\begin{enumerate}
\item Under $H_{0}$, $n\dph\stackrel{d}{\to}\sum_{j=1}^{\infty}\lambda_{j}(\chi_{j1}^{2}-1)$,
where $\{\chi_{1j}^{2}\}_{j}$ are independent $\chi_{1}^{2}$ random
variables, $\lambda_{j}$ are eigenvalues of the operator $A$ defined
as $(A\varphi)(\mathbf{z})=\int H_{p}(\mathbf{z},\mathbf{z}')\varphi(\mathbf{z}')r_{xy}(\mathbf{z}')\thinspace\mathrm{d}\mathbf{z}'$
for non-zero $\varphi$, $\mathbf{z}:=(\mathbf{x},\mathbf{y})$ and
$\mathbf{z}':=(\mathbf{x}',\mathbf{y}')$;
\item Under $H_{1}$, $\sqrt{n}\left(\dph-D_{p}(r)\right)\stackrel{d}{\to}\mathcal{N}(0,\sigma_{H_{1}}^{2})$
where $\sigma_{H_{1}}^{2}:=4\mathbb{V}[\mathbb{E}_{\mathbf{xy}}[H_{p}((\mathbf{x},\mathbf{y}),(\mathbf{x}',\mathbf{y}'))]]$.
\end{enumerate}
\end{prop}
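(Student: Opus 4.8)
The plan is to classify $\dph$ as a second-order U-statistic and read off its two limit laws from the classical theory \citep[Section 5]{Ser2009}, the only dichotomy being whether the U-statistic kernel is degenerate. First I would record that $H_p$ is symmetric: $k$ is symmetric, and $h_p$ is invariant under the swap $(\mathbf{x},\mathbf{y})\leftrightarrow(\mathbf{x}',\mathbf{y}')$ because its first two summands are individually symmetric while the last two are merely exchanged (using symmetry of $l$ and commutativity of the scalar products and mixed partials). The assumed bound $\mathbb{E}_{\mathbf{xy}}\mathbb{E}_{\mathbf{x'y'}}H_p^{2}<\infty$ is exactly the finite second-moment hypothesis these limit theorems require.

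The pivotal computation is the feature representation
\[
H_p((\mathbf{x},\mathbf{y}),(\mathbf{x}',\mathbf{y}'))=\big\langle K_{\mathbf{x}}\xi_{p_{|\mathbf{x}}}(\mathbf{y},\diamond),K_{\mathbf{x}'}\xi_{p_{|\mathbf{x}'}}(\mathbf{y}',\diamond)\big\rangle_{\mathcal{F}_K},
\]
which follows from $K=kI$ together with the reproducing property and which already underlies Proposition \ref{prop:kssd_pop_ustat}. Taking $\mathbb{E}_{\mathbf{x'y'}}$ of both sides, pulling the expectation through the inner product (justified by the Bochner integrability of Assumption \ref{enu:assume_bochner}), and recalling $G_{p,r}=\mathbb{E}_{\mathbf{xy}}K_{\mathbf{x}}\xi_{p_{|\mathbf{x}}}(\mathbf{y},\diamond)$ from the proof of Theorem \ref{thm:pop_dp}, the first-order projection becomes $h_{1}(\mathbf{x},\mathbf{y}):=\mathbb{E}_{\mathbf{x'y'}}H_p=\langle K_{\mathbf{x}}\xi_{p_{|\mathbf{x}}}(\mathbf{y},\diamond),G_{p,r}\rangle_{\mathcal{F}_K}$. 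Consequently $\mathbb{V}[h_{1}]=\sigma_{H_1}^{2}/4$ and $\mathbb{E}_{\mathbf{xy}}h_{1}=\|G_{p,r}\|_{\mathcal{F}_K}^{2}=D_p(r)$, so this single identity controls both regimes.

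For part 2, $H_1$ gives $D_p(r)>0$, i.e.\ $G_{p,r}\neq 0$; under the standard non-degeneracy condition $\mathbb{V}[h_{1}]=\sigma_{H_1}^{2}/4>0$, Hoeffding's central limit theorem for non-degenerate U-statistics yields $\sqrt{n}(\dph-D_p(r))\stackrel{d}{\to}\mathcal{N}(0,\sigma_{H_1}^{2})$ with leading variance $\sigma_{H_1}^{2}=4\mathbb{V}[h_{1}]$, which is exactly the claimed form.

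For part 1, $H_0$ gives $G_{p,r}=0$ by Theorem \ref{thm:pop_dp}, hence $h_{1}\equiv 0$ and $D_p(r)=0$: the kernel is fully degenerate with vanishing mean, so no centering beyond the subtracted mean is needed. I would then invoke the spectral limit theorem for degenerate U-statistics. The integral operator $A$ on $L^{2}(\mathcal{X}\times\mathcal{Y},r_{xy})$ with kernel $H_p$ is self-adjoint and, since $\mathbb{E}H_p^{2}<\infty$, Hilbert--Schmidt, hence has a square-summable real spectrum $\{\lambda_{j}\}$; because $H_p$ is a positive-definite kernel (being an RKHS inner product), $A$ is a positive operator and $\lambda_{j}\ge 0$. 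Expanding $H_p$ in its Mercer series and applying the standard degenerate-U-statistic argument \citep[Section 5]{Ser2009} gives $n\dph\stackrel{d}{\to}\sum_{j}\lambda_{j}(\chi_{1j}^{2}-1)$ with independent $\chi_{1}^{2}$ summands. The main obstacle is not the limit theory, which is classical once degeneracy is settled, but the two supporting analytic steps: justifying the interchange of $\mathbb{E}_{\mathbf{x'y'}}$ with the RKHS inner product via Bochner integrability, and confirming that $\sum_{j}\lambda_{j}^{2}=\mathbb{E}H_p^{2}<\infty$ so that the chi-square series converges in distribution.
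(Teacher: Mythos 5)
Your overall strategy is the same as the paper's: represent $H_{p}$ as an RKHS inner product of the features $K_{\mathbf{x}}\xi_{p_{|\mathbf{x}}}(\mathbf{y},\diamond)$, pull the expectation through the inner product via Bochner integrability (assumption \ref{enu:assume_bochner} of Theorem \ref{thm:pop_dp}) to obtain the projection $h_{1}(\mathbf{x},\mathbf{y})=\langle K_{\mathbf{x}}\xi_{p_{|\mathbf{x}}}(\mathbf{y},\diamond),G_{p,r}\rangle_{\mathcal{F}_{K}}$, conclude degeneracy under $H_{0}$ from $G_{p,r}=\mathbf{0}$ (Theorem \ref{thm:pop_dp}), and read off both limits from the classical U-statistic theory in \citet[Section 5.5]{Ser2009}. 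Your part 1 is sound, and your added observations --- that $A$ is Hilbert--Schmidt by the second-moment assumption and has nonnegative eigenvalues because $H_{p}$ is itself a positive-definite kernel --- are correct refinements beyond what the paper records.

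The gap is in part 2. The proposition asserts the normal limit under $H_{1}$ with no further hypothesis, so a proof must \emph{establish} the non-degeneracy $\zeta_{1}=\mathbb{V}[h_{1}]=\sigma_{H_{1}}^{2}/4>0$; you instead invoke it as ``the standard non-degeneracy condition,'' which assumes exactly what needs to be shown. Note that $H_{1}$ (equivalently $G_{p,r}\neq\mathbf{0}$, i.e., $D_{p}(r)>0$) does not by itself yield $\mathbb{V}[h_{1}]>0$: since $\mathbb{E}[h_{1}]=\|G_{p,r}\|_{\mathcal{F}_{K}}^{2}>0$, degeneracy would mean $h_{1}$ equals this \emph{positive constant} $r_{xy}$-almost surely, and nothing in your argument rules that out. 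This is precisely what the paper's $H_{1}$ case is devoted to: it argues that $t(\mathbf{x}',\mathbf{y}'):=\mathbb{E}_{\mathbf{xy}}[H_{p}((\mathbf{x},\mathbf{y}),(\mathbf{x}',\mathbf{y}'))]$ cannot be an ($r_{xy}$-a.s.) constant function, appealing to the $C_{0}$-universality (hence non-constancy) of $k$ and $l$ --- an argument that is admittedly terse, but it addresses the right obstruction. To close your proof you would either need such a non-constancy argument, or you would need to dispose of the degenerate case separately by noting that if $\sigma_{H_{1}}^{2}=0$ then $\dph-D_{p}(r)=\mathcal{O}_{p}(1/n)$, so $\sqrt{n}(\dph-D_{p}(r))$ converges to a point mass at $0$ and the claim holds only in the trivial reading $\mathcal{N}(0,0)=\delta_{0}$; since the positive-variance normal limit is what the power analysis in Corollary \ref{prop:fscd_power} relies on, the non-degeneracy argument is the substantive content of part 2, and it is missing from your proposal.
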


A proof of Proposition \ref{prop:asymptotic_kssd} can be found in
Section \ref{subsec:proof_asymp_kssd} (appendix). Proposition \ref{prop:asymptotic_kssd}
suggests that under $H_{0}$, $n\dph$ converges to a limit distribution
given by an infinite weighted sum of chi-squared random variables.
Under $H_{1}$, for any fixed $p$ and $r$, we have $n\dph=\mathcal{O}_{p}(\sqrt{n})$,
which diverges to $+\infty$, and allows the test to reject $H_{0}$
when $n$ is sufficiently large. The behaviors are common in many
recently developed nonparametric tests \citep{YanLiuRaoNev2018,ChwStrGre2016,LiuLeeJor2016,GreFukTeoSonSch2008,GreBorRasSchSmo2012}.
A consistent test that has an asymptotic false rejection rate no larger
than a specified significance level $\alpha\in(0,1)$ can be constructed
by setting the rejection threshold (critical value) to be $\gamma_{1-\alpha}=(1-\alpha)$-quantile
of the asymptotic null distribution. That is, the test rejects the
null hypothesis $H_{0}$ if $n\dph>\gamma_{1-\alpha}$. In practice
however, the limiting distribution under $H_{0}$ is not available
in closed form, and we have to resort to approximating the test threshold
either by bootstrapping \citep{ArcGin1992,HusJan1993} or estimating
the eigenvalues $\{\lambda_{j}\}_{j}$ which can cost $\mathcal{O}(n^{3})$
runtime \citep{GreFukHarSri2009}.

\textbf{Test threshold} In our work, we use the bootstrap procedure
of \citet{ArcGin1992,HusJan1993} as also used in the KSD test of
\citet{LiuLeeJor2016,YanLiuRaoNev2018} (with a U-statistic estimator)
and \citet{ChwRamSejGre2015} (with a V-statistic estimator). To generate
a bootstrap sample, we draw $w_{1},\ldots,w_{n}\sim\mathrm{Multinomial}\left(n;\frac{1}{n},\ldots,\frac{1}{n}\right)$,
define $\tilde{w}_{i}:=\frac{1}{n}(w_{i}-1)$, and compute $\dph^{*}=\sum_{i=1}^{n}\sum_{j\neq i}\tilde{w}_{i}\tilde{w}_{j}H_{p}((\mathbf{x}_{i},\mathbf{y}_{i}),(\mathbf{x}_{i},\mathbf{y}_{j})).$
By bootstrapping $m$ times to generate $\dph_{1}^{*},\ldots\dph_{m}^{*}$,
the test threshold can be estimated by computing the empirical $(1-\alpha)$-quantile
of these bootstrapped samples. The overall computational cost of this
bootstrap procedure is $\mathcal{O}(mn^{2})$, which is the same cost
as testing a marginal probability model in the KSD test. 

\section{THE FINITE SET CONDITIONAL DISCREPANCY (FSCD)}

\label{sec:proposed_fscd}In this section, we extend the KCSD statistic
presented in Section \ref{sec:proposed_kssd} to enable it to also
pinpoint the location(s) in the domain of $\mathcal{X}$ that best
distinguish $p(\cdot|\mathbf{x})$ and $r(\cdot|\mathbf{x})$. The
result is a goodness-of-fit test for conditional density functions
which gives an interpretable output (in terms of locations in $\mathcal{X})$
to justify a rejection of the null hypothesis.

We start by noting that Theorem \ref{thm:pop_dp} and (\ref{eq:kssd_pop})
implies that $G_{p,r}\colon\mathcal{X}\to\mathcal{F}_{l}^{d_{y}}$
defined as $G_{p,r}(\mathbf{v}):=\left[\mathbb{E}_{(\mathbf{x},\mathbf{y})\sim r_{xy}}K_{\mathbf{x}}\xi_{p_{|\mathbf{x}}}(\mathbf{y},\diamond)\right](\mathbf{v})\in\mathcal{F}_{l}^{d_{y}}$
is a zero function if and only if $p\rxeq r$, under the conditions
described in the theorem statement. Note that the KCSD $D_{p}(r)=\|G_{p,r}\|_{\mathcal{F}_{K}}^{2}$.
For a fixed $\mathbf{v}\in\mathcal{X}$, the function $\mathbf{v}\mapsto\frac{1}{d_{y}}\|G(\mathbf{v})\|_{\mathcal{F}_{l}^{d_{y}}}^{2}\ge0$
can be seen as quantifying the extent to which $p$ and $r$ differ,
as measured at $\mathbf{v}\in\mathcal{X}$; that is, the higher $\frac{1}{d_{y}}\|G(\mathbf{v})\|_{\mathcal{F}_{l}^{d_{y}}}^{2}$,
the larger the discrepancy between $p(\cdot|\mathbf{v})$ and $r(\cdot|\mathbf{v})$.
Inspired by \citet{JitXuSzaFukGre2017}, one can thus construct a
variant of the KCSD statistic as follows. Given a set of $J$ \emph{test
locations} $V:=\{\mathbf{v}_{i}\}_{i=1}^{J}\subset\mathcal{X}$, we
evaluate $G_{p,r}(\mathbf{v})$ at these locations instead of taking
the norm $\|\cdot\|_{\mathcal{F}_{K}}$ \citep{JitSzaChwGre2016,JitSzaGre2017,JitXuSzaFukGre2017,SceVar2019}.
More formally, we propose a statistic defined as 
\begin{equation}
\tpv(r):=\frac{1}{Jd_{y}}\sum_{i=1}^{J}\|G_{p,r}(\mathbf{v}_{i})\|_{\mathcal{F}_{l}^{d_{y}}}^{2},\label{eq:fscd_pop}
\end{equation}
which we refer to as the \emph{Finite Set Conditional Discrepancy
(FSCD)}. Later in Section \ref{sec:optimized_fscd}, we will describe
how $V$ can be automatically optimized by maximizing the test power
of the FSCD test. The optimized test locations in $V$ are interpretable
in the sense that they specify points $\{\mathbf{v}_{i}\}_{i=1}^{J}$
in $\mathcal{X}$ that best reveal the differences between the two
conditional density functions. For the purpose of describing the statistic
in this section, we assume that $V$ is given. We first show in Theorem
\ref{thm:fscd_pop} that the FSCD almost surely distinguishes two
conditional probability density functions.

\begin{restatable}{thm}{tpvstatdiv}

\label{thm:fscd_pop}Assume all conditions in Theorem \ref{thm:pop_dp}.
Further assume that $\mathcal{X}\subseteq\mathbb{R}^{d_{x}}$ is a
connected open set, and $K(\mathbf{x},\mathbf{x}')=k(\mathbf{x},\mathbf{x}')I$
where $k\colon\mathcal{X}\times\mathcal{X}\to\mathbb{R}$ is a real
analytic kernel i.e., for any $\mathbf{x}\in\mathcal{X}$, $\mathbf{v}\mapsto k(\mathbf{x},\mathbf{v})$
is a real analytic function. Then, for any $J\in\mathbb{N}$, the
following statements hold:
\begin{enumerate}
\item Under $H_{0}$, $\tpv(r)=0$ for any $V=\{\mathbf{v}_{j}\}_{j=1}^{J}\subset\mathcal{X}$.
\item Under $H_{1}$, if $\mathbf{v}_{1},\ldots,\mathbf{v}_{J}$ in $V$
are drawn from a probability density $\eta$ whose support is $\mathcal{X}$,
then $\eta$-almost surely $T_{p}^{V}(r)>0$.
\end{enumerate}
\end{restatable}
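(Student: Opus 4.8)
The plan is to handle the two hypotheses separately, reducing the first to Theorem~\ref{thm:pop_dp} and the second to the classical fact that a non-trivial real analytic function has a Lebesgue-null zero set. Throughout I would work with the closed form that follows from $K(\mathbf{x},\mathbf{x}')=k(\mathbf{x},\mathbf{x}')I$ and the reproducing property: for every $\mathbf{v}\in\mathcal{X}$,
\begin{equation*}
G_{p,r}(\mathbf{v})=\mathbb{E}_{\mathbf{x}}\big[k(\mathbf{x},\mathbf{v})\,\mathbf{g}_{p,r}(\diamond|\mathbf{x})\big]\in\mathcal{F}_l^{d_y},
\end{equation*}
the Bochner integral being well defined by Assumption~\ref{enu:assume_bochner}.

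For the first claim, under $H_0$ we have $p\rxeq r$, so Theorem~\ref{thm:pop_dp} gives $D_p(r)=\|G_{p,r}\|_{\mathcal{F}_K}^2=0$; thus $G_{p,r}$ is the zero element of $\mathcal{F}_K$, and by the reproducing property it vanishes pointwise, i.e.\ $G_{p,r}(\mathbf{v})=0$ for all $\mathbf{v}$. Every summand of $\tpv(r)$ is then zero, so $\tpv(r)=0$ for any $V$. This part is immediate.

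For the second claim I would reduce everything to a statement about the real-valued map $\phi(\mathbf{v}):=\|G_{p,r}(\mathbf{v})\|_{\mathcal{F}_l^{d_y}}^2\ge0$. Under $H_1$, Theorem~\ref{thm:pop_dp} gives $D_p(r)=\|G_{p,r}\|_{\mathcal{F}_K}^2>0$, so $G_{p,r}$ is not the zero function, and since a function in an RKHS whose pointwise values all vanish has zero norm, we conclude $\phi\not\equiv 0$. The crux is to show that $\phi$ is real analytic on $\mathcal{X}$. Expanding the squared norm and pulling the inner product inside the Bochner integrals,
\begin{equation*}
\phi(\mathbf{v})=\iint k(\mathbf{x},\mathbf{v})\,k(\mathbf{x}',\mathbf{v})\,\langle\mathbf{g}_{p,r}(\diamond|\mathbf{x}),\mathbf{g}_{p,r}(\diamond|\mathbf{x}')\rangle_{\mathcal{F}_l^{d_y}}\,r_x(\mathbf{x})\,r_x(\mathbf{x}')\,\mathrm{d}\mathbf{x}\,\mathrm{d}\mathbf{x}',
\end{equation*}
so for each fixed $(\mathbf{x},\mathbf{x}')$ the integrand is real analytic in $\mathbf{v}$, being a product of the real analytic maps $k(\mathbf{x},\cdot)$ and $k(\mathbf{x}',\cdot)$ with a constant.

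The hard part will be passing analyticity through the double integral. I would do this by complexification: since $k$ is real analytic, $\mathbf{v}\mapsto k(\mathbf{x},\mathbf{v})$ extends holomorphically to a complex neighborhood of each point of $\mathcal{X}$, and for kernels such as the Gaussian this extension is bounded uniformly in the integration variable $\mathbf{x}$ on that neighborhood. Combined with the integrability $\iint|\langle\mathbf{g}_{p,r}(\diamond|\mathbf{x}),\mathbf{g}_{p,r}(\diamond|\mathbf{x}')\rangle|\,r_x\,r_x<\infty$ --- which holds by Cauchy--Schwarz and Jensen together with condition~3 of Theorem~\ref{thm:pop_dp} --- this supplies an integrable dominating bound, so by holomorphy of parameter integrals (Morera's theorem and Fubini) $\phi$ extends holomorphically to a complex neighborhood of $\mathcal{X}$ and is therefore real analytic there. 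Finally, because $\mathcal{X}$ is connected and open and $\phi$ is real analytic and not identically zero, its zero set $Z:=\{\mathbf{v}:\phi(\mathbf{v})=0\}$ has Lebesgue measure zero. Since $\eta$ admits a density it is absolutely continuous with respect to Lebesgue measure, so $\eta(Z)=0$ and each $\mathbf{v}_i$ lies outside $Z$ $\eta$-almost surely; hence $\tpv(r)=\frac{1}{Jd_y}\sum_{i=1}^J\phi(\mathbf{v}_i)>0$ $\eta$-almost surely.
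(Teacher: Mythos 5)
Your treatment of $H_0$, the non-vanishing of $\phi$ under $H_1$, and the endgame (the zero set of a non-zero real analytic function is Lebesgue-null, and $\eta$ is absolutely continuous since it has a density) all match the paper's proof and are sound. The gap is in the one step you yourself flag as the hard part: passing real analyticity through the double integral. Your complexification/Morera argument needs (i) a holomorphic extension of $\mathbf{v}\mapsto k(\mathbf{x},\mathbf{v})$ to a complex neighborhood of a given point whose size does \emph{not} depend on $\mathbf{x}$, and (ii) a bound on that extension that is uniform (or at least integrable against $r_x\otimes r_x$) over the integration variables, so that the dominated-convergence/Fubini step applies. Neither follows from the hypothesis actually made in Theorem \ref{thm:fscd_pop}, namely that $\mathbf{v}\mapsto k(\mathbf{x},\mathbf{v})$ is real analytic for each fixed $\mathbf{x}$: real analyticity is a local property, pointwise in $\mathbf{x}$, and both the radius of the complex neighborhood and the size of the extension can degenerate as $\mathbf{x}$ ranges over the noncompact set $\mathcal{X}$. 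You implicitly concede this by restricting to ``kernels such as the Gaussian'' (for which your uniform-strip bound is indeed valid), but the theorem is claimed for every real analytic kernel, so as written your argument proves only a special case unless the uniformity is added as an extra hypothesis and verified.

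The paper closes exactly this gap by a different mechanism that uses only the stated assumptions. It considers $\bar{G}(\mathbf{v},\mathbf{v}')=\mathbb{E}_{\mathbf{x},\mathbf{x}'}\,\tilde{k}[(\mathbf{x},\mathbf{x}'),(\mathbf{v},\mathbf{v}')]\,\tilde{h}_{p}(\mathbf{x},\mathbf{x}')$ with the product kernel $\tilde{k}[(\mathbf{x},\mathbf{x}'),(\mathbf{v},\mathbf{v}')]:=k(\mathbf{x},\mathbf{v})k(\mathbf{x}',\mathbf{v}')$ on $\mathcal{X}\times\mathcal{X}$ and $\tilde{h}_{p}(\mathbf{x},\mathbf{x}')=\langle\mathbf{g}_{p,r}(\diamond|\mathbf{x}),\mathbf{g}_{p,r}(\diamond|\mathbf{x}')\rangle_{\mathcal{F}_{l}^{d_{y}}}$, checks $\tilde{h}_{p}\in L^{2}(r_{x}\otimes r_{x})$ via Assumption 3 of Theorem \ref{thm:pop_dp} (essentially the same Cauchy--Schwarz/Jensen computation you perform), and then invokes two RKHS facts: the integral transform of an $L^{2}$ function with respect to a kernel lies in that kernel's RKHS (Steinwart--Christmann, Theorem 4.26), and every function in the RKHS of a bounded real analytic kernel is real analytic (Lemma \ref{lem:rkhs_analytic}). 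Hence $\bar{G}$ is real analytic, and so is its diagonal restriction $\phi(\mathbf{v})=\bar{G}(\mathbf{v},\mathbf{v})$. Replacing your complexification step with this RKHS argument (or, alternatively, strengthening the hypotheses to include the uniform local holomorphic extension your argument requires) makes the proof complete.
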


Theorem \ref{thm:fscd_pop} states that given $p$ and $r$, $\tpv(r)=0$
if and only if $p\rxeq r$ when $V$ is drawn from \emph{any }probability
density supported on $\mathcal{X}$. The core idea is that $\|G_{p,r}(\mathbf{v})\|_{\mathcal{F}_{l}^{d_{y}}}^{2}$
is a real analytic function of $\mathbf{v}$ if $k$ is a real analytic
kernel. It is known that the set of roots of a non-zero real analytic
function has zero Lebesgue measure \citep{Mit2015}. So, pointwise
evaluations at the $J$ random test locations suffice to check whether
$G_{p,r}$ is a zero function, and the result follows. The FSCD statistic
in (\ref{eq:fscd_pop}) can thus be seen as quantifying the average
discrepancy between $p(\cdot|\mathbf{x})$ and $r(\cdot|\mathbf{x})$
as measured at the locations $\mathbf{x}\in V$.

\subsection{HYPOTHESIS TESTING WITH FSCD}

To perform hypothesis testing the FSCD, we first show in Proposition
\ref{prop:fscd_pop_ustat} that $\tpv(r)$ in (\ref{eq:fscd_pop})
can be written as a U-statistic. 
\begin{prop}
\label{prop:fscd_pop_ustat} Given a set of test locations $V=\{\mathbf{v}_{j}\}_{j=1}^{J}\subset\mathcal{X}$,
in (\ref{eq:fscd_pop}), $\|G_{p,r}(\mathbf{v})\|_{\mathcal{F}_{l}^{d_{y}}}^{2}=\mathbb{E}_{\mathbf{x}\mathbf{y}}\mathbb{E}_{\mathbf{x'y'}}k(\mathbf{x},\mathbf{v})k(\mathbf{x}',\mathbf{v})h_{p}((\mathbf{x},\mathbf{y}),(\mathbf{x}',\mathbf{y}'))$
($h_{p}$ is defined in (\ref{eq:unsmoothed_ustat_kernel})) and 
\begin{equation}
\tpv(r)=\mathbb{E}_{\mathbf{x}\mathbf{y}}\mathbb{E}_{\mathbf{x'y'}}\hbpv((\mathbf{x},\mathbf{y}),(\mathbf{x}',\mathbf{y}')),\label{eq:fscd_pop_ustat}
\end{equation}
 where $\hbpv((\mathbf{x},\mathbf{y}),(\mathbf{x}',\mathbf{y}')):=\frac{1}{d_{y}}\overline{k}_{V}(\mathbf{x},\mathbf{x}')h_{p}((\mathbf{x},\mathbf{y}),(\mathbf{x}',\mathbf{y}'))$
and $\overline{k}_{V}(\mathbf{x},\mathbf{x}'):=\frac{1}{J}\sum_{i=1}^{J}k(\mathbf{x},\mathbf{v}_{i})k(\mathbf{x}',\mathbf{v}_{i})$.
\end{prop}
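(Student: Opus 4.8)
The plan is to reduce the claim to the pointwise analogue of the computation already carried out for the KCSD in Proposition \ref{prop:kssd_pop_ustat}, the only difference being that $G_{p,r}$ is here evaluated at a fixed $\mathbf{v}$ rather than having its full $\mathcal{F}_K$-norm taken. First I would invoke the reproducing property $f(\mathbf{v})=K_{\mathbf{v}}^{*}f$ of the vector-valued RKHS to write $G_{p,r}(\mathbf{v})=K_{\mathbf{v}}^{*}G_{p,r}$, where $G_{p,r}=\mathbb{E}_{\mathbf{xy}}K_{\mathbf{x}}\xi_{p_{|\mathbf{x}}}(\mathbf{y},\diamond)$ is the Bochner integral appearing in (\ref{eq:kssd_pop}). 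Since $K_{\mathbf{v}}^{*}$ is a bounded linear operator, it commutes with the Bochner integral, and using $K(\mathbf{x},\mathbf{v})=k(\mathbf{x},\mathbf{v})I$ together with $K_{\mathbf{v}}^{*}K_{\mathbf{x}}=K(\mathbf{x},\mathbf{v})$ this yields
\[
G_{p,r}(\mathbf{v})=\mathbb{E}_{\mathbf{xy}}\,k(\mathbf{x},\mathbf{v})\,\xi_{p_{|\mathbf{x}}}(\mathbf{y},\diamond)\in\mathcal{F}_{l}^{d_{y}}.
\]
The boundedness of $K_{\mathbf{v}}^{*}$ and the Bochner integrability granted by Assumption \ref{enu:assume_bochner} of Theorem \ref{thm:pop_dp} are precisely what make this interchange legitimate.

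Next I would expand the squared norm as an inner product of two independent copies of this integral. By bilinearity of $\langle\cdot,\cdot\rangle_{\mathcal{F}_{l}^{d_{y}}}$ and Fubini for Bochner integrals (again justified by the integrability hypothesis), the two expectations and the scalar factors $k(\mathbf{x},\mathbf{v})$ and $k(\mathbf{x}',\mathbf{v})$ pull out, leaving
\[
\|G_{p,r}(\mathbf{v})\|_{\mathcal{F}_{l}^{d_{y}}}^{2}=\mathbb{E}_{\mathbf{xy}}\mathbb{E}_{\mathbf{x'y'}}\,k(\mathbf{x},\mathbf{v})k(\mathbf{x}',\mathbf{v})\big\langle\xi_{p_{|\mathbf{x}}}(\mathbf{y},\diamond),\,\xi_{p_{|\mathbf{x}'}}(\mathbf{y}',\diamond)\big\rangle_{\mathcal{F}_{l}^{d_{y}}}.
\]
It then remains to identify the inner product $\langle\xi_{p_{|\mathbf{x}}}(\mathbf{y},\diamond),\xi_{p_{|\mathbf{x}'}}(\mathbf{y}',\diamond)\rangle_{\mathcal{F}_{l}^{d_{y}}}$ with $h_{p}((\mathbf{x},\mathbf{y}),(\mathbf{x}',\mathbf{y}'))$ of (\ref{eq:unsmoothed_ustat_kernel}). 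Writing out $\xi_{p_{|\mathbf{x}}}(\mathbf{y},\cdot)=l(\mathbf{y},\cdot)\mathbf{s}_{p}(\mathbf{y}|\mathbf{x})+\nabla_{\mathbf{y}}l(\mathbf{y},\cdot)$ componentwise and expanding the four resulting terms via the reproducing property of $l$ and its derivative reproducing property (so that $\langle l(\mathbf{y},\cdot),\partial_{y_{i}'}l(\mathbf{y}',\cdot)\rangle_{\mathcal{F}_{l}}=\partial_{y_{i}'}l(\mathbf{y},\mathbf{y}')$ and $\langle\partial_{y_{i}}l(\mathbf{y},\cdot),\partial_{y_{i}'}l(\mathbf{y}',\cdot)\rangle_{\mathcal{F}_{l}}=\partial^{2}l(\mathbf{y},\mathbf{y}')/\partial y_{i}\partial y_{i}'$) reproduces (\ref{eq:unsmoothed_ustat_kernel}) verbatim. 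This is the identical calculation used to establish Proposition \ref{prop:kssd_pop_ustat}, so I would simply cite it rather than redo the algebra, and the first identity of the proposition follows.

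Finally, substituting this first identity into the definition (\ref{eq:fscd_pop}) of $\tpv(r)$ and pulling the finite sum over the $J$ test locations inside the double expectation gives
\[
\tpv(r)=\mathbb{E}_{\mathbf{xy}}\mathbb{E}_{\mathbf{x'y'}}\,\frac{1}{d_{y}}\Big(\frac{1}{J}\sum_{i=1}^{J}k(\mathbf{x},\mathbf{v}_{i})k(\mathbf{x}',\mathbf{v}_{i})\Big)h_{p}((\mathbf{x},\mathbf{y}),(\mathbf{x}',\mathbf{y}')),
\]
which is exactly $\mathbb{E}_{\mathbf{xy}}\mathbb{E}_{\mathbf{x'y'}}\hbpv((\mathbf{x},\mathbf{y}),(\mathbf{x}',\mathbf{y}'))$ once $\overline{k}_{V}$ is recognized. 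The interchange of the finite sum and the expectation is trivial, so the only genuine analytic content is the commuting of $K_{\mathbf{v}}^{*}$ with the Bochner integral and the applicability of Fubini. I expect this—rather than the inner-product algebra—to be the step requiring care, and it is precisely where the integrability hypotheses inherited from Theorem \ref{thm:pop_dp} are used.
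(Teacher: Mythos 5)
Your proposal is correct and follows essentially the same route as the paper: evaluate the Bochner integral $G_{p,r}$ at $\mathbf{v}$ (the paper uses $(K_{\mathbf{x}}f)(\mathbf{v})=K(\mathbf{x},\mathbf{v})f$ inside the expectation, you equivalently apply the bounded adjoint $K_{\mathbf{v}}^{*}$ and commute it with the integral), expand the squared $\mathcal{F}_{l}^{d_{y}}$-norm into a double expectation with $\langle\xi_{p_{|\mathbf{x}}}(\mathbf{y},\diamond),\xi_{p_{|\mathbf{x}'}}(\mathbf{y}',\diamond)\rangle_{\mathcal{F}_{l}^{d_{y}}}=h_{p}((\mathbf{x},\mathbf{y}),(\mathbf{x}',\mathbf{y}'))$ cited from Proposition \ref{prop:kssd_pop_ustat}, and average over the $J$ test locations to recognize $\overline{k}_{V}$. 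The justification via Bochner integrability from Assumption \ref{enu:assume_bochner} matches the paper's reasoning, so there is nothing to correct.
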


Similarly to (\ref{eq:kssd_emp_ustat}), an unbiased estimator of
$\tpv$ is given by a second-order U-statistic: $\tph:=\frac{1}{n(n-1)}\sum_{i\neq j}\hbpv((\mathbf{x}_{i},\mathbf{y}_{i}),(\mathbf{x}_{j},\mathbf{y}_{j})).$
It is clear from (\ref{eq:fscd_pop_ustat}) and the definition of
$\hbpv$ that the FSCD statistic is in fact a special case of the
KCSD with the kernel $k$ in (\ref{eq:kssd_pop_ustat}) replaced with
$\overline{k}_{V}$. For this reason, the asymptotic distributions
of $\tph$ under both $H_{0}$ and $H_{1}$ are almost identical to
those of the KCSD. We omit the result here and present it in Proposition
\ref{prop:fscd_asymptotics} in the appendix. Since $\tph$ is also
a degenerate U-statistic, the test threshold can be obtained by bootstrapping
with weights drawn from the multinomial distribution as in the case
of the KCSD.

\subsection{OPTIMIZING TEST LOCATIONS}

\label{sec:optimized_fscd}While Theorem \ref{thm:fscd_pop} guarantees
that the FSCD can distinguish two conditional density functions with
any $V$ drawn from any probability density supported on $\mathcal{X}$,
in practice, optimizing $V$ will further increase the power of the
test, and allow us to interpret $V$ as the locations in $\mathcal{X}$
for which the difference between $p(\cdot|\mathbf{x})$ and $r(\cdot|\mathbf{x})$
can be detected with largest probability. Inspired by the recent approaches
of \citet{JitXuSzaFukGre2017,SutTunStrDeRam2016,GreSejStrBalPon2012},
we propose optimizing the test locations in $V$ by maximizing the
asymptotic test power of the test statistic $\tph$. The test power
is defined as the probability of rejecting $H_{0}$ when it is false.
We start by giving the expression for the asymptotic test power of
$\tph$ in Corollary \ref{prop:fscd_power}. For brevity, we write
$\tpv$ for $\tpv(r)$.
\begin{cor}
\label{prop:fscd_power}Assume that $H_{1}$ holds. Given a set $V$
of test locations, and a rejection threshold $\gamma\in\mathbb{R}$,
the test power of the FSCD test is $P\left(\tpvh>\gamma\right)\approx\Phi\left(\sqrt{n}\frac{\tpv}{\sigma_{V}}-\frac{\gamma}{\sqrt{n}\sigma_{V}}\right)$
for sufficiently large $n$, where $\Phi$ is the CDF of the standard
normal distribution, and $\sigma_{V}=\sqrt{4\mathbb{V}[\mathbb{E}_{\mathbf{xy}}[\hbpv((\mathbf{x},\mathbf{y}),(\mathbf{x}',\mathbf{y}'))]]}$
is the standard deviation of the distribution of $\tpvh$ under $H_{1}$.
\end{cor}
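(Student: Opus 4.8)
The plan is to reduce the corollary to the asymptotic-normality statement for $\tpvh$ under $H_1$ and then carry out a standard centering-and-scaling argument, exactly paralleling the power analysis of the FSSD in \citet{JitXuSzaFukGre2017}. Since $\tpvh$ is a second-order U-statistic with kernel $\hbpv$ that is nondegenerate under $H_1$, the FSCD analog of Proposition~\ref{prop:asymptotic_kssd} (stated as Proposition~\ref{prop:fscd_asymptotics} in the appendix) supplies, under $H_1$,
\begin{equation*}
\sqrt{n}\bigl(\tpvh - \tpv\bigr) \stackrel{d}{\to} \mathcal{N}(0,\sigma_V^2), \qquad \sigma_V^2 = 4\,\mathbb{V}\bigl[\mathbb{E}_{\mathbf{xy}}[\hbpv((\mathbf{x},\mathbf{y}),(\mathbf{x}',\mathbf{y}'))]\bigr].
\end{equation*}
I would take this limit theorem as the single input, together with the (generic under $H_1$) assumption $\sigma_V>0$, without which the Gaussian approximation is vacuous.

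Next I would make the rejection region explicit. Following the convention fixed in Section~\ref{sec:kssd_test} for the KCSD test (reject when $n\dph>\gamma$), the FSCD test rejects when $n\tpvh>\gamma$, equivalently $\tpvh>\gamma/n$; here $\gamma$ is the threshold applied to the \emph{scaled} statistic. Centering at $\tpv$ and dividing both sides of the inequality by $\sigma_V$ then gives
\begin{equation*}
P\bigl(\tpvh > \gamma/n\bigr) = P\!\left(\frac{\sqrt{n}(\tpvh-\tpv)}{\sigma_V} > \frac{\sqrt{n}(\gamma/n-\tpv)}{\sigma_V}\right).
\end{equation*}

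Finally I would pass to the limit. Because $V$, $p$, and $r$ are fixed, $\sigma_V$ and $\tpv$ are constants, so the right-hand threshold $\frac{\gamma/\sqrt{n}-\sqrt{n}\tpv}{\sigma_V}$ is a deterministic sequence while the left-hand side converges in distribution to a standard normal by the displayed limit theorem. Hence for large $n$ the probability is approximated by $1-\Phi$ evaluated at that threshold, and using $1-\Phi(t)=\Phi(-t)$ yields
\begin{equation*}
P\bigl(\tpvh>\gamma/n\bigr)\approx \Phi\!\left(\frac{\sqrt{n}\,\tpv - \gamma/\sqrt{n}}{\sigma_V}\right) = \Phi\!\left(\sqrt{n}\frac{\tpv}{\sigma_V}-\frac{\gamma}{\sqrt{n}\sigma_V}\right),
\end{equation*}
which is the claimed expression.

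The substantive content of the argument is entirely in justifying the ``$\approx$'': it is the convergence of the standardized U-statistic to $N(0,1)$ from Proposition~\ref{prop:fscd_asymptotics}, evaluated along a deterministic threshold sequence, so no Slutsky step is required because $\sigma_V$ is treated as a fixed population quantity rather than estimated. The only point demanding care is the bookkeeping of the hidden factor $n$, i.e. recognizing that the threshold $\gamma$ is applied to $n\tpvh$ and not to $\tpvh$; this is precisely what produces the term $\gamma/(\sqrt{n}\sigma_V)$ rather than $\sqrt{n}\gamma/\sigma_V$ in the final formula.
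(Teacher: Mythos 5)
Your proof is correct and takes essentially the same route as the paper, which simply invokes the asymptotic normality of $\tpvh$ under $H_{1}$ (Proposition \ref{prop:fscd_asymptotics}) and performs the standard centering-and-scaling computation with the deterministic threshold sequence. Your reading of the rejection region as $n\tpvh>\gamma$ (matching the KCSD convention of Section \ref{sec:kssd_test}) is the right resolution of the statement's notation, since it is exactly what produces the $\gamma/(\sqrt{n}\sigma_{V})$ term in the claimed formula.
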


The result directly follows from the fact that $\tpvh$ is asymptotically
normally distributed (see Proposition \ref{prop:fscd_asymptotics}
in the appendix). Following the same line of reasoning as in \citet{JitXuSzaFukGre2017,SutTunStrDeRam2016},
for large $n$, the power expression is dominated by $\tpv/\sigma_{V}$,
which is called the \emph{power criterion} \citep{JitXuSzaFukGre2017}.
Assume that $n$ is sufficiently large. It follows that finding the
test locations $V$ which maximize the test power amounts to finding
$V^{*}=\arg\max_{V}P\left(\tpvh>\gamma\right)\approx\arg\max_{V}\tpv/\sigma_{V}$.
We also use the same objective function to tune the two kernels $k$
and $l$.

To optimize, we split the data into two independent sets: training
and test sets. We then optimize this ratio with its consistent estimator
$\tpvh/\hat{\sigma}_{V}$ estimated from the training set. The hypothesis
test is performed on the test set using the optimized parameters.
Indeed, this data splitting scheme has also been used in several modern
statistical tests \citep{JitSzaChwGre2016,SutTunStrDeRam2016,JitKanSanHaySch2018,SceVar2019}.
There are two reasons for doing so: firstly, conducting a test on
an independent test set avoids overfitting to the training set ---
the false rejection rate of $H_{0}$ may be higher than the specified
significance level $\alpha$ otherwise; secondly, for the statistic
to be a U-statistic, its U-statistic kernel (i.e., $\hbpv$) must
be independent of the samples used to estimate the summands. In Section
\ref{sec:experiments}, we shall see that finding $V$ in this way
leads to a higher test power when the difference between $p$ and
$r$ is localized.

\section{EXPERIMENTS}

\label{sec:experiments}In this section, we empirically investigate
the two proposed tests.\footnote{Code is available at \url{https://github.com/wittawatj/kernel-cgof}.}

\textbf{1. Illustration of the FSCD power criterion} Our first task
is to illustrate that the power criterion of the proposed FSCD test
reveals where $p$ and $r$ differ in the domain of the conditioning
variable ($\mathbf{x}$). We consider a simple univariate problem
where the model is $p(y|x):=\mathcal{N}\left(x/2,1\right)$, the data
generating distribution is $r(y|x):=\mathcal{N}(x,1)$, and $r_{x}(x)=\mathcal{N}(0,1)$.
We use Gaussian kernels for both $k$ and $l$. The power criterion
function is shown in Figure \ref{fig:toy_illus}. More examples can
be found in Section \ref{sec:toy_powcri} (appendix).

\begin{figure}
\centering{}\includegraphics[width=0.85\columnwidth]{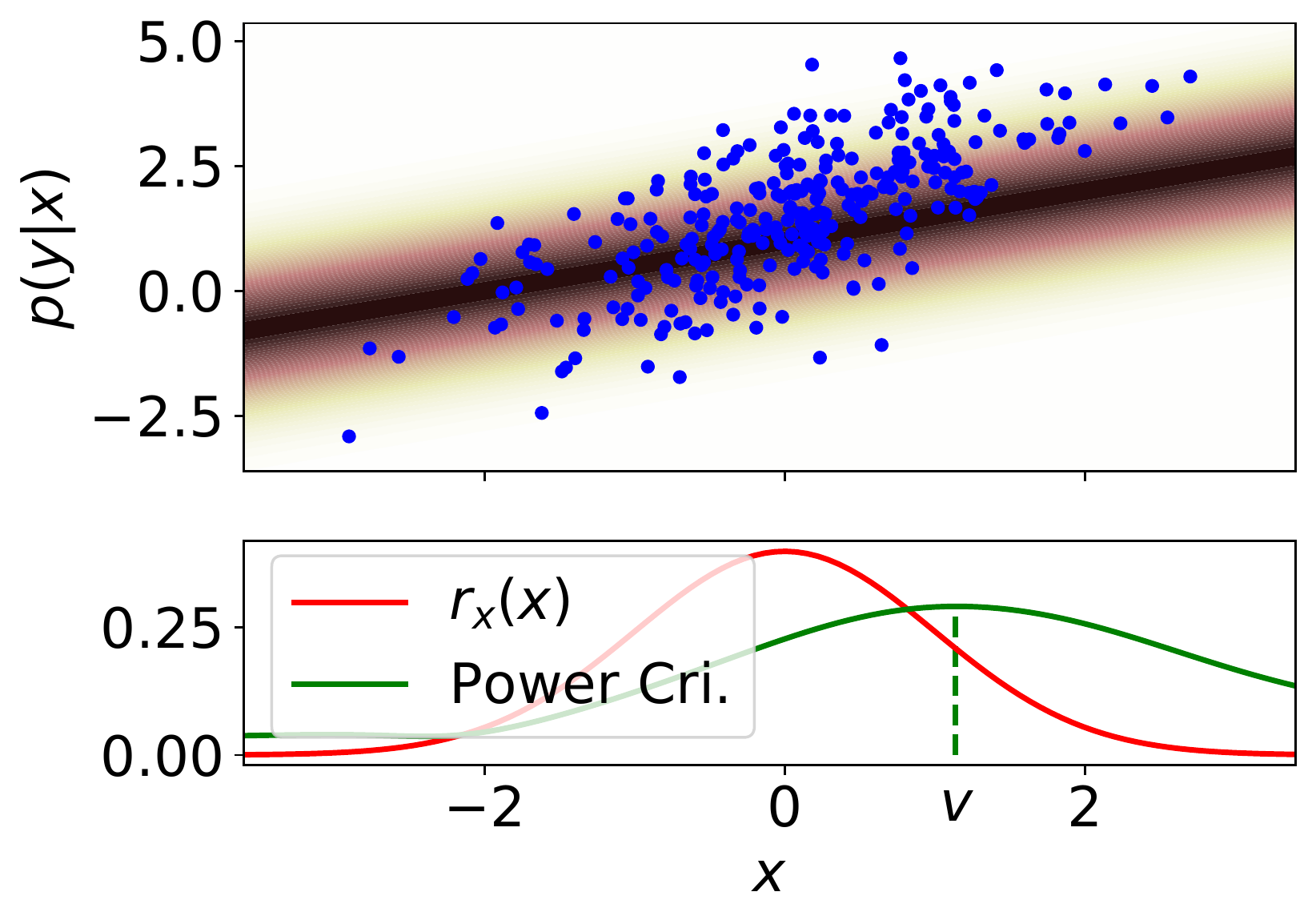}
\caption{\label{fig:toy_illus}The power criterion of FSCD as a function of
$x$ is high where the difference between $p(y|x)$ and $r(y|x)$
can be best detected.}
\vspace{-4mm}
\end{figure}

\textbf{2. Test power}
\begin{figure*}[t]
\centering
\vspace{-3mm}
\subfloat[Linear Gaussian Model ($H_0$)\label{fig:rej_lgm}]{
\includegraphics[width=0.265\linewidth]{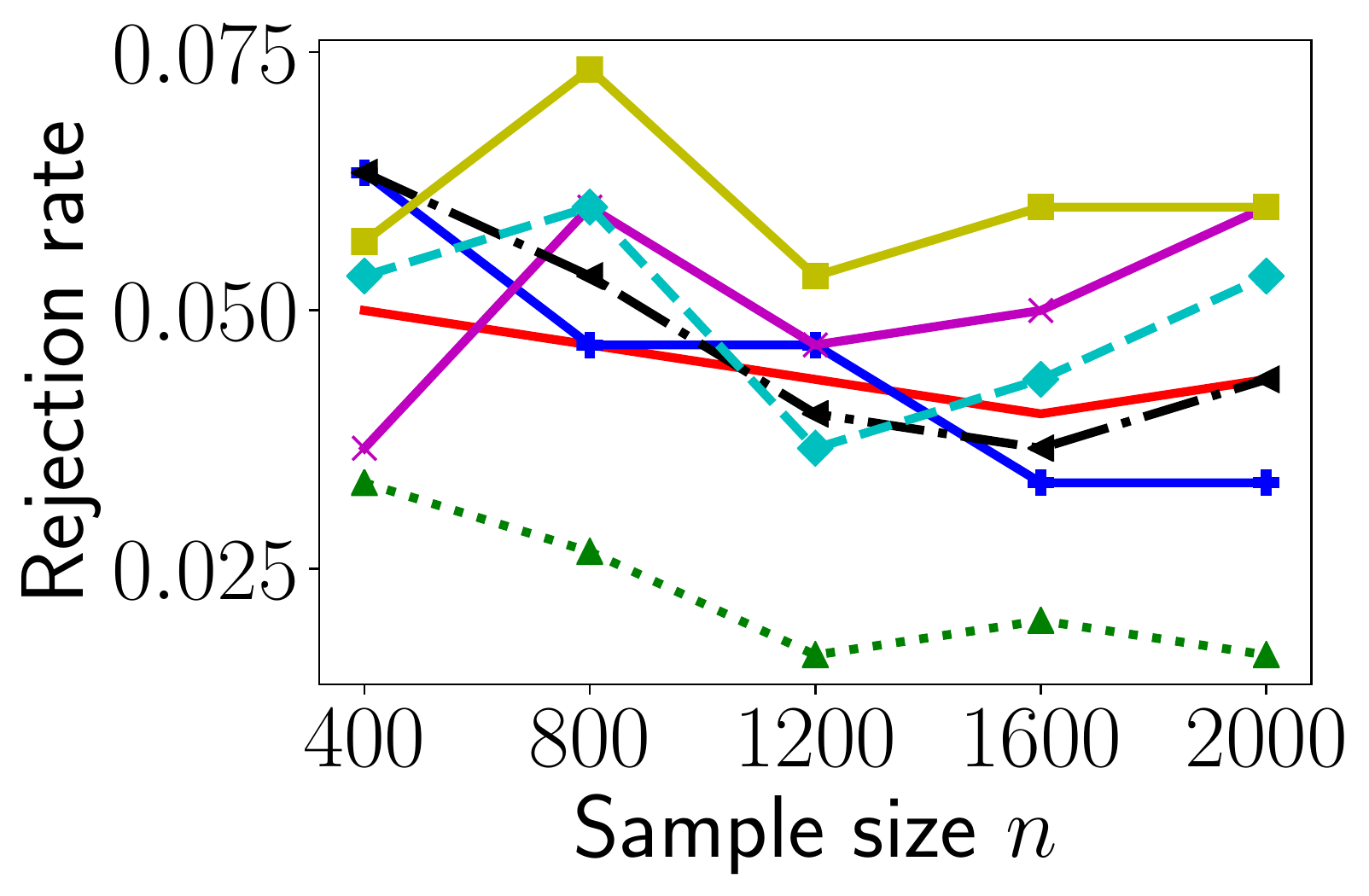}
}
\subfloat[Hetero. Gaussian Model ($H_1$)\label{fig:rej_hgm}]{
\includegraphics[width=0.265\linewidth]{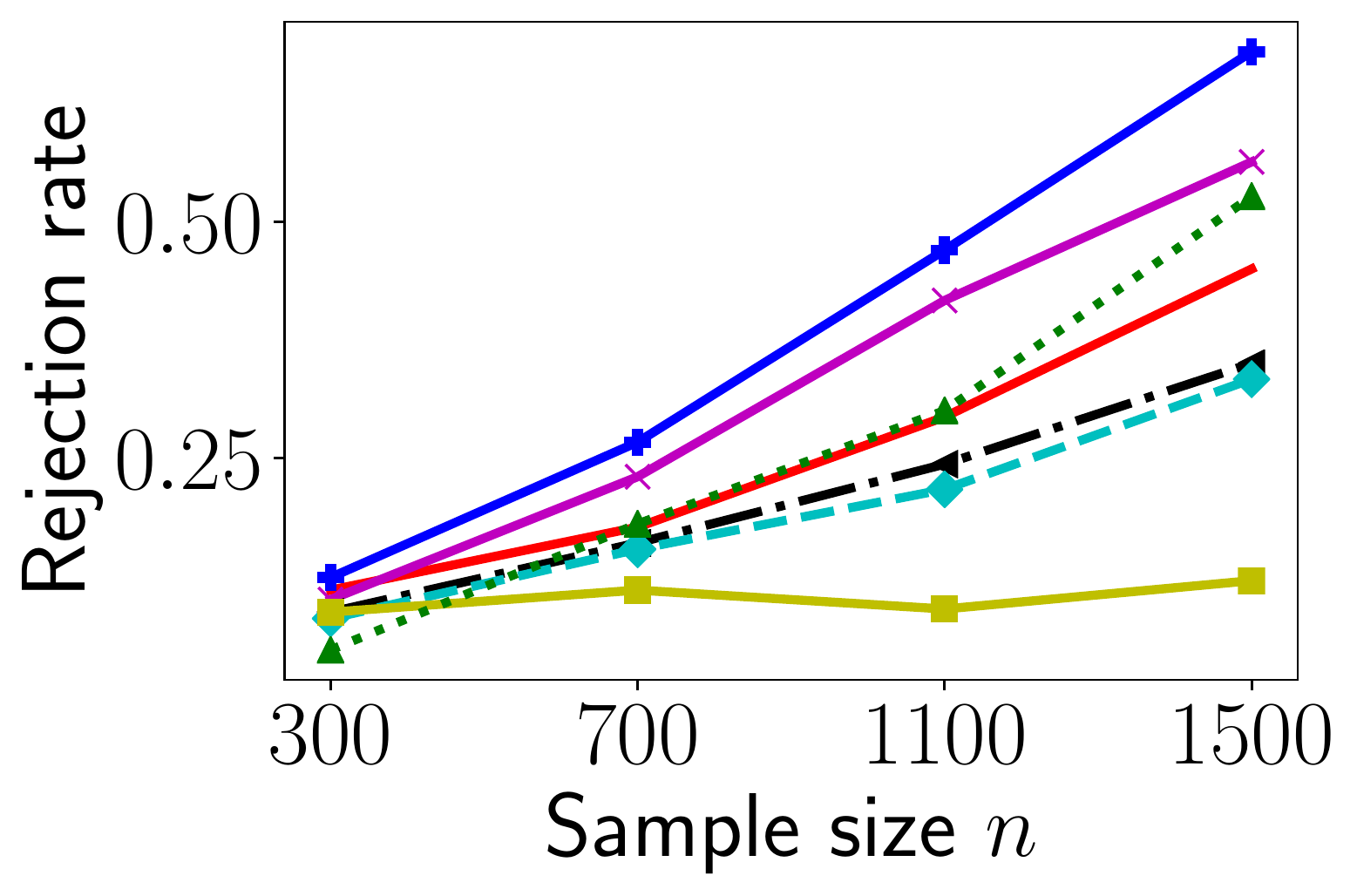}
}
\subfloat[Quadratic Gaussian Model ($H_1$)\label{fig:rej_qgm}]{
\includegraphics[width=0.26\linewidth]{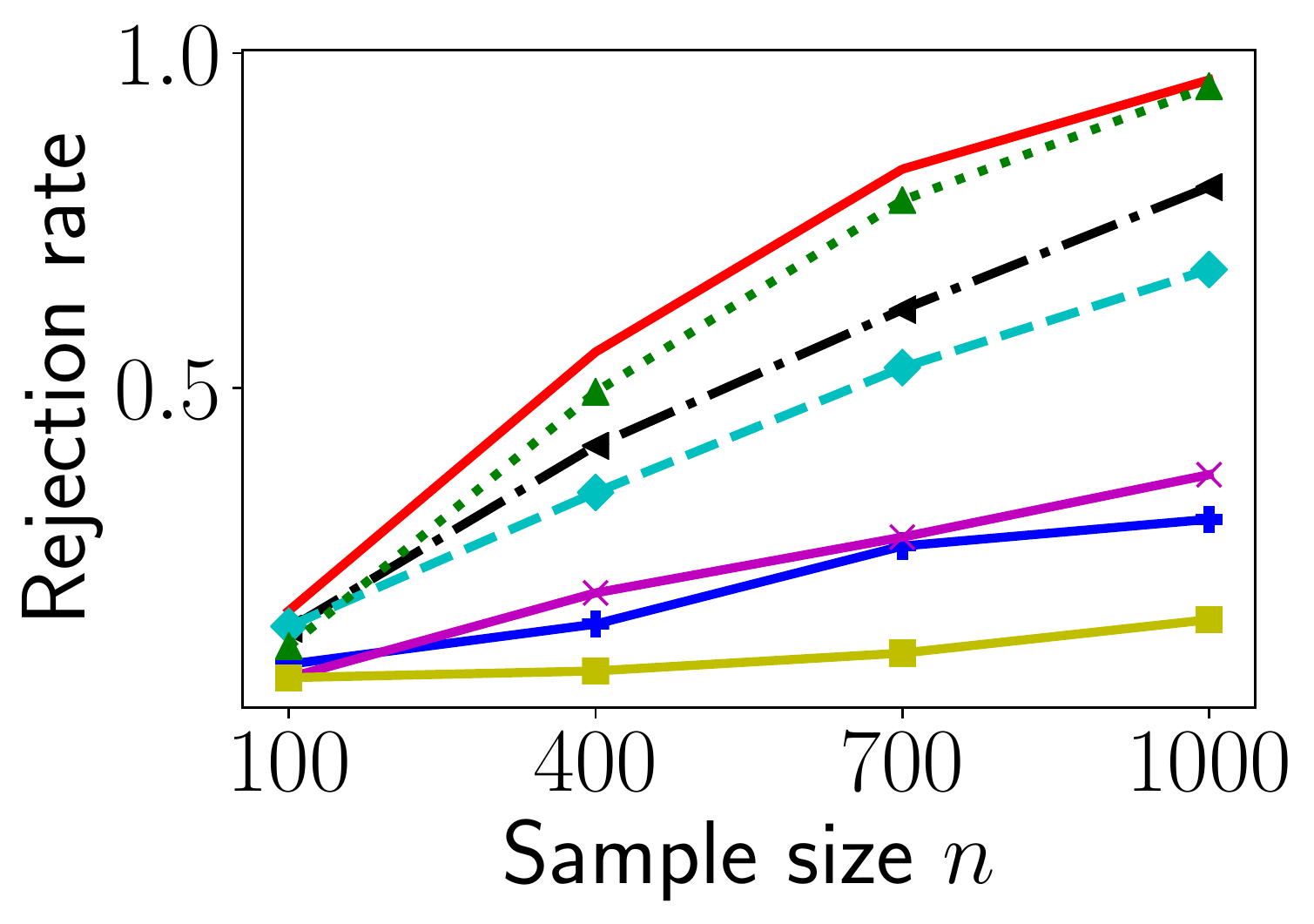}
\includegraphics[width=0.17\linewidth]{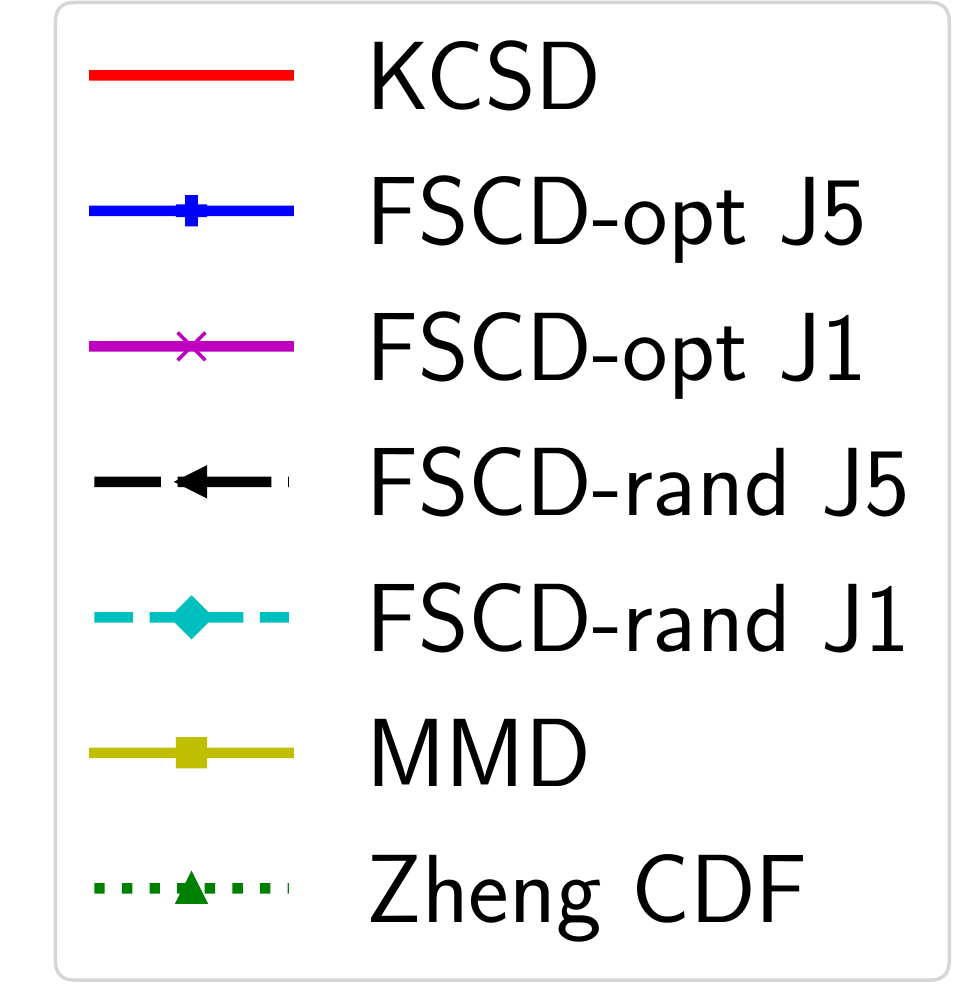} 
}\caption{Rejection rates of of the five tests with significance level $\alpha=0.05$.
\textbf{(a)}: $H_{0}$ is true. All test have false rejection rates
no larger than $\alpha$ (up to sampling noise). \textbf{(b)}: $H_{1}$
is true. FSCD-opt is good for detecting local difference. \textbf{(c)}:
KCSD is good for detecting global difference. \label{fig:toy_rej_rates}
\vspace{-2mm}}
\end{figure*}
We investigate the test power of the following methods.

\underline{\texttt{{KCSD}}}: our proposed KCSD test using Gaussian
kernels $k(\mathbf{x},\mathbf{x}')=\exp\left(-\frac{\|\mathbf{x}-\mathbf{x}'\|^{2}}{2\sigma_{x}^{2}}\right)$
and $l(\mathbf{y},\mathbf{y}')=\exp\left(-\frac{\|\mathbf{y}-\mathbf{y}'\|^{2}}{2\sigma_{y}^{2}}\right)$
where the bandwidths are set with $\sigma_{x}:=\mathrm{median}\left(\left\{ \|\mathbf{x}_{i}-\mathbf{x}_{j}\|_{2}\right\} _{i,j=1}^{n}\right)$
and $\sigma_{y}:=\mathrm{median}\left(\left\{ \|\mathbf{y}_{i}-\mathbf{y}_{j}\|_{2}\right\} _{i,j=1}^{n}\right)$.
This median heuristic has been used to set the bandwidth in many existing
kernel-based tests \citep{GreBorRasSchSmo2012,BouBelBlaAntGre2015,LiuLeeJor2016,ChwStrGre2016}.

\underline{\texttt{FSCD}}: our proposed FSCD test using Gaussian
kernels for $k$ and $l$. There are two variations of the FSCD. In
\textsf{FSCD-rand}, the $J$ test locations are randomly drawn from
a Gaussian distribution fitted to the data with maximum likelihood.
In the second variant \textsf{FSCD-opt}, 30\% of the observed data
are used for optimizing the two bandwidths and the $J$ test locations
by maximizing the power criterion, and the rest 70\% of the data are
used for testing.  All parameters of FSCD-opt are optimized jointly
with Adam \citep{KinBa2014} with default parameters implemented in
Pytorch. We consider $J\in\{1,5\}$.

\underline{\texttt{MMD}}: the Maximum Mean Discrepancy (MMD) test
\citep{GreBorRasSchSmo2012}. The MMD test was originally created
for two-sample testing. Here, we adapt it to conditional goodness-of-fit
testing by splitting the data into two disjoint sets $\{(\mathbf{x}_{i}^{(1)},\mathbf{y}_{i}^{(1)})\}_{i=1}^{n/2}$
and $\{(\mathbf{x}_{i}^{(2)},\mathbf{y}_{i}^{(2)})\}_{i=1}^{n/2}\}$
of equal size $n/2$. We then sample $\mathbf{y}'_{i}\sim p(\cdot|\mathbf{x}_{i}^{(2)})$
for each $i$. The test is performed on the first set, and $\{(\mathbf{x}_{i}^{(2)},\mathbf{y}_{i}')\}_{i=1}^{n/2}$.
The data splitting is performed to guarantee the independence between
the two sets of samples, which is a requirement of the MMD test. We
use the product of Gaussian kernels with bandwidths chosen by the
median heuristic. This approach serves as a nonparametric baseline
where the conditional model $p$ may be sampled easily.

\underline{\texttt{Zheng}}: Zheng\textquoteright s test \citep{Zhe2012}
is a specification test for parametric families of conditional distributions.
It is based on an (average) squared difference between the empirical
and the model CDFs, which is estimated by a U-statistic combined with
a kernel density estimator. We found that Epanechnikov kernel suggested
in \citep{Zhe2012} resulted in a poor performance and therefore choose
the standard Gaussian density as the smoothing kernel. We use a heuristic
similar to \citep{Zhe2012} to choose the kernel width parameters
$h_{j}=\hat{s}_{_{j}}n^{-1/(12d_{x})}$, where $h_{j}$ is the bandwidth
for the $j$-th coordinate of the covariate $\mathcal{\mathbf{x}}$,
and $\hat{s}_{j}$ the standard deviation of the coordinate.  The
test requires \textit{the best fitting parameter }in order to determine
the fit of a given parametric family. Instead of a maximum likelihood
estimator as proposed by \citet{Zhe2012}, in our experiments, the
reference and the model distributions share the same parameter values,
as the model family is a singleton set in our setting. 

These methods are tested on the following problems:

\textbf{Linear Gaussian Model (LGM)}: In this problem, $(\mathbf{x},y)\in\mathbb{R}^{5}\times\mathbb{R}$
and we set $p(y|\mathbf{x})=\mathcal{N}\left(\sum_{i=1}^{5}ix_{i},1\right)$,
set $r:=p$ and $r_{x}(\mathbf{x})=\mathcal{N}(\mathbf{0},\mathbf{I})$.
$H_{0}$ is true.

\textbf{Heteroscedastic Gaussian Model (HGM)}: $(\mathbf{x},y)\in\mathbb{R}^{3}\times\mathbb{R}$
and $p(y|\mathbf{x})=\mathcal{N}\left(\sum_{i=1}^{3}x_{i},\sigma^{2}(\mathbf{x})\right)$
where $\sigma^{2}(\mathbf{x}):=1+10\exp\left(-\frac{\|\mathbf{x}-\mathbf{c}\|^{2}}{2\times0.8^{2}}\right)$
and $\mathbf{c}=\frac{2}{3}\mathbf{1}$. We set the observation model
to be $r(y|\mathbf{x})=\mathcal{N}\left(\sum_{i=1}^{3}x_{i},1\right)$
and set $r_{x}(\mathbf{x})=\mathcal{N}(\mathbf{0},\mathbf{I})$. In
this problem, the observations are drawn from $r$ given by a linear
Gaussian model with unit variance. The model $p$ is heteroscedastic
(i.e., the noise depends on $\mathbf{x}$) where the variance function
is created such that it is roughly 1 everywhere in the domain of $\mathbf{x}$,
except in the region near $\mathbf{c}$. This problem is challenging
since the difference is local in $\mathcal{X}$. 

\textbf{Quadratic Gaussian Model (QGM)}: $(x,y)\in\mathbb{R}\times\mathbb{R}$
and we define $p(y|x)=\mathcal{N}\left(x+1,1\right)$, $r\left(y|x\right)=\mathcal{N}\left(0.1x^{2}+x+1,1\right)$,
and $r_{x}(x)=\mathrm{Uniform}\left(-2,2\right)$. Here, the conditional
mean of the true distribution $r$ is given by a quadratic function,
whereas the model $p$ is linear. This simulates a typical scenario
where the model is too simplistic to model the data. Note that the
quadratic term carries a small weight of 0.1, making the difference
between $p$ and $r$ challenging to detect. In this case, $H_{1}$
is true.

We report the rejection rates of these tests on all the three problems
in Figure \ref{fig:toy_rej_rates}, where we conduct 300 independent
trials for each experiment with the significance level set to $\alpha=0.05$.
In Figure \ref{fig:rej_lgm}, we observe that all the tests correctly
have their false rejection rates no larger than $\alpha=0.05$ (up
to sampling noise) since $H_{0}$ is true. In the HGM problem (Figure
\ref{fig:rej_hgm}) where the difference between $p$ and $r$ is
local in the domain $\mathcal{X}$, we observe the optimized test
locations of FSCD-opt are effective in identifying where to pinpoint
to difference in $\mathcal{X}$. This can be seen by noting that the
performance of FSCD-rand (random test locations) is significantly
lower than FSCD-opt, since the test locations are randomized, and
may be far from $\mathbf{c}$ which specifies the neighborhood that
reveals the difference (see the specification of the HGM problem).
While FSCD-opt has less test data since 30\% of the data is spent
on parameter tuning, the gain in the test power from having optimized
test locations in the right region outweighs the small reduction of
the test sample size.

In the QGM problem (Figure \ref{fig:rej_qgm}), while the quadratic
term in $r$ carries a small weight, as the sample size increases,
all the power of all the tests increases as expected. We observe
that the KCSD has higher performance than all variants of the FSCD
in this case. This is because the difference between $p$ and $r$
is spatially diffuse in a manner that a pointwise evaluation of $\mathbf{v}\mapsto\|G_{p,r}(\mathbf{v})\|_{\mathcal{F}_{l}^{d_{y}}}^{2}$
(recall the FSCD statistic in (\ref{eq:fscd_pop})) is small everywhere
in $\mathcal{X}=(-2,2)$. Thus, evaluating $G_{p,r}$ is less effective
in this problem. In the case where the difference is spatially diffuse,
it is more appropriate to take the norm of $G_{p,r}$, which explains
the superior performance of the KCSD. We also note that in constrast
to the HGM problem, in this case, FSCD-rand has higher performance
than FSCD-opt because there is no particular region in $\mathcal{X}$
that gives higher signal than other. As a result, optimizing for test
locations is less effective, and the test power drops because of smaller
test sample size. Finally, we observe that in both HGM and QGM problems,
the MMD has lower test power than other approaches due to the loss
of information from representing a model $p$ with samples. Zheng's
test performs well in the QGM problem. Its statistic given by the
expected squared difference between the empirical and the model CDFs
can be seen as capturing global differences. However, its use of kernel
density estimation may suffer when the data dimension is high as hinted
in Figure \ref{fig:rej_lgm} where it is overly conservative. 

\begin{figure}[t]
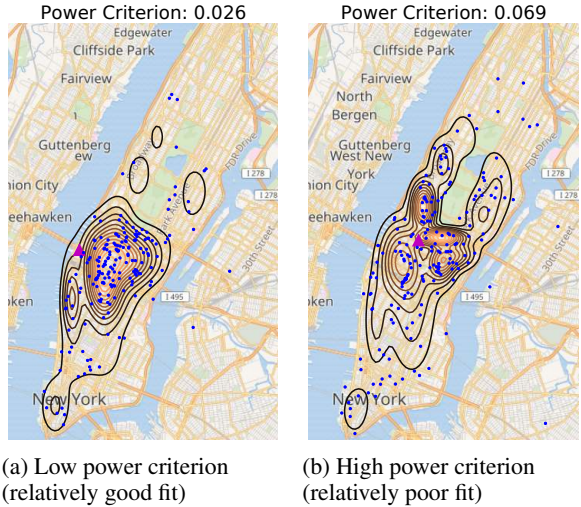

\centering{}\vspace{-3mm}\subfloat[Low power criterion \protect \\
(relatively good fit)\label{fig:nyc_low_pc}]{\includegraphics[width=0.47\columnwidth]{img/taxi/dt0\lyxdot 003_sx6\lyxdot 1e-05_sy0\lyxdot 00024_pc0\lyxdot 0262}

\,\,\,}\subfloat[High power criterion \protect \\
(relatively poor fit)\label{fig:nyc_high_pc}]{\includegraphics[width=0.47\columnwidth]{img/taxi/dt0\lyxdot 003_sx6\lyxdot 1e-05_sy0\lyxdot 00024_pc0\lyxdot 0690}

}\caption{Mixture Density Network $p(\mathbf{y}|\mathbf{x})$ (black contour)
trained on five million records in the NYC taxi dataset. Here, $\mathbf{y}$
is the drop-off location and $\mathbf{x}$ is the pick-up location.
Blue points indicate real drop-off locations conditioned on the pick-up
location at $\blacktriangle$ (shown in purple). The FSCD power criterion
is evaluated at $J=1$ test location set to be at $\blacktriangle$.
Since the model $p(\mathbf{y}|\mathbf{x}=\blacktriangle)$ fits less
well in Figure \ref{fig:nyc_high_pc}, the power criterion is larger
than in Figure \ref{fig:nyc_low_pc}. \label{fig:nyc_power_criterion}
\vspace{-4mm}}
\end{figure}

\textbf{3. Informative power criterion} In our final experiment, we
show with real data that the power criterion of the FSCD, as a function
of $\mathbf{v}\in\mathcal{X}$ is a dimensionless quantity that roughly
coincides with the degree of mismatch between $p(\mathbf{y}|\mathbf{v})$
and the data. We train a Mixture Density Network (MDN, \citet[Section 5.6]{Bis2006})
on the New York City (NYC) taxi dataset. The dataset contains millions
of trip records that include pick-up locations, drop-off locations,
time, etc. The MDN models the conditional probability of the drop-off
location\textbf{ $\mathbf{y}$ }given a pick-up location $\mathbf{x}$,
expressed as a latitude/longitude coordinate (i.e., $\mathcal{X},\mathcal{Y}\subset\mathbb{R}^{2}$).
We train the model on five million trip records of yellow cabs from
January 2015 using 20 Gaussian components, and a ReLU-based architecture
for the mean, mixing proportion, and variance functions. For simplicity,
only trips with pick-up and drop-off locations within or close to
Manhattan are used. 

We use Gaussian kernels for both $k$ and $l$ with their bandwidths
chosen by the median heuristic, and separately compute the power criterion
of the FSCD test at two manually chosen test locations, using a held-out
data of size 12000. The results are shown in Figure \ref{fig:nyc_power_criterion}
where blue points indicate observed drop-off locations conditioned
on the pick-up location denoted by $\blacktriangle$. We consider
conditioning separately on two pick-up locations $\blacktriangle_{1}$
and $\blacktriangle_{2}$, shown in Figure \ref{fig:nyc_low_pc} and
Figure \ref{fig:nyc_high_pc}, respectively.

In Figure \ref{fig:nyc_low_pc}, $p(\mathbf{y}|\mathbf{x}=\blacktriangle_{1})$
fits relatively well to the data compared to $p(\mathbf{y}|\mathbf{x}=\blacktriangle_{2})$
shown in \ref{fig:nyc_high_pc}. In Figure \ref{fig:nyc_high_pc},
the observed data (blue) do not respect the multimodality suggested
by the model. As a result, the power criterion evaluated at $\blacktriangle_{2}$
is higher, indicating a poorer fit at $\blacktriangle_{2}$. This
suggests that the power criterion function of the FSCD gives an interpretable
indication for where the conditional model does not fit well. More
details on the MDN and more results can be found in Section \ref{sec:nyc_taxi_experiment}
(appendix).

\section{CONCLUSION}

We have proposed two novel conditional goodness-of-fit tests: the
Kernel Conditional Stein Discrepancy (KCSD), and the Finite Set Conditional
Discrepancy (FSCD). We prove that the population statistics of the
two test define a proper divergence measure between two conditional
density functions. There are several possible future directions.
Both KCSD and FSCD can be extended to handle a discrete domain $\mathcal{Y}$
by considering a Stein operator defined in terms of forward and backward
differences as in \Citet{YanLiuRaoNev2018}. Further, our two tests
can be sped up to have a runtime complexity linear in the sample size
(instead of quadratic in the current version) by considering random
Fourier features as in \citet{HugMac2018}. The two tests can also
be extended to compare the relative fit of two competing models as
in \citet{JitKanSanHaySch2018,BouBelBlaAntGre2015}. We leave these
research directions for future work.

\subsubsection*{Acknowledgment}

We thank Patsorn Sangkloy for helping us with the experiment on the
NYC taxi dataset. HK thanks the Gatsby Charitable Foundation for the
financial support


{\small \bibliographystyle{myplainnat}
\bibliography{kcgof_ref}

\begin{thebibliography}{41}
\providecommand{\natexlab}[1]{#1}
\providecommand{\url}[1]{\texttt{#1}}
\expandafter\ifx\csname urlstyle\endcsname\relax
  \providecommand{\doi}[1]{doi: #1}\else
  \providecommand{\doi}{doi: \begingroup \urlstyle{rm}\Url}\fi

\bibitem[Andrews(1997)]{And1997}
D.~W.~K. Andrews.
\newblock A conditional {Kolmogorov} test.
\newblock \emph{Econometrica}, 65\penalty0 (5):\penalty0 1097--1128, 1997.

\bibitem[Arcones and Gine(1992)]{ArcGin1992}
M.~A. Arcones and E.~Gine.
\newblock On the bootstrap of {U} and {V} statistics.
\newblock \emph{The Annals of Statistics}, pages 655--674, 1992.

\bibitem[Berlinet and Thomas-Agnan(2011)]{BerTho2011}
A.~Berlinet and C.~Thomas-Agnan.
\newblock \emph{Reproducing kernel Hilbert spaces in probability and
  statistics}.
\newblock Springer Science \& Business Media, 2011.

\bibitem[Bierens(1982)]{Bie1982}
H.~J. Bierens.
\newblock Consistent model specification tests.
\newblock \emph{Journal of Econometrics}, 20\penalty0 (1):\penalty0 105 -- 134,
  1982.
\newblock ISSN 0304-4076.

\bibitem[Bierens(1990)]{Bie1990}
H.~J. Bierens.
\newblock A consistent conditional moment test of functional form.
\newblock \emph{Econometrica: Journal of the Econometric Society}, pages
  1443--1458, 1990.

\bibitem[Bierens and Ploberger(1997)]{BiePlo1997}
H.~J. Bierens and W.~Ploberger.
\newblock Asymptotic theory of integrated conditional moment tests.
\newblock \emph{Econometrica: Journal of the Econometric Society}, pages
  1129--1151, 1997.

\bibitem[Bishop(2006)]{Bis2006}
C.~M. Bishop.
\newblock \emph{Pattern recognition and machine learning}.
\newblock Springer, 2006.

\bibitem[Bounliphone et~al.(2015)Bounliphone, Belilovsky, Blaschko, Antonoglou,
  and Gretton]{BouBelBlaAntGre2015}
W.~Bounliphone, E.~Belilovsky, M.~B. Blaschko, I.~Antonoglou, and A.~Gretton.
\newblock A test of relative similarity for model selection in generative
  models.
\newblock In \emph{ICLR}, 2015.

\bibitem[{Carmeli} et~al.(2008){Carmeli}, {De Vito}, {Toigo}, and
  {Umanit{\`a}}]{CarDeToiUma2008}
C.~{Carmeli}, E.~{De Vito}, A.~{Toigo}, and V.~{Umanit{\`a}}.
\newblock Vector valued reproducing kernel {Hilbert} spaces and universality.
\newblock \emph{arXiv e-prints}, Jul 2008.

\bibitem[Carmeli et~al.(2006)Carmeli, De~Vito, and Toigo]{CarDeToi2006}
C.~Carmeli, E.~De~Vito, and A.~Toigo.
\newblock Vector valued reproducing kernel {Hilbert} spaces of integrable
  functions and {Mercer} theorem.
\newblock \emph{Analysis and Applications}, 4\penalty0 (04):\penalty0 377--408,
  2006.

\bibitem[Chwialkowski et~al.(2015)Chwialkowski, Ramdas, Sejdinovic, and
  Gretton]{ChwRamSejGre2015}
K.~Chwialkowski, A.~Ramdas, D.~Sejdinovic, and A.~Gretton.
\newblock Fast two-sample testing with analytic representations of probability
  measures.
\newblock In \emph{Advances in Neural Information Processing Systems}, pages
  1981--1989, 2015.

\bibitem[Chwialkowski et~al.(2016)Chwialkowski, Strathmann, and
  Gretton]{ChwStrGre2016}
K.~Chwialkowski, H.~Strathmann, and A.~Gretton.
\newblock A kernel test of goodness of fit.
\newblock In \emph{ICML}, pages 2606--2615, 2016.

\bibitem[Dutordoir et~al.(2018)Dutordoir, Salimbeni, Hensman, and
  Deisenroth]{DutSalHenDei2018}
V.~Dutordoir, H.~Salimbeni, J.~Hensman, and M.~Deisenroth.
\newblock Gaussian process conditional density estimation.
\newblock In \emph{NeurIPS}, pages 2385--2395, 2018.

\bibitem[Gorham and Mackey(2017)]{GorMac2017}
J.~Gorham and L.~Mackey.
\newblock Measuring sample quality with kernels.
\newblock In \emph{ICML}, pages 1292--1301, 2017.

\bibitem[Gretton et~al.(2008)Gretton, Fukumizu, Teo, Song, Sch{\"o}lkopf, and
  Smola]{GreFukTeoSonSch2008}
A.~Gretton, K.~Fukumizu, C.~H. Teo, L.~Song, B.~Sch{\"o}lkopf, and A.~J. Smola.
\newblock A kernel statistical test of independence.
\newblock In \emph{NeurIPS}, pages 585--592, 2008.

\bibitem[Gretton et~al.(2009)Gretton, Fukumizu, Harchaoui, and
  Sriperumbudur]{GreFukHarSri2009}
A.~Gretton, K.~Fukumizu, Z.~Harchaoui, and B.~K. Sriperumbudur.
\newblock A fast, consistent kernel two-sample test.
\newblock In \emph{NeurIPS}, pages 673--681. 2009.

\bibitem[Gretton et~al.(2012{\natexlab{a}})Gretton, Borgwardt, Rasch,
  Sch{\"o}lkopf, and Smola]{GreBorRasSchSmo2012}
A.~Gretton, K.~M. Borgwardt, M.~J. Rasch, B.~Sch{\"o}lkopf, and A.~Smola.
\newblock A kernel two-sample test.
\newblock \emph{Journal of Machine Learning Research}, 13:\penalty0 723--773,
  2012{\natexlab{a}}.

\bibitem[Gretton et~al.(2012{\natexlab{b}})Gretton, Sejdinovic, Strathmann,
  Balakrishnan, Pontil, Fukumizu, and Sriperumbudur]{GreSejStrBalPon2012}
A.~Gretton, D.~Sejdinovic, H.~Strathmann, S.~Balakrishnan, M.~Pontil,
  K.~Fukumizu, and B.~K. Sriperumbudur.
\newblock Optimal kernel choice for large-scale two-sample tests.
\newblock In \emph{NeurIPS}, pages 1205--1213, 2012{\natexlab{b}}.

\bibitem[Huggins and Mackey(2018)]{HugMac2018}
J.~Huggins and L.~Mackey.
\newblock Random feature {Stein} discrepancies.
\newblock In \emph{NeurIPS}, pages 1899--1909, 2018.

\bibitem[Huskova and Janssen(1993)]{HusJan1993}
M.~Huskova and P.~Janssen.
\newblock Consistency of the generalized bootstrap for degenerate
  $u$-statistics.
\newblock \emph{Ann. Statist.}, 21\penalty0 (4):\penalty0 1811--1823, 12 1993.

\bibitem[Jitkrittum et~al.(2016)Jitkrittum, Szab\'{o}, Chwialkowski, and
  Gretton]{JitSzaChwGre2016}
W.~Jitkrittum, Z.~Szab\'{o}, K.~P. Chwialkowski, and A.~Gretton.
\newblock Interpretable distribution features with maximum testing power.
\newblock In \emph{NeurIPS}, pages 181--189. 2016.

\bibitem[Jitkrittum et~al.(2017{\natexlab{a}})Jitkrittum, Szab{\'o}, and
  Gretton]{JitSzaGre2017}
W.~Jitkrittum, Z.~Szab{\'o}, and A.~Gretton.
\newblock An adaptive test of independence with analytic kernel embeddings.
\newblock In \emph{ICML}. 2017{\natexlab{a}}.

\bibitem[Jitkrittum et~al.(2017{\natexlab{b}})Jitkrittum, Xu, Szabo, Fukumizu,
  and Gretton]{JitXuSzaFukGre2017}
W.~Jitkrittum, W.~Xu, Z.~Szabo, K.~Fukumizu, and A.~Gretton.
\newblock A linear-time kernel goodness-of-fit test.
\newblock In \emph{NeurIPS}, 2017{\natexlab{b}}.

\bibitem[Jitkrittum et~al.(2018)Jitkrittum, Kanagawa, Sangkloy, Hays,
  Sch{\"o}lkopf, and Gretton]{JitKanSanHaySch2018}
W.~Jitkrittum, H.~Kanagawa, P.~Sangkloy, J.~Hays, B.~Sch{\"o}lkopf, and
  A.~Gretton.
\newblock Informative features for model comparison.
\newblock In \emph{NeurIPS}, pages 808--819, 2018.

\bibitem[Kingma and Ba(2014)]{KinBa2014}
D.~P. Kingma and J.~Ba.
\newblock Adam: A method for stochastic optimization.
\newblock \emph{arXiv preprint arXiv:1412.6980}, 2014.

\bibitem[Liu et~al.(2016)Liu, Lee, and Jordan]{LiuLeeJor2016}
Q.~Liu, J.~Lee, and M.~Jordan.
\newblock A kernelized {Stein} discrepancy for goodness-of-fit tests.
\newblock In \emph{ICML}, pages 276--284, 2016.

\bibitem[{Mityagin}(2015)]{Mit2015}
B.~{Mityagin}.
\newblock The zero set of a real analytic function.
\newblock \emph{arXiv e-prints}, art. arXiv:1512.07276, Dec 2015.

\bibitem[Moreira(2003)]{Mor2003}
M.~J. Moreira.
\newblock A conditional likelihood ratio test for structural models.
\newblock \emph{Econometrica}, 71\penalty0 (4):\penalty0 1027--1048, 2003.

\bibitem[Oates et~al.(2017)Oates, Girolami, and Chopin]{OatGirCho2017}
C.~J. Oates, M.~Girolami, and N.~Chopin.
\newblock Control functionals for {Monte} {Carlo} integration.
\newblock \emph{Journal of the Royal Statistical Society: Series B (Statistical
  Methodology)}, 79\penalty0 (3):\penalty0 695--718, 2017.
\newblock \doi{10.1111/rssb.12185}.

\bibitem[Scetbon and Varoquaux(2019)]{SceVar2019}
M.~Scetbon and G.~Varoquaux.
\newblock Comparing distributions: L1 geometry improves kernel two-sample
  testing.
\newblock In \emph{NeurIPS}, pages 12306--12316. 2019.

\bibitem[Serfling(2009)]{Ser2009}
R.~J. Serfling.
\newblock \emph{Approximation {Theorems} of {Mathematical} {Statistics}}.
\newblock John Wiley \& Sons, 2009.

\bibitem[Sriperumbudur et~al.(2011)Sriperumbudur, Fukumizu, and
  Lanckriet]{SriFukLan2011}
B.~K. Sriperumbudur, K.~Fukumizu, and G.~R.~G. Lanckriet.
\newblock Universality, characteristic kernels and {RKHS} embedding of
  measures.
\newblock \emph{Journal of Machine Learning Research}, 12:\penalty0 2389--2410,
  2011.

\bibitem[Steinwart and Christmann(2008)]{SteChr2008}
I.~Steinwart and A.~Christmann.
\newblock \emph{Support vector machines}.
\newblock Springer Science \& Business Media, 2008.

\bibitem[Stute and Zhu(2002)]{StuZhu2002}
W.~Stute and L.-X. Zhu.
\newblock Model checks for generalized linear models.
\newblock \emph{Scandinavian Journal of Statistics}, 29\penalty0 (3):\penalty0
  535--545, 2002.
\newblock ISSN 03036898, 14679469.

\bibitem[Sutherland et~al.(2016)Sutherland, Tung, Strathmann, De, Ramdas,
  Smola, and Gretton]{SutTunStrDeRam2016}
D.~J. Sutherland, H.-Y. Tung, H.~Strathmann, S.~De, A.~Ramdas, A.~Smola, and
  A.~Gretton.
\newblock Generative models and model criticism via optimized maximum mean
  discrepancy.
\newblock In \emph{ICLR}. 2016.

\bibitem[Szab{\'o} and Sriperumbudur(2018)]{SzaSri2018}
Z.~Szab{\'o} and B.~K. Sriperumbudur.
\newblock Characteristic and universal tensor product kernels.
\newblock \emph{Journal of Machine Learning Research}, 18\penalty0
  (233):\penalty0 1--29, 2018.

\bibitem[Tripathi et~al.(2003)Tripathi, Kitamura, et~al.]{TriKitoth2003}
G.~Tripathi, Y.~Kitamura, et~al.
\newblock Testing conditional moment restrictions.
\newblock \emph{The Annals of Statistics}, 31\penalty0 (6):\penalty0
  2059--2095, 2003.

\bibitem[Uria et~al.(2016)Uria, C{\^o}t{\'e}, Gregor, Murray, and
  Larochelle]{UriCotGreMurLar2016}
B.~Uria, M.-A. C{\^o}t{\'e}, K.~Gregor, I.~Murray, and H.~Larochelle.
\newblock Neural autoregressive distribution estimation.
\newblock \emph{The Journal of Machine Learning Research}, 17\penalty0
  (1):\penalty0 7184--7220, 2016.

\bibitem[Yang et~al.(2018)Yang, Liu, Rao, and Neville]{YanLiuRaoNev2018}
J.~Yang, Q.~Liu, V.~Rao, and J.~Neville.
\newblock Goodness-of-fit testing for discrete distributions via {Stein}
  discrepancy.
\newblock In \emph{ICML}, pages 5561--5570, 2018.

\bibitem[Zheng(2000)]{Zhe2000}
J.~X. Zheng.
\newblock A consistent test of conditional parametric distributions.
\newblock \emph{Econometric Theory}, 16\penalty0 (5):\penalty0 667--691, 2000.

\bibitem[Zheng(2012)]{Zhe2012}
X.~Zheng.
\newblock Testing parametric conditional distributions using the nonparametric
  smoothing method.
\newblock \emph{Metrika}, 75\penalty0 (4):\penalty0 455--469, May 2012.

\end{thebibliography}
}

\clearpage
\newpage
\appendix
\onecolumn
\begin{center}
{\LARGE{}\ourtitle{}}{\LARGE\par}
\par\end{center}

\begin{center}
\textcolor{black}{\Large{}Supplementary}{\Large\par}
\par\end{center}

\section{PROOFS}

\label{sec:proofs}This section contains proofs of the theoretical
results we gave in the main text. We first give two known lemmas that
will be needed.
\begin{lem}[{\citealp[Theorem 2b, Section 4]{CarDeToiUma2008} (rephrased)}]
 \label{lem:carmeli_injective}Let $\mathcal{X}$ be a locally compact
second countable topological space, and $\mathcal{Z}$ be a complex
separable Hilbert space. Let $K\colon\mathcal{X}\times\mathcal{X}\to\mathcal{L}(\mathcal{Z})$
be a $C_{0}$ universal kernel associated with the vector-valued RKHS
$\mathcal{F}_{K}$, where $\mathcal{L}(\mathcal{Z})$ denotes the
Banach space of bounded operators from $\mathcal{Z}$ to $\mathcal{Z}$.
Let $P$ be a probability measure on $\mathcal{X}$. Then, the operator
$L_{P}\colon L^{2}(\mathcal{X},P;\mathcal{Z})\rightarrow\mathcal{F}_{K}$
given by $(L_{P}f)(\mathbf{t})=\int_{\mathcal{X}}K(\mathbf{t},\mathbf{x})f(\mathbf{x})\thinspace\mathrm{d}P(\mathbf{x})$
is injective, for all $f\in L^{2}(\mathcal{X},P;\mathcal{Z})$.
\end{lem}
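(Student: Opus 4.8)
The plan is to deduce injectivity of $L_P$ from the defining density property of a $C_0$-universal kernel by a standard Hilbert-space duality argument, the crux of which is computing the adjoint $L_P^{*}$ explicitly. Throughout I write $K_{\mathbf{x}}\colon\mathcal{Z}\to\mathcal{F}_K$ for the bounded operator with $(K_{\mathbf{x}}\mathbf{z})(\mathbf{t})=K(\mathbf{t},\mathbf{x})\mathbf{z}$, whose adjoint $K_{\mathbf{x}}^{*}\colon\mathcal{F}_K\to\mathcal{Z}$ is the evaluation $K_{\mathbf{x}}^{*}h=h(\mathbf{x})$ (the reproducing property of the vector-valued RKHS).

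First I would check that $L_P$ is a well-defined bounded operator and set up the inclusion map. Since $K$ is a $C_0$-kernel it is bounded, so $\sup_{\mathbf{x}}\|K_{\mathbf{x}}\|<\infty$; for $f\in L^{2}(\mathcal{X},P;\mathcal{Z})$ the map $\mathbf{x}\mapsto K_{\mathbf{x}}f(\mathbf{x})$ is then Bochner integrable against the probability measure $P$ (by Cauchy--Schwarz and the uniform bound on $\|K_{\mathbf{x}}\|$), and evaluating the resulting integral at $\mathbf{t}$ via $K_{\mathbf{t}}^{*}$ recovers $(L_Pf)(\mathbf{t})=\int K(\mathbf{t},\mathbf{x})f(\mathbf{x})\,\mathrm{d}P(\mathbf{x})$. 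The same boundedness shows each $h\in\mathcal{F}_K$, being bounded and vanishing at infinity, lies in $L^{2}(\mathcal{X},P;\mathcal{Z})$, so the natural inclusion $\iota\colon\mathcal{F}_K\hookrightarrow L^{2}(\mathcal{X},P;\mathcal{Z})$ is a well-defined bounded map.

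The key computation is the identification $L_P^{*}=\iota$. For $f\in L^{2}(\mathcal{X},P;\mathcal{Z})$ and $h\in\mathcal{F}_K$, pulling the inner product inside the Bochner integral and using $\langle K_{\mathbf{x}}f(\mathbf{x}),h\rangle_{\mathcal{F}_K}=\langle f(\mathbf{x}),K_{\mathbf{x}}^{*}h\rangle_{\mathcal{Z}}=\langle f(\mathbf{x}),h(\mathbf{x})\rangle_{\mathcal{Z}}$ yields
\begin{align*}
\langle L_Pf,h\rangle_{\mathcal{F}_K}=\int_{\mathcal{X}}\langle f(\mathbf{x}),h(\mathbf{x})\rangle_{\mathcal{Z}}\,\mathrm{d}P(\mathbf{x})=\langle f,\iota h\rangle_{L^{2}(\mathcal{X},P;\mathcal{Z})}.
\end{align*}
Hence $L_P^{*}h=\iota h$, so $\mathrm{ran}\,L_P^{*}=\iota(\mathcal{F}_K)$.

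Finally I would invoke the elementary fact that a bounded operator $T$ between Hilbert spaces satisfies $\ker T=(\overline{\mathrm{ran}\,T^{*}})^{\perp}$, whence $L_P$ is injective if and only if $\iota(\mathcal{F}_K)$ is dense in $L^{2}(\mathcal{X},P;\mathcal{Z})$. But density of $\mathcal{F}_K$ in $L^{2}(\mathcal{X},P;\mathcal{Z})$ for the probability measure $P$ is exactly the definition of $C_0$-universality used in the excerpt, so the proof concludes. I expect the main obstacle to be the measure-theoretic bookkeeping of the first two paragraphs — verifying Bochner integrability and that $\iota$ is the genuine adjoint — after which the result is a one-line duality argument; the separability of $\mathcal{Z}$ and the second countability of $\mathcal{X}$ are precisely the hypotheses that make this measure theory and the underlying operator-valued RKHS machinery of \citet{CarDeToiUma2008} go through.
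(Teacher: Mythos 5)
Your proof is correct: identifying $L_P$ as the adjoint of the inclusion $\iota\colon\mathcal{F}_K\hookrightarrow L^{2}(\mathcal{X},P;\mathcal{Z})$ and then using $\ker L_P=\left(\overline{\mathrm{ran}\,L_P^{*}}\right)^{\perp}$ reduces injectivity exactly to the density of $\mathcal{F}_K$ in $L^{2}(\mathcal{X},P;\mathcal{Z})$, which is the definition of $C_0$-universality adopted in the paper. The paper gives no proof of its own for this lemma --- it defers entirely to \citet[Theorem 2b, Section 4]{CarDeToiUma2008} --- and your duality argument is essentially the proof found in that reference, so the two approaches coincide.
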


\begin{lem}[{\citealt[Lemma 1]{ChwRamSejGre2015}}]
 \label{lem:rkhs_analytic}If $k\colon\mathbb{R}^{d_{x}}\times\mathbb{R}^{d_{x}}\to\mathbb{R}$
is a bounded, real analytic kernel (i.e., for any $\mathbf{v}\in\mathcal{X}$,
$\mathbf{x}\mapsto k(\mathbf{x},\mathbf{v})$ is a real analytic function),
then all functions in the RKHS defined by $k$ are real analytic.
\end{lem}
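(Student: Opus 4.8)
The plan is to prove real analyticity of an arbitrary $f\in\mathcal{F}_k$ via the derivative-growth characterization: a $C^\infty$ function on an open set is real analytic near a point $\mathbf{x}_0$ if and only if there are constants $C,R>0$ with $|\partial^\alpha f(\mathbf{x})|\le C\,\alpha!\,R^{-|\alpha|}$ for every multi-index $\alpha$ and every $\mathbf{x}$ in a neighborhood of $\mathbf{x}_0$. First I would verify that every $f\in\mathcal{F}_k$ is $C^\infty$ and that its derivatives arise from differentiating the reproducing identity $f(\mathbf{x})=\langle f,k(\cdot,\mathbf{x})\rangle_{\mathcal{F}_k}$ under the inner product, giving
\[
\partial^\alpha f(\mathbf{x})=\langle f,\;\partial^{\alpha}_{(2)}k(\cdot,\mathbf{x})\rangle_{\mathcal{F}_k},
\]
where $\partial_{(2)}$ denotes differentiation in the second argument of $k$. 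This interchange is licensed by a standard RKHS differentiability lemma (e.g.\ Steinwart--Christmann) as soon as the relevant mixed partials of $k$ exist and are continuous, which holds because a real analytic kernel is in particular $C^\infty$. Applying Cauchy--Schwarz then yields $|\partial^\alpha f(\mathbf{x})|\le\|f\|_{\mathcal{F}_k}\,\|\partial^{\alpha}_{(2)}k(\cdot,\mathbf{x})\|_{\mathcal{F}_k}$, so the whole problem reduces to a factorial-type bound on the RKHS norm of the differentiated feature map.

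The key step is the identity
\[
\|\partial^{\alpha}_{(2)}k(\cdot,\mathbf{x})\|_{\mathcal{F}_k}^{2}=\bigl(\partial^{\alpha}_{(1)}\partial^{\alpha}_{(2)}k\bigr)(\mathbf{x},\mathbf{x}),
\]
obtained by differentiating $\langle k(\cdot,\mathbf{s}),k(\cdot,\mathbf{t})\rangle_{\mathcal{F}_k}=k(\mathbf{s},\mathbf{t})$ in $\mathbf{s}$ and $\mathbf{t}$ and setting $\mathbf{s}=\mathbf{t}=\mathbf{x}$. The right-hand side is a diagonal mixed derivative of $k$. Expanding $k$ in its power series about $(\mathbf{x}_0,\mathbf{x}_0)$ and invoking the multivariate Cauchy estimates on the coefficients $c_{\alpha\beta}$ gives $|(\partial^{\alpha}_{(1)}\partial^{\alpha}_{(2)}k)(\mathbf{x}_0,\mathbf{x}_0)|=(\alpha!)^2|c_{\alpha\alpha}|\le M(\alpha!)^2\rho^{-2|\alpha|}$ for constants $M,\rho>0$ that, by uniformity of analytic bounds on compact subsets of the convergence domain, hold uniformly on a neighborhood. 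Taking square roots feeds $\|\partial^{\alpha}_{(2)}k(\cdot,\mathbf{x})\|_{\mathcal{F}_k}\le\sqrt{M}\,\alpha!\,\rho^{-|\alpha|}$ into the Cauchy--Schwarz bound, producing $|\partial^\alpha f(\mathbf{x})|\le(\|f\|_{\mathcal{F}_k}\sqrt{M})\,\alpha!\,\rho^{-|\alpha|}$, which is exactly the required estimate with $C=\|f\|_{\mathcal{F}_k}\sqrt{M}$ and $R=\rho$; hence $f$ is real analytic at $\mathbf{x}_0$, and since $\mathbf{x}_0$ is arbitrary, on all of $\mathcal{X}$.

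The hard part will be the passage from the stated hypothesis — analyticity of $\mathbf{x}\mapsto k(\mathbf{x},\mathbf{v})$ in the first argument for each fixed $\mathbf{v}$ — to the \emph{joint} analyticity near the diagonal that the Cauchy estimate on $(\partial^{\alpha}_{(1)}\partial^{\alpha}_{(2)}k)(\mathbf{x},\mathbf{x})$ genuinely requires, since separate analyticity in each variable does not imply joint analyticity in general. The honest resolution is that ``real analytic kernel'' here is to be read as jointly real analytic (the parenthetical being an informal characterization), which is automatic for the kernels of interest — in particular the Gaussian kernel $k(\mathbf{x},\mathbf{x}')=\exp(-\|\mathbf{x}-\mathbf{x}'\|^2/2\sigma_x^2)$ used throughout the paper extends to an entire function of each coordinate jointly. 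Boundedness of $k$ then enters only to make the neighborhood constants $M,\rho$ uniform (via $k(\mathbf{x},\mathbf{x})\le B$ and the Gram-matrix inequality $|\partial^{\alpha}_{(1)}\partial^{\beta}_{(2)}k(\mathbf{s},\mathbf{t})|^2\le(\partial^{\alpha}_{(1)}\partial^{\alpha}_{(2)}k)(\mathbf{s},\mathbf{s})\,(\partial^{\beta}_{(1)}\partial^{\beta}_{(2)}k)(\mathbf{t},\mathbf{t})$ coming from Cauchy--Schwarz in $\mathcal{F}_k$) and to keep $\Phi(\mathbf{x})=k(\cdot,\mathbf{x})$ norm-controlled in $\mathcal{F}_k$.

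An equivalent but tidier packaging, which I would mention as an alternative, is to show directly that the feature map $\Phi\colon\mathbf{x}\mapsto k(\cdot,\mathbf{x})$ is a real analytic $\mathcal{F}_k$-valued map — its Taylor series $\sum_\alpha\frac{1}{\alpha!}\partial^{\alpha}_{(2)}k(\cdot,\mathbf{x}_0)(\mathbf{x}-\mathbf{x}_0)^\alpha$ converges in $\mathcal{F}_k$ precisely because of the same norm bound above — and then observe that $f=\langle f,\Phi(\cdot)\rangle_{\mathcal{F}_k}$ is the composition of $\Phi$ with a bounded linear functional, hence real analytic. This routes all the analysis through the single estimate on $\|\partial^\alpha\Phi(\mathbf{x})\|_{\mathcal{F}_k}$ and makes the role of each hypothesis transparent.
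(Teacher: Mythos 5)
A point of order first: the paper does not prove this lemma at all --- it is imported verbatim as Lemma 1 of \citet{ChwRamSejGre2015} and used as a black box in the proof of Theorem \ref{thm:fscd_pop} --- so the comparison here is with the cited source rather than with any argument in this paper. That said, your route is the standard one (and essentially the one underlying the cited result): derivative reproducing $\partial^{\alpha}f(\mathbf{x})=\langle f,\partial^{\alpha}_{(2)}k(\cdot,\mathbf{x})\rangle_{\mathcal{F}_{k}}$ via the Steinwart--Christmann differentiation lemma, the norm identity $\|\partial^{\alpha}_{(2)}k(\cdot,\mathbf{x})\|_{\mathcal{F}_{k}}^{2}=\bigl(\partial^{\alpha}_{(1)}\partial^{\alpha}_{(2)}k\bigr)(\mathbf{x},\mathbf{x})$, and Cauchy estimates on the diagonal mixed derivatives to obtain the factorial growth bound characterizing real analyticity. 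Each of these steps is correct as you state it, and the feature-map repackaging at the end is a clean equivalent formulation.

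The one substantive issue is exactly the one you flag yourself, and it deserves to be called a genuine gap in the lemma \emph{as literally stated} rather than a cosmetic reading choice. The hypothesis asserts only separate analyticity ($\mathbf{x}\mapsto k(\mathbf{x},\mathbf{v})$ analytic for each fixed $\mathbf{v}$), and separate real analyticity does not imply joint analyticity --- indeed it does not even imply continuity, as $k(x,y)=xy/(x^{2}+y^{2})$ extended by $0$ shows. Your proof genuinely needs joint analyticity near the diagonal: both the differentiation lemma (which requires continuous mixed partials $\partial^{\alpha}_{(1)}\partial^{\alpha}_{(2)}k$) and the coefficient bound $|c_{\alpha\alpha}|\le M\rho^{-2|\alpha|}$ are joint conditions. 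Moreover, no route can avoid some such joint/uniform control: finite sections $\sum_{i}c_{i}k(\cdot,\mathbf{v}_{i})$ are analytic under the stated hypothesis, but RKHS-norm convergence gives only locally uniform convergence, and locally uniform limits of real analytic functions need not be analytic (by Weierstrass approximation, every continuous function on a compact interval is such a limit of polynomials). Since this paper invokes the lemma only in Theorem \ref{thm:fscd_pop}, applied to products of Gaussian kernels, which extend to entire functions jointly, your joint-analyticity reading is harmless for the paper's use; but the hypothesis should be stated jointly. Two minor remarks: boundedness of $k$ is not actually needed for your local estimates (joint analyticity already yields local uniform bounds; boundedness serves other purposes in \citet{ChwRamSejGre2015}), and the derivative-growth characterization is usually stated with $|\alpha|!$ rather than $\alpha!$, but since $\alpha!\le|\alpha|!\le d_{x}^{|\alpha|}\alpha!$ the two forms are interchangeable after adjusting $R$.
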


\subsection{PROOF OF THEOREM \ref{thm:pop_dp}}

\label{subsec:proof_popstatdiv}Recall the theorem: \popstatdiv*
\begin{proof}
We first rewrite the statistic as 
\begin{align*}
D_{p}^{2}(r) & =\big\|\mathbb{E}_{(\mathbf{x},\mathbf{y})\sim r_{xy}}K_{\mathbf{x}}\xi_{p_{|\mathbf{x}}}(\mathbf{y},\diamond)\big\|_{\mathcal{F}_{K}}^{2}\\
 & =\big\|\mathbb{E}_{\mathbf{x}\sim r_{x}}K_{\mathbf{x}}\mathbb{E}_{\mathbf{y}\sim r_{|\mathbf{x}}}\xi_{p_{|\mathbf{x}}}(\mathbf{y},\diamond)\big\|_{\mathcal{F}_{K}}^{2}\\
 & =\big\|\mathbb{E}_{\mathbf{x}\sim r_{x}}K_{\mathbf{x}}\mathbf{g}_{p,r}(\diamond|\mathbf{x})\big\|_{\mathcal{F}_{K}}^{2},
\end{align*}
where $\mathbf{g}_{p,r}(\mathbf{w}|\mathbf{x}):=\mathbb{E}_{\mathbf{y}\sim r_{|\mathbf{x}}}\xi_{p_{|\mathbf{x}}}(\mathbf{y},\mathbf{w})\in\mathbb{R}^{d_{y}}$
is the Stein witness function between $p_{|\mathbf{x}}$ and $r_{|\mathbf{x}}$,
and $\xi_{p_{|\mathbf{x}}}(\mathbf{y},\cdot):=l(\mathbf{y},\cdot)\nabla_{\mathbf{y}}\log p(\mathbf{y}|\mathbf{x})+\nabla_{\mathbf{y}}l(\mathbf{y},\cdot)\in\mathcal{F}_{l}^{d_{y}}$
for $r_{x}$-almost all $\mathbf{x}$ \citep{ChwStrGre2016,LiuLeeJor2016,JitXuSzaFukGre2017}.
By \citet[Theorem 2.2]{ChwStrGre2016}, for $r_{x}$-almost all $\mathbf{x}\in\mathcal{X}$,
the Kernel Stein Discrepancy (KSD) between the two probability density
functions $p_{|\mathbf{x}}$ and $r_{|\mathbf{x}}$ is 0 if and only
if they coincide. That is, given $\mathbf{x}\sim r_{x}$, $\mathrm{KSD}_{p_{|\mathbf{x}}}^{2}(r_{|\mathbf{x}})=0=\|\mathbf{g}_{p,r}(\diamond|\mathbf{x})\|_{\mathcal{F}_{l}^{d_{y}}}^{2}$
if and only if $p_{|\mathbf{x}}=r_{|\mathbf{x}}$. Thus, proving the
claim amounts to showing $\mathbf{g}_{p,r}(\diamond|\mathbf{x})=\mathbf{0}$
for $r_{x}$-almost all $\mathbf{x}$ if and only if $p\rxeq r$.
Since $\mathbf{g}_{p,r}\in L^{2}(\mathcal{X},r_{x};\mathcal{F}_{l}^{d_{y}})$
and $K$ is $C_{0}$-universal, Lemma \ref{lem:carmeli_injective}
(by setting $\mathcal{Z}=\mathcal{F}_{l}^{d_{y}}$) implies that the
map $\mathbf{g}_{p,r}\mapsto\mathbb{E}_{\mathbf{x}\sim r_{x}}K_{\mathbf{x}}\mathbf{g}_{p,r}(\diamond|\mathbf{x})$
is injective. As a result of the injectivity and the fact that $A\mathbf{0}=\mathbf{0}$
if $A$ is a linear operator, we have $\mathbb{E}_{\mathbf{x}\sim r_{x}}K_{\mathbf{x}}\mathbf{g}_{p,r}(\diamond|\mathbf{x})=\mathbf{0}$
if and only if $\mathbf{g}_{p,r}=\mathbf{0}$ or equivalently $\mathbf{g}_{p,r}(\diamond|\mathbf{x})=\mathbf{0}$
for all $r_{x}$-almost all $\mathbf{x}$.
\end{proof}

\subsection{PROOF OF PROPOSITION \ref{prop:kssd_pop_ustat}}

From (\ref{eq:kssd_pop}), we have
\begin{align*}
D_{p}(r) & =\big\|\mathbb{E}_{(\mathbf{x},\mathbf{y})\sim r_{xy}}K_{\mathbf{x}}\xi_{p_{|\mathbf{x}}}(\mathbf{y},\diamond)\big\|_{\mathcal{F}_{K}}^{2}\\
 & =\left\langle \mathbb{E}_{\mathbf{xy}}K_{\mathbf{x}}\xi_{p_{|\mathbf{x}}}(\mathbf{y},\diamond),\mathbb{E}_{\mathbf{x'y'}}K_{\mathbf{x'}}\xi_{p_{|\mathbf{x}'}}(\mathbf{y}',\diamond)\right\rangle _{\mathcal{F}_{K}}\\
 & \stackrel{(a)}{=}\mathbb{E}_{\mathbf{xy}}\mathbb{E}_{\mathbf{x'y'}}\left\langle K_{\mathbf{x}}\xi_{p_{|\mathbf{x}}}(\mathbf{y},\diamond),K_{\mathbf{x'}}\xi_{p_{|\mathbf{x}'}}(\mathbf{y}',\diamond)\right\rangle _{\mathcal{F}_{K}}\\
 & \stackrel{(b)}{=}\mathbb{E}_{\mathbf{xy}}\mathbb{E}_{\mathbf{x'y'}}\left\langle K_{\mathbf{x}'}^{*}K_{\mathbf{x}}\xi_{p_{|\mathbf{x}}}(\mathbf{y},\diamond),\xi_{p_{|\mathbf{x}'}}(\mathbf{y}',\diamond)\right\rangle _{\mathcal{F}_{l}^{d_{y}}}\\
 & =\mathbb{E}_{\mathbf{xy}}\mathbb{E}_{\mathbf{x'y'}}k(\mathbf{x},\mathbf{x}')h_{p}((\mathbf{x},\mathbf{y}),(\mathbf{x}',\mathbf{y}')),
\end{align*}
where at $(a)$ the expectation and the inner product commute because
of Bochner integrability of $(\mathbf{x},\mathbf{y})\mapsto K_{\mathbf{x}}\xi_{p_{|\mathbf{x}}}(\mathbf{y},\diamond)$
(see assumption \ref{enu:assume_bochner} in Theorem \ref{thm:pop_dp},
and \citet[Definition A.5.20]{SteChr2008}), at $(b)$ we use the
adjoint $K_{\mathbf{x}'}^{*}$ and the reproducing property i.e.,
$K_{\mathbf{x}'}^{*}K_{\mathbf{x}}=K(\mathbf{x},\mathbf{x}')=k(\mathbf{x},\mathbf{x}')I$,
\begin{align*}
 & h_{p}((\mathbf{x},\mathbf{y}),(\mathbf{x}',\mathbf{y}')):=\left\langle \xi_{p_{|\mathbf{x}}}(\mathbf{y},\diamond),\xi_{p_{|\mathbf{x}'}}(\mathbf{y}',\diamond)\right\rangle _{\mathcal{F}_{l}^{d_{y}}}\\
 & =l(\mathbf{y},\mathbf{y}')\mathbf{s}_{p}^{\top}(\mathbf{y}|\mathbf{x})\mathbf{s}_{p}(\mathbf{y}'|\mathbf{x}')+\sum_{i=1}^{d_{y}}\frac{\partial^{2}}{\partial y_{i}\partial y_{i}'}l(\mathbf{y},\mathbf{y}')\\
 & \phantom{=}+\mathbf{s}_{p}^{\top}(\mathbf{y}|\mathbf{x})\nabla_{\mathbf{y}'}l(\mathbf{y},\mathbf{y}')+\mathbf{s}_{p}^{\top}(\mathbf{y}'|\mathbf{x}')\nabla_{\mathbf{y}}l(\mathbf{y},\mathbf{y}'),
\end{align*}
and $\mathbf{s}_{p}(\mathbf{y}|\mathbf{x}):=\nabla_{\mathbf{y}}\log p(\mathbf{y}|\mathbf{x})$. 

\subsection{PROOF OF PROPOSITION \ref{prop:asymptotic_kssd}}

\label{subsec:proof_asymp_kssd}Define $\zeta_{1}:=\mathbb{V}\left[\mathbb{E}_{(\mathbf{x},\mathbf{y})\sim r_{xy}}H_{p}((\mathbf{x},\mathbf{y}),(\mathbf{x}',\mathbf{y}'))\right]$.
We only need to show that under $H_{0}$, $\dph$ is a degenerate
U-statistic i.e., $\zeta_{1}=0$, and under $H_{1}$, $\dph$ is non-degenerate
i.e., $\zeta_{1}>0$. Then, the asymptotic distributions in the two
cases follow from \citet[Section 5.5]{Ser2009}.

\paragraph{Case: $H_{0}$ is true}

\begin{align}
\mathbb{E}_{(\mathbf{x},\mathbf{y})\sim r_{xy}}H_{p}((\mathbf{x},\mathbf{y}),(\mathbf{x}',\mathbf{y}')) & =\mathbb{E}_{(\mathbf{x},\mathbf{y})\sim r_{xy}}\left\langle K_{\mathbf{x}}\xi_{p_{|\mathbf{x}}}(\mathbf{y},\diamond),K_{\mathbf{x'}}\xi_{p_{|\mathbf{x}'}}(\mathbf{y}',\diamond)\right\rangle _{\mathcal{F}_{K}}\nonumber \\
 & \stackrel{(a)}{=}\left\langle \mathbb{E}_{(\mathbf{x},\mathbf{y})\sim r_{xy}}K_{\mathbf{x}}\xi_{p_{|\mathbf{x}}}(\mathbf{y},\diamond),K_{\mathbf{x'}}\xi_{p_{|\mathbf{x}'}}(\mathbf{y}',\diamond)\right\rangle _{\mathcal{F}_{K}},\label{eq:kssd_half_eval_ustat}
\end{align}
where the interchange of the inner product and the expectation is
justified since $\mathbb{E}_{\mathbf{xy}}\|K_{\mathbf{x}}\xi_{p_{|\mathbf{x}}}(\mathbf{y},\diamond)\|_{\mathcal{F}_{K}}<\infty$
(Bochner integrability). But by Theorem \ref{thm:pop_dp} and (\ref{eq:kssd_pop}),
we have that $G_{p,r}:=\mathbb{E}_{(\mathbf{x},\mathbf{y})\sim r_{xy}}K_{\mathbf{x}}\xi_{p_{|\mathbf{x}}}(\mathbf{y},\diamond)=\mathbf{0}$.
So, $\zeta_{1}=0$ and the result under $H_{0}$ follows from \citet[Section 5.5.2]{Ser2009}.

\paragraph{Case: $H_{1}$ is true}

From (\ref{eq:kssd_pop_ustat}), it can be seen that 
\begin{align*}
\eqref{eq:kssd_half_eval_ustat} & =\mathbb{E}_{\mathbf{xy}}k(\mathbf{x},\mathbf{x}')h_{p}((\mathbf{x},\mathbf{y}),(\mathbf{x}',\mathbf{y}')):=t(\mathbf{x}',\mathbf{y}').
\end{align*}
Since $\zeta_{1}=\mathbb{V}[t(\mathbf{x},\mathbf{y})]$, it suffices
to show that $t$ is not a constant function. To see this, note that
the kernel $k$ is $C_{0}$-universal and cannot be a constant function.
The function $h_{p}$ (see (\ref{eq:unsmoothed_ustat_kernel})) includes
the kernel $l$ which is also $C_{0}$-universal. Therefore, $t$
is not a constant function and $\zeta_{1}>0$. We get the asymptotic
normality from the result in \citet[Section 5.5.1]{Ser2009}.

\subsection{PROOF OF THEOREM \ref{thm:fscd_pop}}

\label{subsec:proof_tpvdiv}Recall the proposition from the main text:\tpvstatdiv*
\begin{proof}
Recall that $\tpv(r):=\frac{1}{Jd_{y}}\sum_{i=1}^{J}\|G_{p,r}(\mathbf{v}_{i})\|_{\mathcal{F}_{l}^{d_{y}}}^{2}.$
If $H_{0}$ is true, then $G_{p,r}=\mathbf{0}$ by Theorem \ref{thm:pop_dp}.
As a result, $\tpv(r)=0$. Now suppose that $H_{1}$ is true. We first
show that $\tilde{G}(\mathbf{v}):=\|G_{p,r}(\mathbf{v})\|_{\mathcal{F}_{l}^{d_{y}}}^{2}$
is a real analytic function. Consider 
\begin{align*}
\bar{G}(\mathbf{v},\mathbf{v'}) & =\mathbb{E}_{\mathbf{x}\mathbf{y}}\mathbb{E}_{\mathbf{x'y'}}k(\mathbf{x},\mathbf{v})k(\mathbf{x}',\mathbf{v}')h_{p}((\mathbf{x},\mathbf{y}),(\mathbf{x}',\mathbf{y}'))\\
 & =\mathbb{E}_{\mathbf{x,x'}}\tilde{k}[(\mathbf{x},\mathbf{x}'),(\mathbf{v},\mathbf{v}')]\tilde{h}_{p}(\mathbf{x},\mathbf{x}'),
\end{align*}
where $\tilde{k}[(\mathbf{x},\mathbf{x}'),(\mathbf{v},\mathbf{v}')]:=k(\mathbf{x},\mathbf{v})k(\mathbf{x}',\mathbf{v}')$
and $\tilde{h}_{p}(\mathbf{x},\mathbf{x'})=\mathbf{\mathbb{E}_{y|\mathbf{x}}\mathbb{E}_{\mathbf{y'}|\mathbf{x'}}}h_{p}((\mathbf{x},\mathbf{y}),(\mathbf{x}',\mathbf{y}'))$.
Note that $\tilde{h}_{p}(\mathbf{x},\mathbf{x}')=\langle\mathbf{g}_{p,r}(\diamond|\mathbf{x}),\mathbf{g}_{p,r}(\diamond|\mathbf{x}')\rangle_{\mathcal{F}_{l}^{d_{y}}}$,
and thus we have 
\begin{align*}
\mathbb{E}_{\mathbf{x,x'}} & \langle\mathbf{g}_{p,r}(\diamond|\mathbf{x}),\mathbf{g}_{p,r}(\diamond|\mathbf{x}')\rangle_{\mathcal{F}_{l}^{d_{y}}}^{2}\leq\left(\mathbf{\mathbb{E}_{\mathbb{\mathbf{x}}}}\lVert\mathbf{g}_{p,r}(\diamond|\mathbf{x})\rVert_{\mathcal{F}_{l}^{d_{y}}}^{2}\right)^{2}.
\end{align*}
The RHS is finite by Assumption 3 in Theorem \ref{thm:pop_dp}, and
so $\tilde{h}_{p}\in L^{2}(\mathcal{X\times\mathcal{X}},r_{x}\otimes r_{x})$.
Therefore, $\bar{G}$ is given by the integral transform of $\tilde{h}_{p}$
with respect to the kernel $\tilde{k}$, which implies that $\bar{G}$
is an element of the RKHS of $\tilde{k}$ \citep[Theorem 4.26]{SteChr2008}.
Since the product of real analytic functions is real analytic, consequently
for any $(\mathbf{v},\mathbf{v}')$, $(\mathbf{z},\mathbf{z}')\mapsto\tilde{k}[\mathbf{(z},\mathbf{z}'),\mathbf{(v,\mathbf{v')}}]$
is real analytic and bounded by our assumption. Thus, by Lemma \ref{lem:rkhs_analytic},
$\bar{G}(\mathbf{v},\mathbf{v}')$ is analytic. From (\ref{eq:norm_rkhs_witness}),
we have $\tilde{G}(\mathbb{\mathbf{v}})=\bar{G}(\mathbf{v},\mathbf{v})$;
hence $\tilde{G}$ is analytic and not a zero function by Theorem
\ref{thm:pop_dp}. Since the zero set of $\tilde{G}(\mathbf{v})$,
$\{\mathbf{v}'\in\mathcal{X}\mid\tilde{G}(\mathbf{v}')=0\}$, has
zero Lebesgue measure \citep{Mit2015}, we have that $\eta$-almost
surely $\tilde{G}(\mathbf{v})>0$ for any $\mathbf{v}\sim\eta$, and
the result follows.
\end{proof}

\subsection{PROOF OF PROPOSITION \ref{prop:fscd_pop_ustat}}

From (\ref{eq:fscd_pop}), we first rewrite $\|G_{p,r}(\mathbf{v})\|_{\mathcal{F}_{l}^{d_{y}}}^{2}$
as
\begin{align}
 & \|G_{p,r}(\mathbf{v})\|_{\mathcal{F}_{l}^{d_{y}}}^{2}=\big\|\left[\mathbb{E}_{(\mathbf{x},\mathbf{y})\sim r_{xy}}K_{\mathbf{x}}\xi_{p_{|\mathbf{x}}}(\mathbf{y},\diamond)\right](\mathbf{v})\big\|_{\mathcal{F}_{l}^{d_{y}}}^{2}\nonumber \\
 & \stackrel{(a)}{=}\big\|\mathbb{E}_{(\mathbf{x},\mathbf{y})\sim r_{xy}}K(\mathbf{x},\mathbf{v})\xi_{p_{|\mathbf{x}}}(\mathbf{y},\diamond)\big\|_{\mathcal{F}_{l}^{d_{y}}}^{2}\nonumber \\
 & \stackrel{(b)}{=}\mathbb{E}_{\mathbf{x}\mathbf{y}}\mathbb{E}_{\mathbf{x'y'}}k(\mathbf{x},\mathbf{v})k(\mathbf{x}',\mathbf{v})h_{p}((\mathbf{x},\mathbf{y}),(\mathbf{x}',\mathbf{y}'))\label{eq:norm_rkhs_witness}
\end{align}
where at $(a)$ we use $(K_{\mathbf{x}}f)(\mathbf{v})=K(\mathbf{x},\mathbf{v})f$
for $f\in\mathcal{F}_{l}^{d_{y}}$, and at $(b)$ we use $h_{p}((\mathbf{x},\mathbf{y}),(\mathbf{x}',\mathbf{y}'))=\left\langle \xi_{p_{|\mathbf{x}}}(\mathbf{y},\diamond),\xi_{p_{|\mathbf{x}'}}(\mathbf{y}',\diamond)\right\rangle _{\mathcal{F}_{l}^{d_{y}}}$
as in (\ref{eq:kssd_pop_ustat}). It follows from (\ref{eq:fscd_pop})
that 
\begin{align*}
\tpv(r) & =\mathbb{E}_{\mathbf{x}\mathbf{y}}\mathbb{E}_{\mathbf{x'y'}}\hbpv((\mathbf{x},\mathbf{y}),(\mathbf{x}',\mathbf{y}')),
\end{align*}
where $(\mathbf{x},\mathbf{y}),(\mathbf{x}',\mathbf{y}')$ are i.i.d.
random variables following $r_{xy}$, 
\[
\hbpv((\mathbf{x},\mathbf{y}),(\mathbf{x}',\mathbf{y}')):=\frac{1}{d_{y}}\overline{k}_{V}(\mathbf{x},\mathbf{x}')h_{p}((\mathbf{x},\mathbf{y}),(\mathbf{x}',\mathbf{y}')),
\]
 and $\overline{k}_{V}(\mathbf{x},\mathbf{x}'):=\frac{1}{J}\sum_{i=1}^{J}k(\mathbf{x},\mathbf{v}_{i})k(\mathbf{x}',\mathbf{v}_{i})$
is a kernel that depends on $V$.
\begin{prop}[Asymptotic distributions of $\tpvh$]
 \label{prop:fscd_asymptotics}Assume that $\mathbb{E}\overline{k}_{V}^{2}(\mathbf{x},\mathbf{x}')h_{p}^{2}((\mathbf{x},\mathbf{y}),(\mathbf{x}',\mathbf{y}'))<\infty$.
The following statements hold.
\begin{enumerate}
\item If $\sigma_{V}^{2}:=4\mathbb{V}[\mathbb{E}_{\mathbf{xy}}[\hbpv((\mathbf{x},\mathbf{y}),(\mathbf{x}',\mathbf{y}'))]]>0$,
then $\sqrt{n}\left(\tpvh-\tpv(r)\right)\stackrel{d}{\to}\mathcal{N}(0,\sigma_{V}^{2})$;
\item If $\sigma_{V}^{2}=0$, then $n\tpvh\stackrel{d}{\to}\sum_{j=1}^{\infty}\lambda_{j}(\chi_{1j}^{2}-1)$,
where $\{\chi_{1j}^{2}\}_{j}$ are independent $\chi_{1}^{2}$ random
variables, $\lambda_{j}$ are eigenvalues of the operator $A$ defined
as $(A\varphi)(\mathbf{z})=\int\hbpv(\mathbf{z},\mathbf{z}')\varphi(\mathbf{z}')r_{xy}(\mathbf{z}')\thinspace\mathrm{d}\mathbf{z}'$
for non-zero $\varphi$, $\mathbf{z}:=(\mathbf{x},\mathbf{y})$ and
$\mathbf{z}':=(\mathbf{x}',\mathbf{y}')$.
\end{enumerate}
\end{prop}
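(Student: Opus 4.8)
The plan is to recognize $\tpvh$ as an ordinary second-order U-statistic with symmetric kernel $\hbpv$ and mean $\tpv(r)$, and then to obtain its two limiting regimes directly from the classical U-statistic theory of \citet[Section 5.5]{Ser2009}, following the same route as the proof of Proposition~\ref{prop:asymptotic_kssd}. The first step is to verify that $\hbpv$ is a legitimate U-statistic kernel. It is symmetric because $\overline{k}_V(\mathbf{x},\mathbf{x}')=\overline{k}_V(\mathbf{x}',\mathbf{x})$ and because $h_p((\mathbf{x},\mathbf{y}),(\mathbf{x}',\mathbf{y}'))=\langle\xi_{p_{|\mathbf{x}}}(\mathbf{y},\diamond),\xi_{p_{|\mathbf{x}'}}(\mathbf{y}',\diamond)\rangle_{\mathcal{F}_l^{d_y}}$ is symmetric in its two arguments. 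Moreover, the standing assumption $\mathbb{E}\overline{k}_V^2(\mathbf{x},\mathbf{x}')h_p^2((\mathbf{x},\mathbf{y}),(\mathbf{x}',\mathbf{y}'))<\infty$ gives $\mathbb{E}[\hbpv^2]<\infty$, which is exactly the square-integrability hypothesis required by Serfling's limit theorems.

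Next I would introduce the first projection $h_1(\mathbf{x},\mathbf{y}):=\mathbb{E}_{\mathbf{x'y'}}\hbpv((\mathbf{x},\mathbf{y}),(\mathbf{x}',\mathbf{y}'))$ and record the identity $\sigma_V^2=4\mathbb{V}[h_1(\mathbf{x},\mathbf{y})]=4\zeta_1$, where $\zeta_1$ is the H\'ajek projection variance in Serfling's notation. The proposition then reduces to a dichotomy on the sign of $\zeta_1$. For part~1, when $\sigma_V^2>0$ (equivalently $\zeta_1>0$), the U-statistic is non-degenerate, and the central limit theorem for non-degenerate U-statistics \citep[Section 5.5.1]{Ser2009} gives $\sqrt{n}(\tpvh-\tpv(r))\stackrel{d}{\to}\mathcal{N}(0,4\zeta_1)=\mathcal{N}(0,\sigma_V^2)$. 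For part~2, when $\sigma_V^2=0$, the statistic is a first-order degenerate U-statistic; the second-moment assumption makes the centered kernel a Hilbert--Schmidt kernel on $L^2(\mathcal{X}\times\mathcal{Y},r_{xy})$, so the integral operator $A$ is compact and self-adjoint and admits a discrete spectrum $\{\lambda_j\}$. The limit theorem for degenerate U-statistics \citep[Section 5.5.2]{Ser2009} then yields $n\tpvh\stackrel{d}{\to}\sum_{j=1}^{\infty}\lambda_j(\chi_{1j}^2-1)$ with the $\chi_{1j}^2$ independent.

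Because $\hbpv$ is literally the KCSD kernel $H_p$ with $k$ replaced by $\overline{k}_V$, the argument is essentially a transcription of the KCSD case and there is no deep obstacle. The two points requiring care are the bookkeeping that identifies $\sigma_V^2/4$ with the projection variance $\zeta_1$, so that the stated dichotomy lines up exactly with Serfling's non-degenerate and degenerate cases, and the observation that the degenerate regime $\sigma_V^2=0$ is precisely the one in which $\tpv(r)=0$ (it coincides with $H_0$, by the same reasoning as in the proof of Proposition~\ref{prop:asymptotic_kssd}). This last fact is what allows $A$ to be defined through the uncentered kernel $\hbpv$, since then $n\tpvh=n(\tpvh-\tpv(r))$ and no centering correction is needed.
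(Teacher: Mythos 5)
Your proposal is, in substance, exactly the argument the paper intends: the paper never writes out a proof of Proposition \ref{prop:fscd_asymptotics}, because (as remarked after Proposition \ref{prop:fscd_pop_ustat}) the FSCD statistic is the KCSD statistic with $k$ replaced by $\overline{k}_{V}$, so both limits are read off from the classical U-statistic dichotomy of Serfling exactly as in the proof of Proposition \ref{prop:asymptotic_kssd}. Your verification of symmetry of $\hbpv$, the square-integrability supplied by the stated assumption, the identification $\sigma_{V}^{2}=4\zeta_{1}$, and the two citations to Serfling's non-degenerate and degenerate theorems constitute that omitted proof, spelled out.

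The one step I would not let stand as written is your closing claim that the regime $\sigma_{V}^{2}=0$ ``coincides with $H_{0}$, by the same reasoning as in the proof of Proposition \ref{prop:asymptotic_kssd}.'' That reasoning establishes non-degeneracy under $H_{1}$ by invoking $C_{0}$-universality of $k$, and it does not transfer here: $\overline{k}_{V}(\mathbf{x},\mathbf{x}')=\frac{1}{J}\sum_{i}k(\mathbf{x},\mathbf{v}_{i})k(\mathbf{x}',\mathbf{v}_{i})$ is a finite-rank kernel and is not $C_{0}$-universal; moreover, under $H_{1}$ with $V$ falling in the (measure-zero) zero set of $\mathbf{v}\mapsto\|G_{p,r}(\mathbf{v})\|_{\mathcal{F}_{l}^{d_{y}}}^{2}$ one also has degeneracy, so the degenerate regime is not literally $H_{0}$. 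Your instinct about why some such fact is needed is correct and is the most valuable part of the write-up: Serfling's degenerate-case theorem gives the limit of $n(\tpvh-\tpv(r))$ with eigenvalues of the \emph{centered} kernel $\hbpv-\tpv(r)$, whereas the proposition is phrased with the uncentered kernel. The clean patch uses the positive-semidefinite structure of $\hbpv$: writing $\hbpv(\mathbf{z},\mathbf{z}')=\langle\Phi(\mathbf{z}),\Phi(\mathbf{z}')\rangle$ with $\Phi(\mathbf{z})=\frac{1}{\sqrt{Jd_{y}}}\left(k(\mathbf{x},\mathbf{v}_{i})\,\xi_{p_{|\mathbf{x}}}(\mathbf{y},\diamond)\right)_{i=1}^{J}$ and $\mu:=\mathbb{E}\Phi(\mathbf{z})$, one gets $\tpv(r)=\|\mu\|^{2}$ and $\mathbb{E}_{\mathbf{x}'\mathbf{y}'}\hbpv(\mathbf{z},(\mathbf{x}',\mathbf{y}'))=\langle\Phi(\mathbf{z}),\mu\rangle$; hence whenever $\tpv(r)=0$ we have $\mu=0$, the first projection vanishes identically, $\theta=0$, and the centered and uncentered kernels agree. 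This covers $H_{0}$ (where $G_{p,r}=\mathbf{0}$ by Theorem \ref{thm:pop_dp}) as well as the unlucky-$V$ cases under $H_{1}$. The literal converse implication $\sigma_{V}^{2}=0\Rightarrow\tpv(r)=0$, which part 2's phrasing technically requires, is proved neither by you nor by the paper; that imprecision is inherited from the paper's own statement rather than introduced by your argument.
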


\section{ILLUSTRATION OF THE FSCD POWER CRITERION}

\label{sec:toy_powcri} To complement Figure \ref{fig:toy_illus},
in this section, we illustrate the behavior of the power criterion
of the FSCD as a function of the test location on a number of one-dimensional
problems. These are shown in Figure \ref{fig:fscd_powcri_toys} where
observed data from $r$ are shown in blue. In all cases, the two kernels
$k$ and $l$ are set to Gaussian kernels.

\begin{figure}[h]
\centering
\vspace{-3mm}
\subfloat[Quadratic Gaussian Model (Gaussian $r_x$)\label{fig:toy_qgm_gauss}]{
\includegraphics[width=0.26\linewidth]{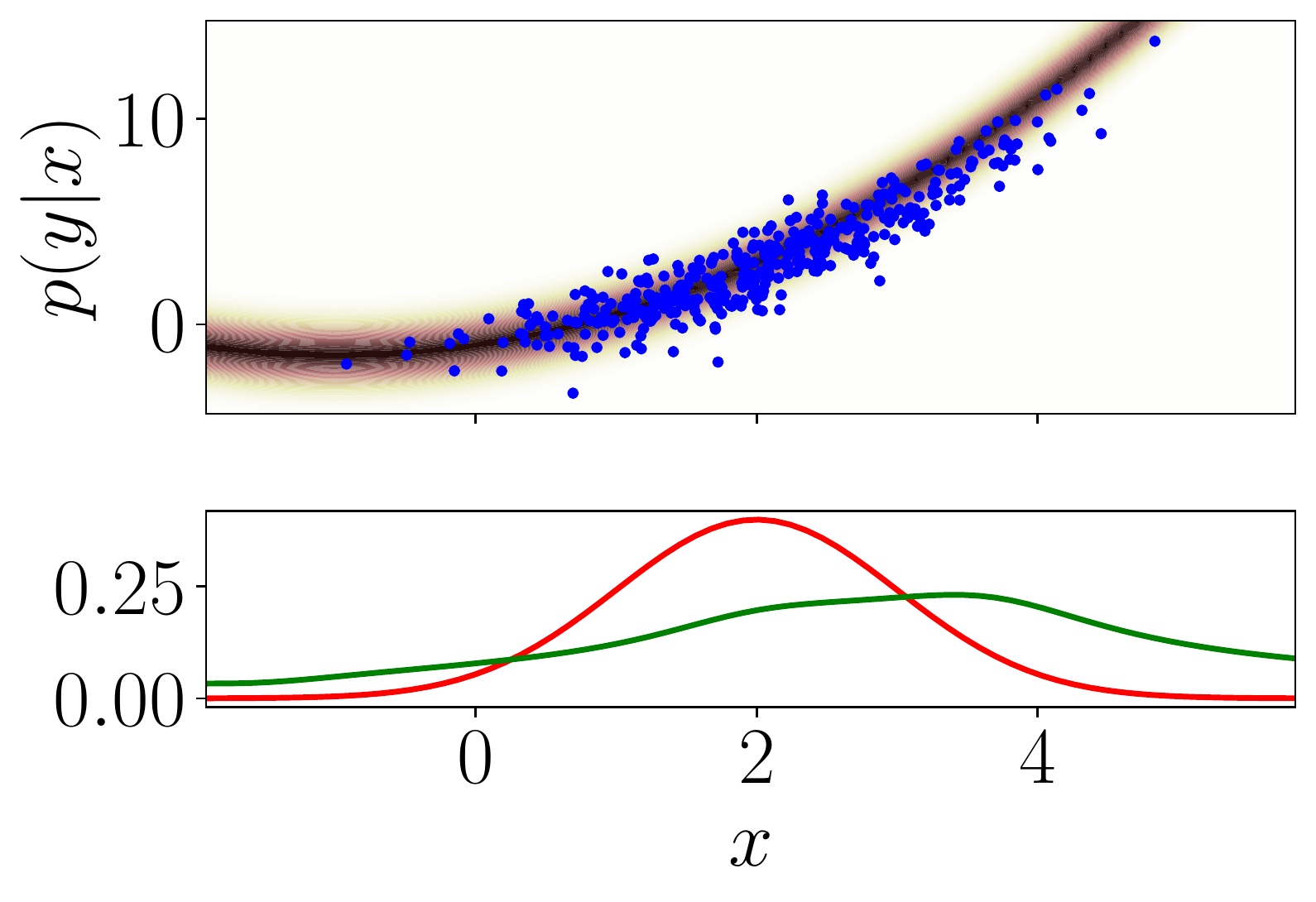}
}
\subfloat[Hetero. Gaussian Model\label{fig:toy_hgm}]{
\includegraphics[width=0.26\linewidth]{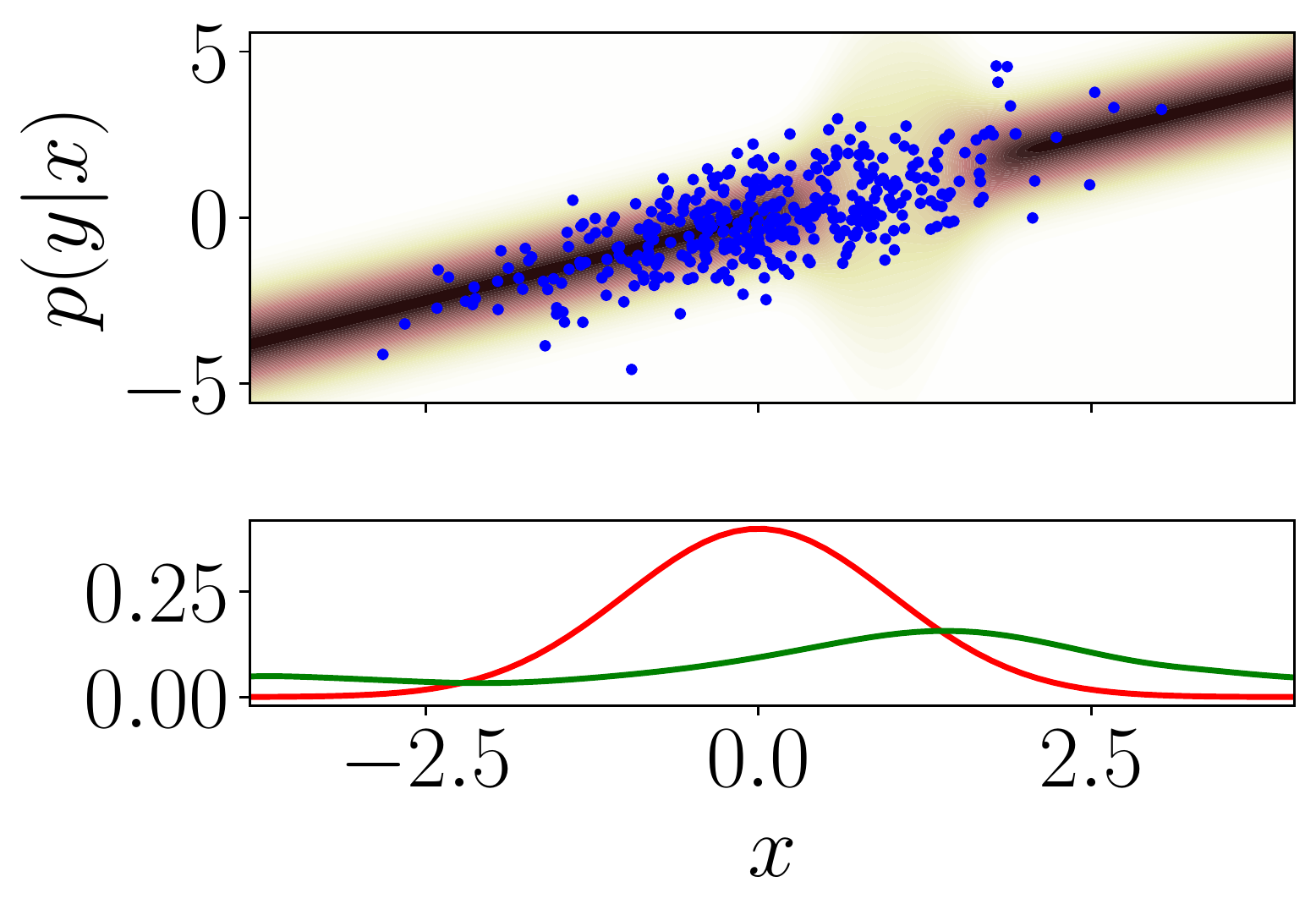}
}
\subfloat[Quadratic Gaussian Model (uniform $r_x$) \label{fig:toy_qgm_unif}]{
\includegraphics[width=0.26\linewidth]{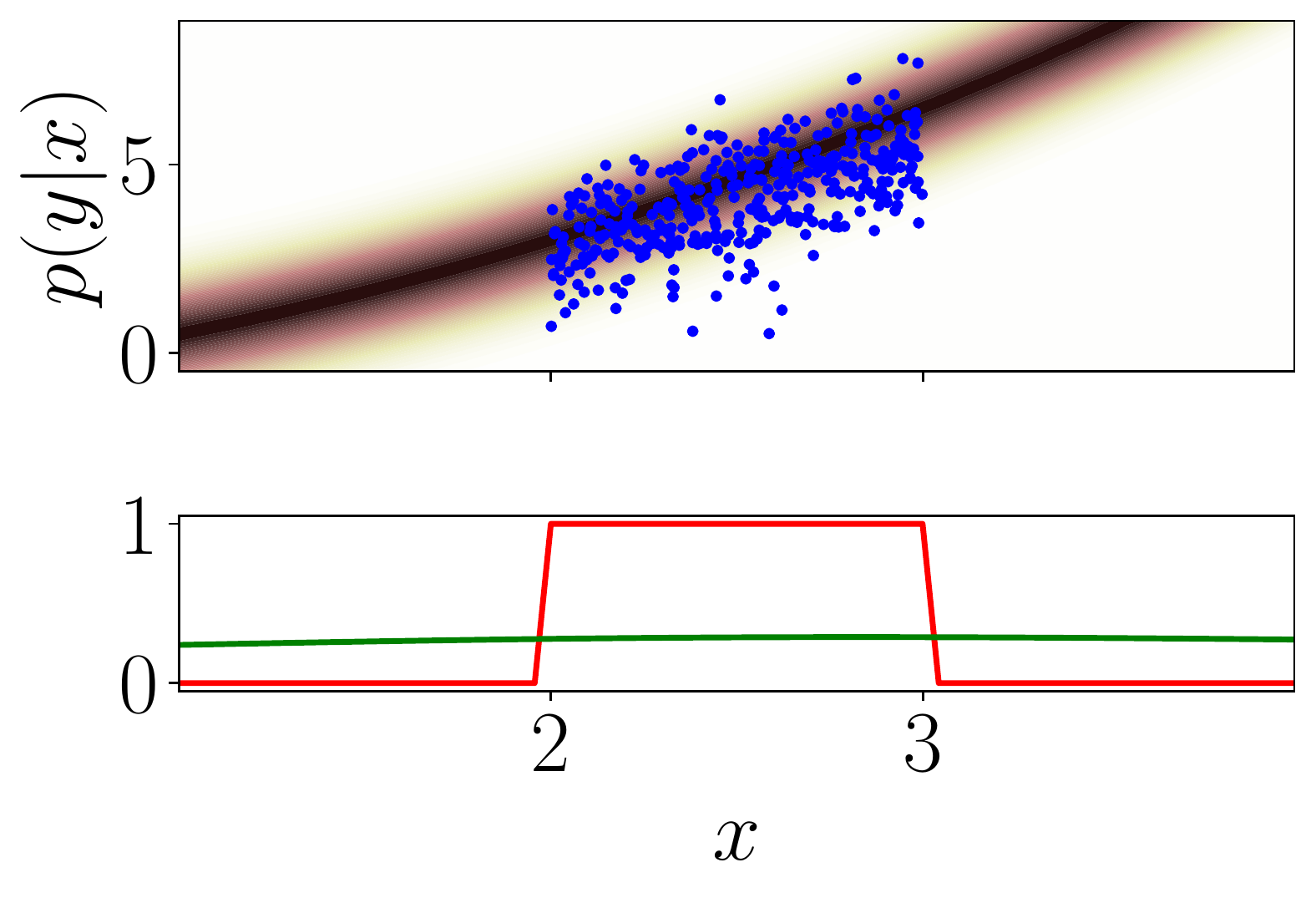}
\includegraphics[width=0.19\linewidth]{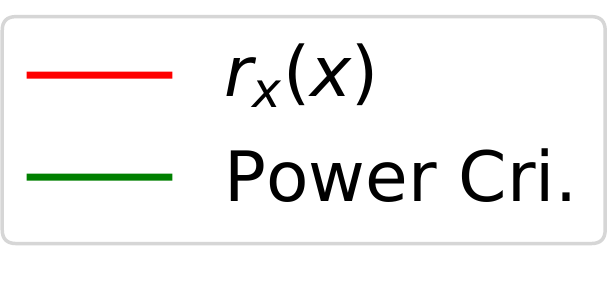} 
}\caption{The power criterion as a function of $x$ on a number of synthetic
one-dimensional problems. \label{fig:fscd_powcri_toys}}
\end{figure}
In Figure \ref{fig:toy_qgm_gauss}, $p(y|x):=\mathcal{N}\left(x+0.5x^{2}-1,1\right),r(y|x):=\mathcal{N}\left(x+0.4x^{2}-1,1\right),$
and $r_{x}(x):=\mathcal{N}(2,1)$. Here, the difference between the
data generating distribution $r$ and the model $p$ is the in the
coefficients of the second-order term in the mean, which differ only
slightly. We observe that the power criterion function is non-zero
almost everywhere.

In Figure \ref{fig:toy_hgm}, $p(y|x):=\mathcal{N}\left(x,\sigma^{2}(x)\right)$
where the variance function is $\sigma^{2}(x):=1+8\exp\left(-\frac{(x-1)^{2}}{2\times0.3^{2}}\right)$.
This is a linear regression model with heteroscedastic noise, and
is similar to the HGM model considered in Section \ref{sec:experiments}.
We set $r(y|x):=\mathcal{N}(x,1)$ and $r_{x}(x):=\mathcal{N}(0,1)$.
The problem is designed so that the model and the true conditional
density have the same (conditional) mean, but differ locally in the
conditional variance i.e., at $x=1$. We observe that the power criterion
function indeed has a peak around $x=1$, indicating that it is sensitive
to local differences. Note that theoretically the power criterion
function is non-zero almost everywhere (but could be arbitrarily close
to zero).

In Figure \ref{fig:toy_qgm_unif}, we consider the same $p$ and $r$
as specified in Figure \ref{fig:toy_qgm_gauss}, but change $r_{x}$
to be a uniform distribution defined on $[2,3]$. Since $p$ is different
from $r$, the power criterion function is non-zero almost everywhere
(implied by Theorem \ref{thm:fscd_pop}), and this is indeed the case.
We note that this statement holds true regardless of $r_{x}$, as
evident in Figure \ref{fig:toy_qgm_unif} (cf. Figure \ref{fig:toy_qgm_gauss}).
In particular, the support of $r_{x}$ may not cover the whole domain
$\mathcal{X}$.

\section{NYC TAXI DATA EXPERIMENT}

\label{sec:nyc_taxi_experiment}

\subsection{TRAINING OF THE MIXTURE DENSITY NETWORK}

Here, we describe technical details of the Mixture Density Network
(MDN) used in the NYC taxi data experiment\footnote{Our implementation is lightly based on public code at \url{https://github.com/sagelywizard/pytorch-mdn}.}
for estimating  the conditional probability of a drop-off location
given a pick-up location. The NYC taxi dataset is available at \url{https://www1.nyc.gov/site/tlc/about/tlc-trip-record-data.page}.
An MDN specifies a conditional density model of the form
\begin{align*}
p(\mathbf{y}|\mathbf{x}) & =\sum_{i=1}^{C}\pi_{i}(\mathbf{x})\mathcal{N}\left(\mathbf{y}\mid\boldsymbol{\mu}_{i}(\mathbf{x}),\mathrm{diag}\left(\sigma_{i,1}^{2}(\mathbf{x}),\ldots,\sigma_{i,d_{y}}^{2}(\mathbf{x})\right)\right),
\end{align*}
where $C$ is the number of Gaussian components, $\mathbf{x}\in\mathbb{R}^{d_{x}},\mathbf{y}\in\mathbb{R}^{d_{y}}$
and $\mathrm{diag}(\mathbf{s})$ constructs a diagonal matrix with
the diagonal entries given by $\mathbf{v}$. In our problem,\textbf{
$\mathbf{x}$ }(pick-up location) and $\mathbf{y}$ (drop-off location)
contain latitude/longitude coordinates; so, $d_{x}=d_{x}=2$. The
mixing proportion function $\boldsymbol{\pi}(\mathbf{x}):=(\pi_{1}(\mathbf{x}),\ldots,\pi_{C}(\mathbf{x}))$,
the mean function $\boldsymbol{\mu}(\mathbf{x}):=(\boldsymbol{\mu}_{1}(\mathbf{x}),\ldots,\boldsymbol{\mu}_{C}(\mathbf{x}))^{\top}\in\mathbb{R}^{C\times d_{y}}$,
and the variance function $\boldsymbol{\sigma}^{2}(\mathbf{x}):=\left(\sigma_{i,j}^{2}(\mathbf{x})\right)_{i,j}\in\mathbb{R}_{+}^{C\times d_{y}}$
for $i\in\{1,\ldots,C\},j\in\{1,2\}$ depend on $\mathbf{x}$ and
are specified by neural networks. The network architecture is as follows:

\raggedbottom
\begin{minipage}{\linewidth}
\centering
\begin{tabular}{ p{1.25in} p{.85in} *4{p{.75in}}}\toprule[1.5pt]
\bf Layer  $\downarrow$         & \bf Input & \bf Output \\ \midrule \midrule
Linear          & $d_x=2$ & 128            \\
Batch normalization &  -             &      -           \\
ReLU activation &  -              &      -           \\
Linear          & 128           & 64             \\
Batch normalization        &  -              &   -              \\
ReLU activation &  -              &  -               \\
Linear          & 64            & $C=20$             \\ 
Softmax          &   -             &     -          \\ \bottomrule
\end{tabular}
\captionof{table}{Network architecture for  $\boldsymbol{\pi}$.}
\end{minipage}
 
\begin{minipage}{\linewidth}
\centering
\begin{tabular}{ p{1.25in} p{.85in} *4{p{.75in}}}\toprule[1.5pt]
\bf Layer     $\downarrow$      & \bf Input & \bf Output \\ \midrule \midrule
Linear          &  $d_x=2$          & 128            \\
Batch normalization        &      -          &  -               \\
ReLU activation &   -             &  -               \\
Linear          & 128           & 64             \\
Batch normalization        &    -            &   -              \\
Linear          & 64            & $C \times d_y$             \\ \bottomrule
\end{tabular}
\captionof{table}{Network architecture for  $\boldsymbol{\mu}$.}
\end{minipage}
 
\begin{minipage}{\linewidth}
\centering
\begin{tabular}{ p{1.25in} p{.85in} *4{p{.75in}}}\toprule[1.5pt]
\bf Layer   $\downarrow$        & \bf Input & \bf Output \\ \midrule \midrule
Linear          & $d_x=2$         & 128            \\
Batch normalization        &    -            &  -               \\
ReLU activation &    -            &   -              \\
Linear          & 128           & 64             \\
Batch normalization        &   -             &  -               \\
Linear          & 64            & $C \times d_y$            \\ \bottomrule
\end{tabular}
\captionof{table}{Network architecture for $\boldsymbol{\sigma}^2$.}
\end{minipage}

We train the model on five million trip records of New York's yellow
cabs from January 2015 using $C=20$ Gaussian components. Only trips
with pick-up and drop-off locations within or close to Manhattan are
used. The training objective function is the negative log likelihood.
We train the model for three epochs using Adam \citep{KinBa2014}
as the optimization procedure, with a minibatch size of 2000, and
a learning rate of 0.001.

\subsection{POWER CRITERION OF THE FSCD}

In this section, we show more results akin to Figure \ref{fig:nyc_power_criterion}.
Here, we sample a number of candidate test location $\mathbf{v}$'s,
and evaluate the FSCD power criterion (see Section \ref{sec:optimized_fscd})
at each of these locations separately. The test location is denoted
by $\blacktriangle$ in the following figures. We use the same setting
as used to produce Figure \ref{fig:nyc_power_criterion}. It is worth
reiterating that the same sample $\{(\mathbf{x}_{i},\mathbf{y}_{i})\}_{i=1}^{12000}$
is used to compute the power criterion in all cases. Each of the following
figures corresponds to one realization of $\mathbf{v}$. Only sample
points $\{(\mathbf{x}_{i},\mathbf{y}_{i})\mid i=1,\ldots,12000\text{ and }\mathbf{x}_{i}\text{ is close to }\text{\textbf{v }}\}$
are shown (in blue), and not the full sample. The trained MDN's density
function $p(\cdot|\mathbf{v})$ is shown as a contour plot (in black).

\newcommand{\imgw}{40mm}

\includegraphics[width=\imgw]{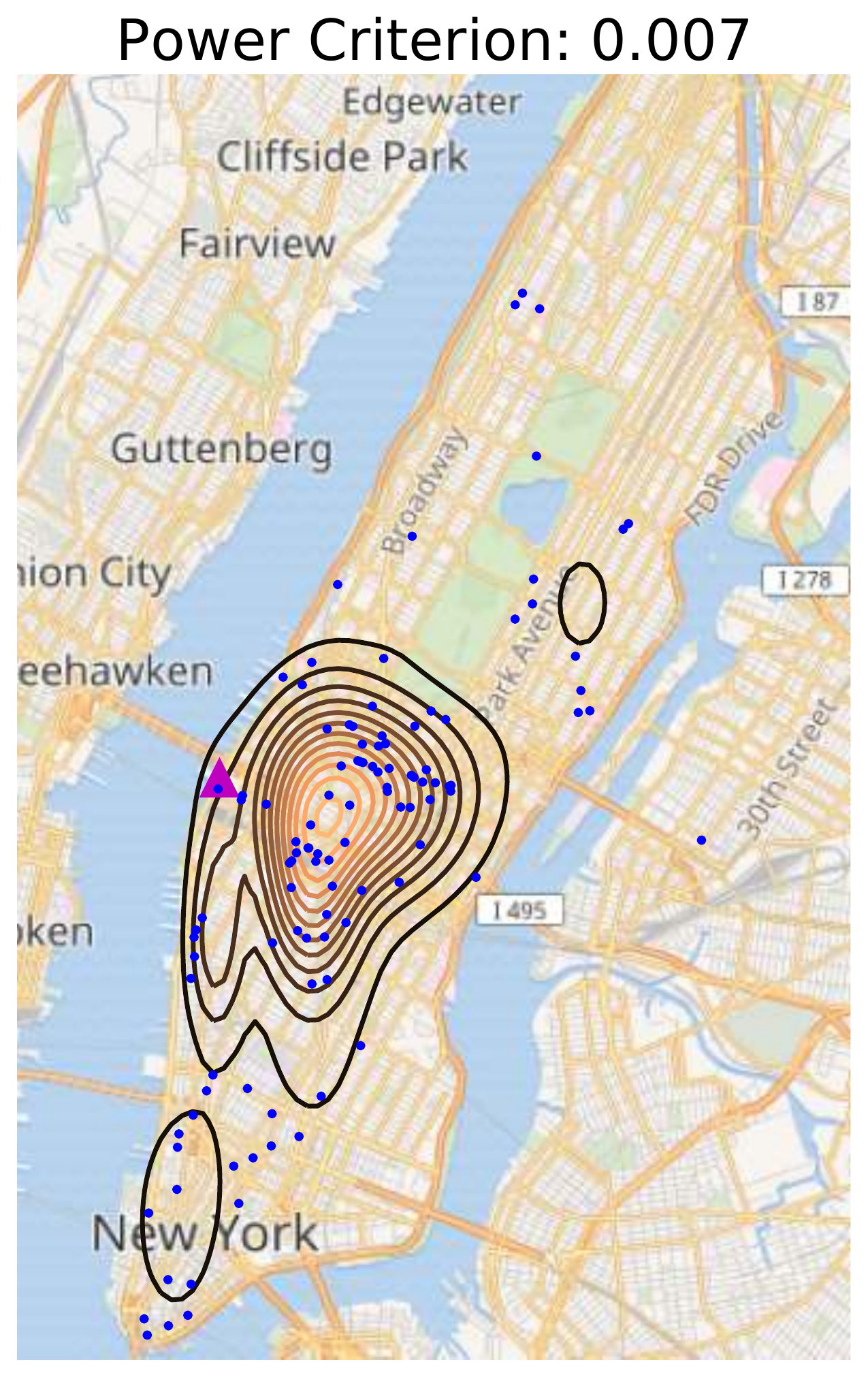}
\includegraphics[width=\imgw]{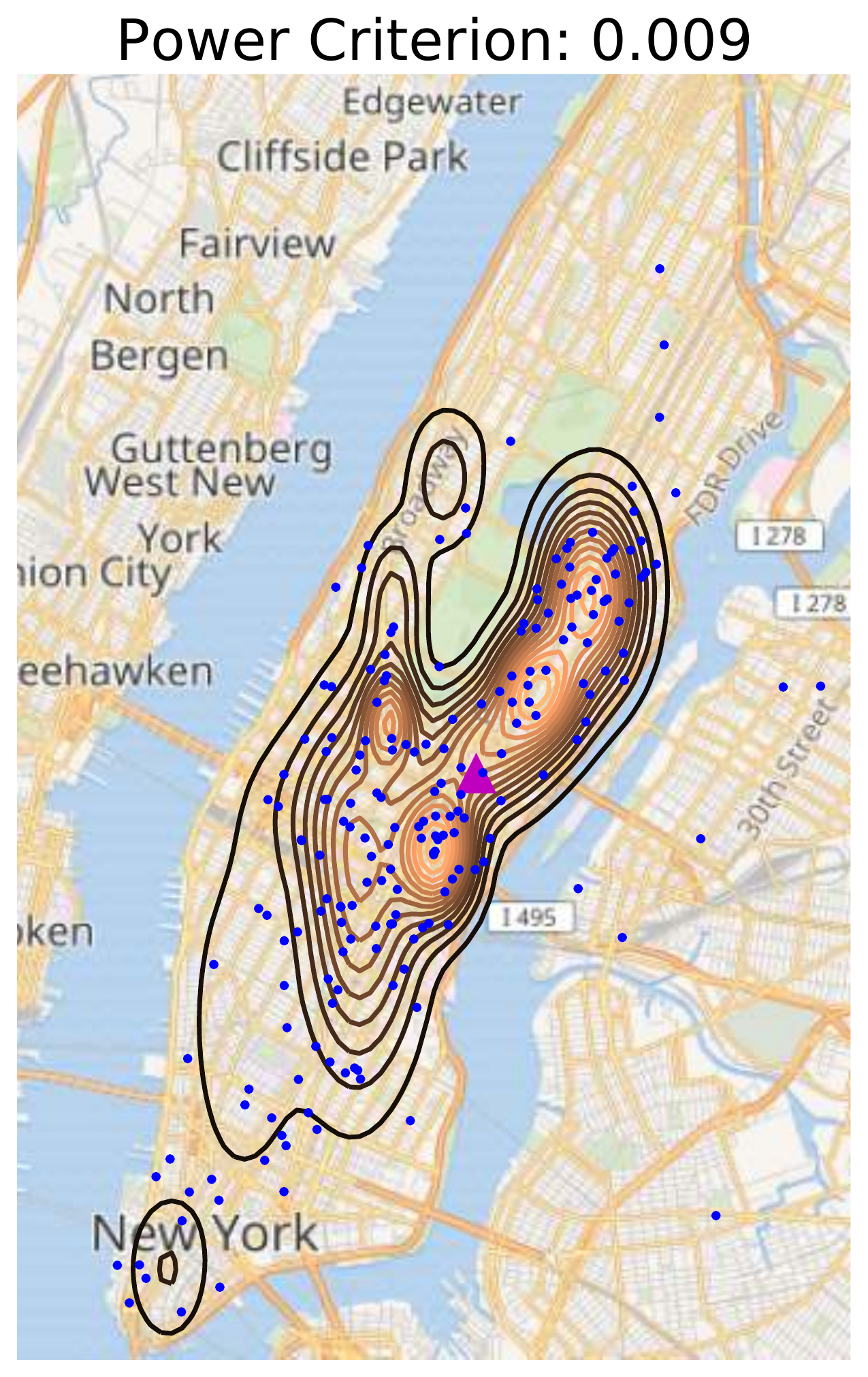}
\includegraphics[width=\imgw]{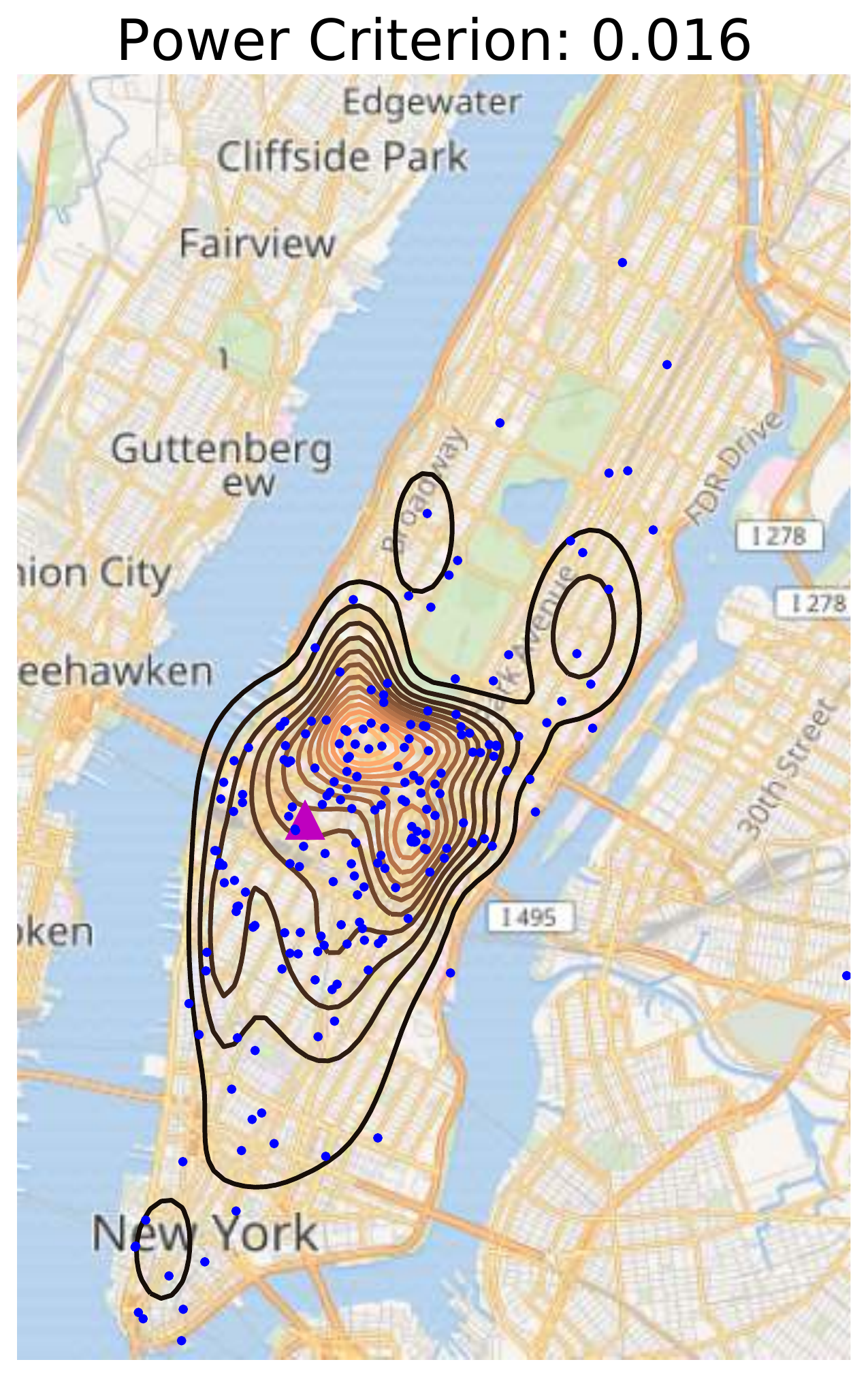}
\includegraphics[width=\imgw]{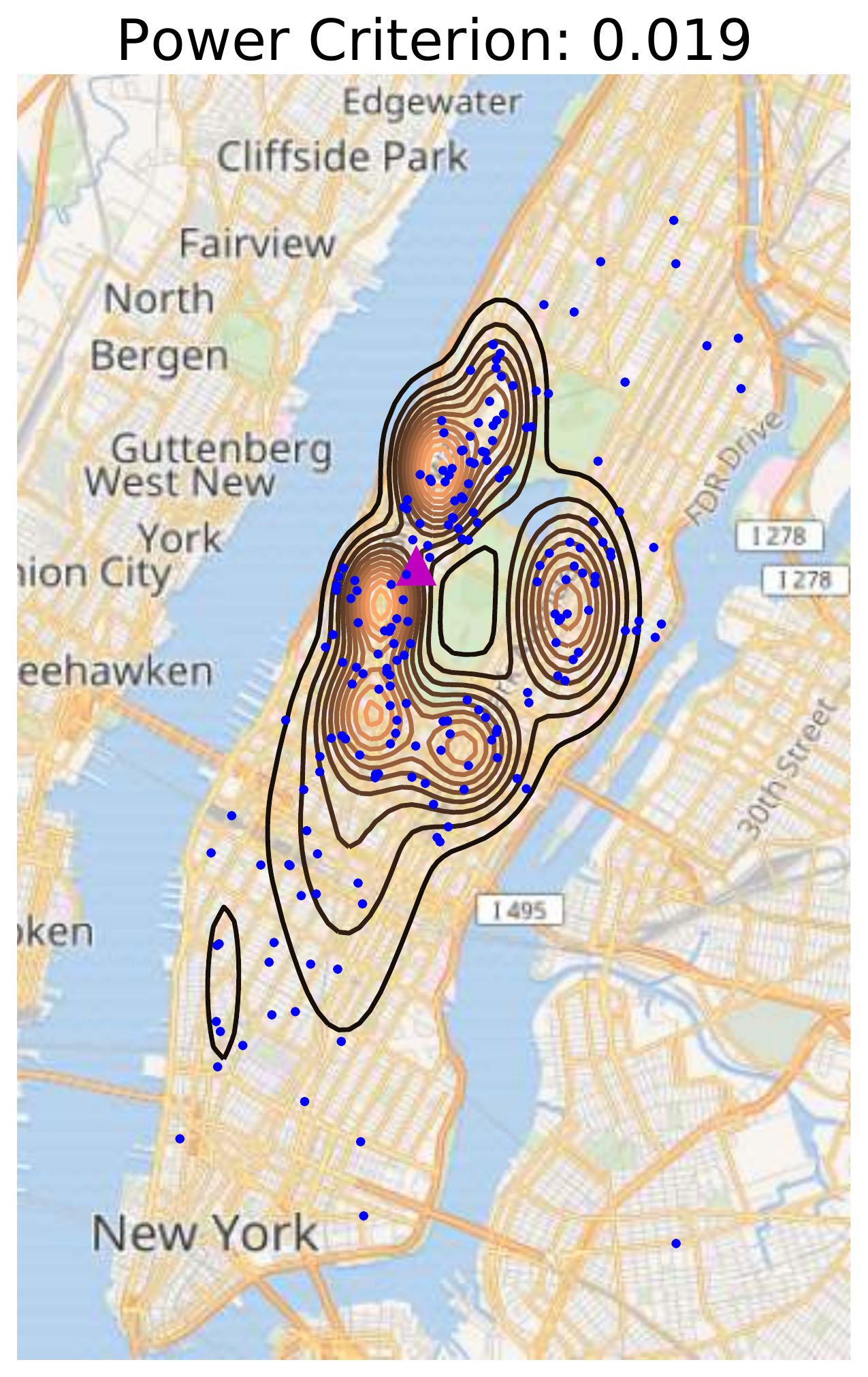}
\includegraphics[width=\imgw]{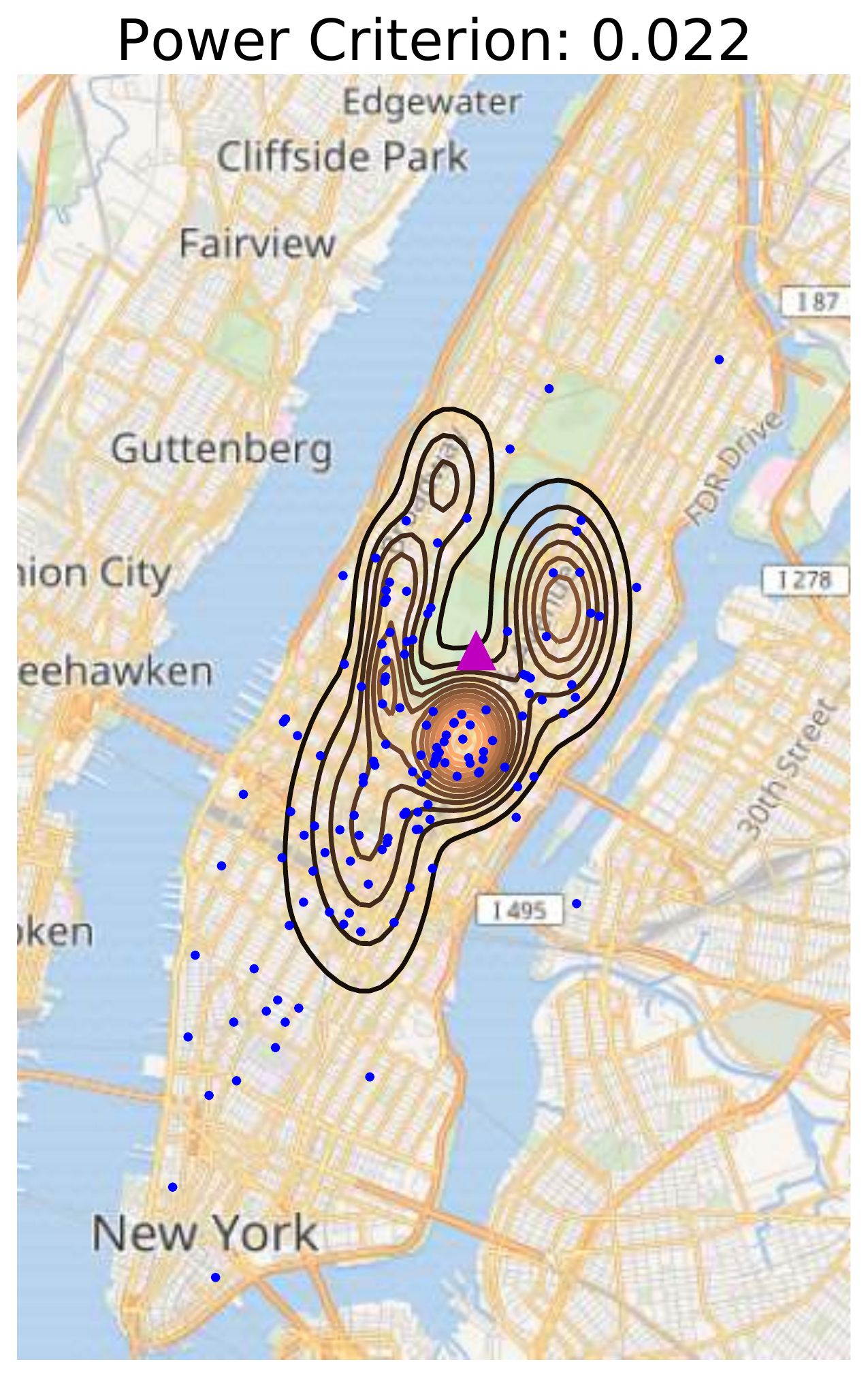}
\includegraphics[width=\imgw]{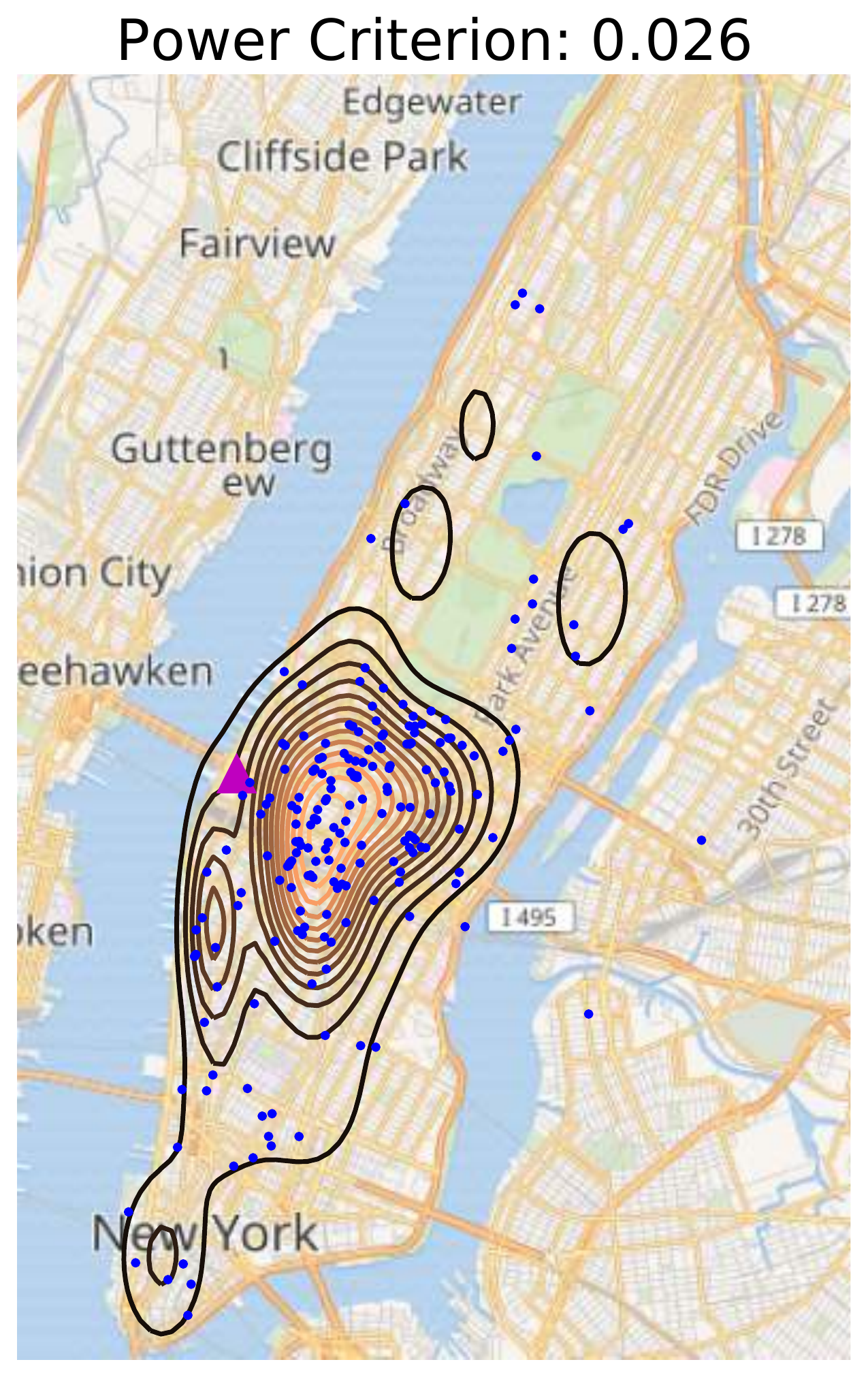}
\includegraphics[width=\imgw]{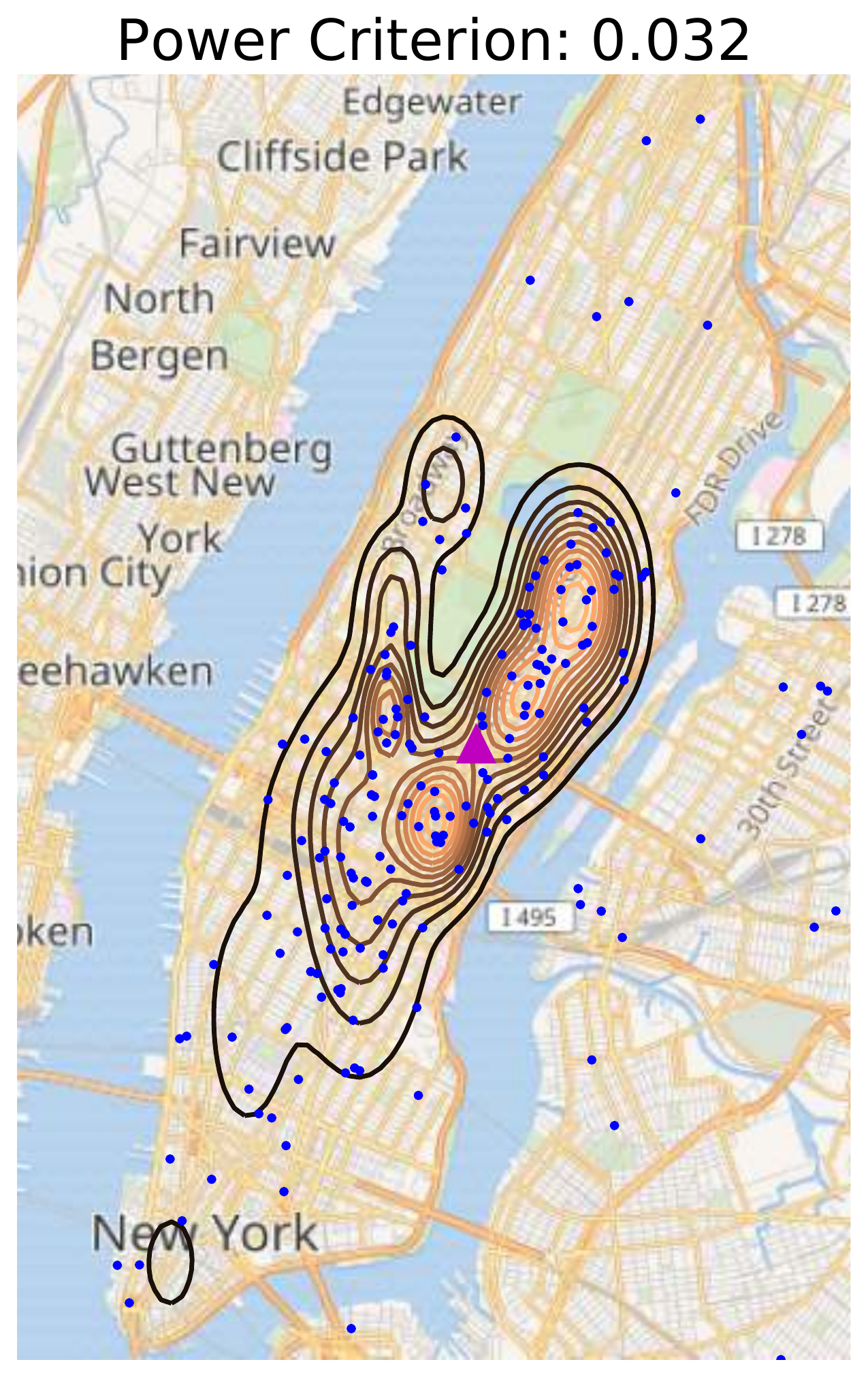}
\includegraphics[width=\imgw]{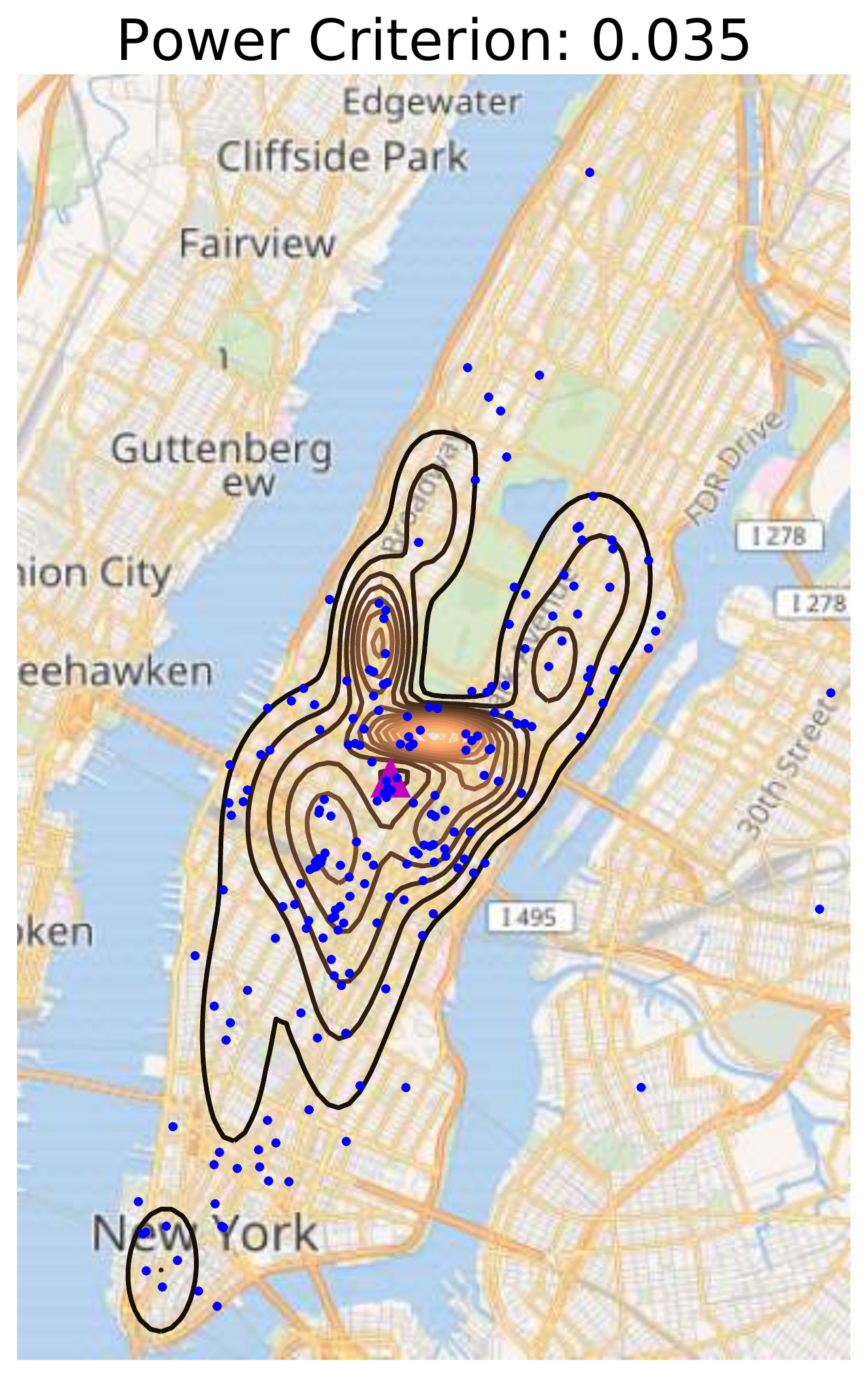}
\includegraphics[width=\imgw]{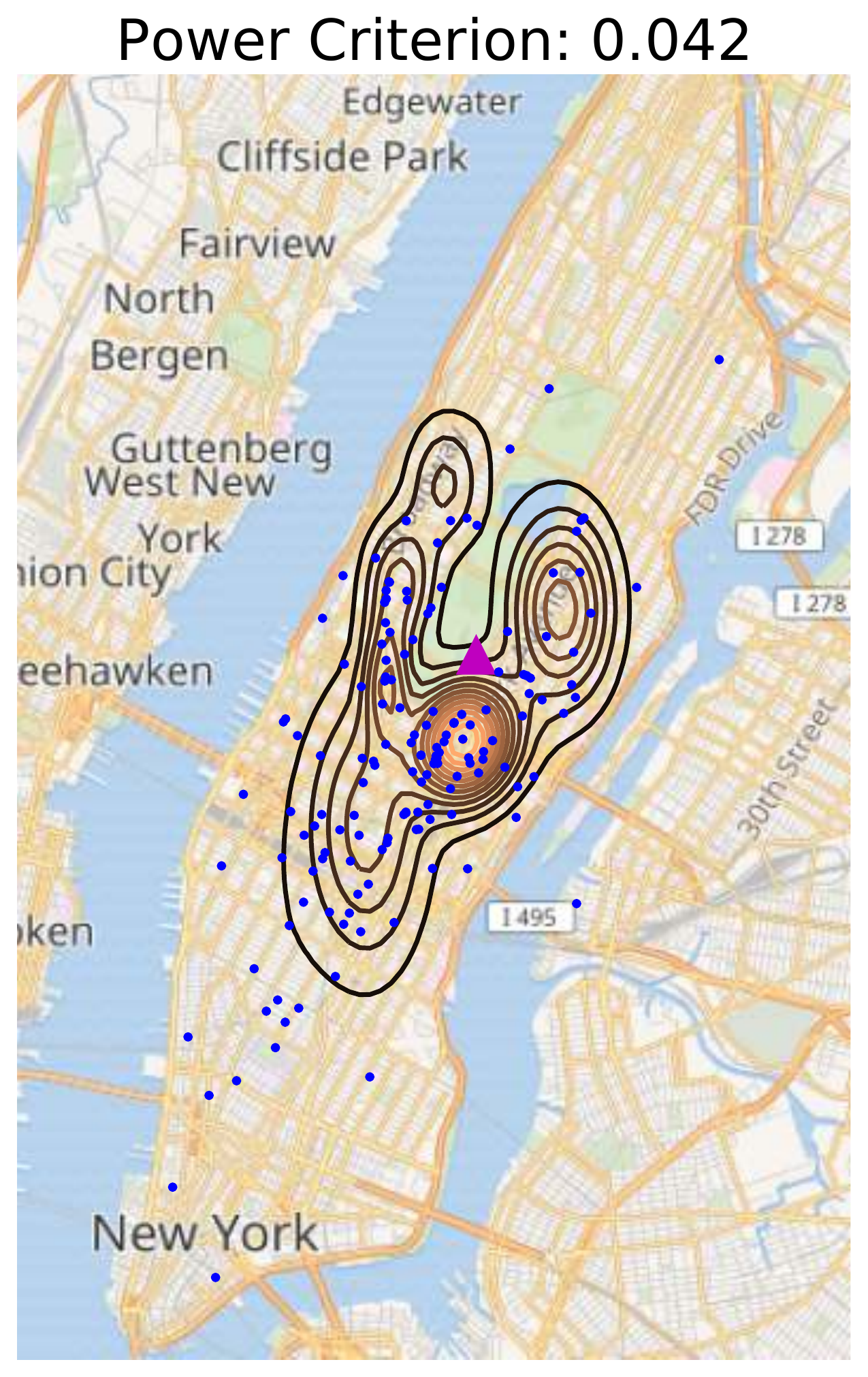}
\includegraphics[width=\imgw]{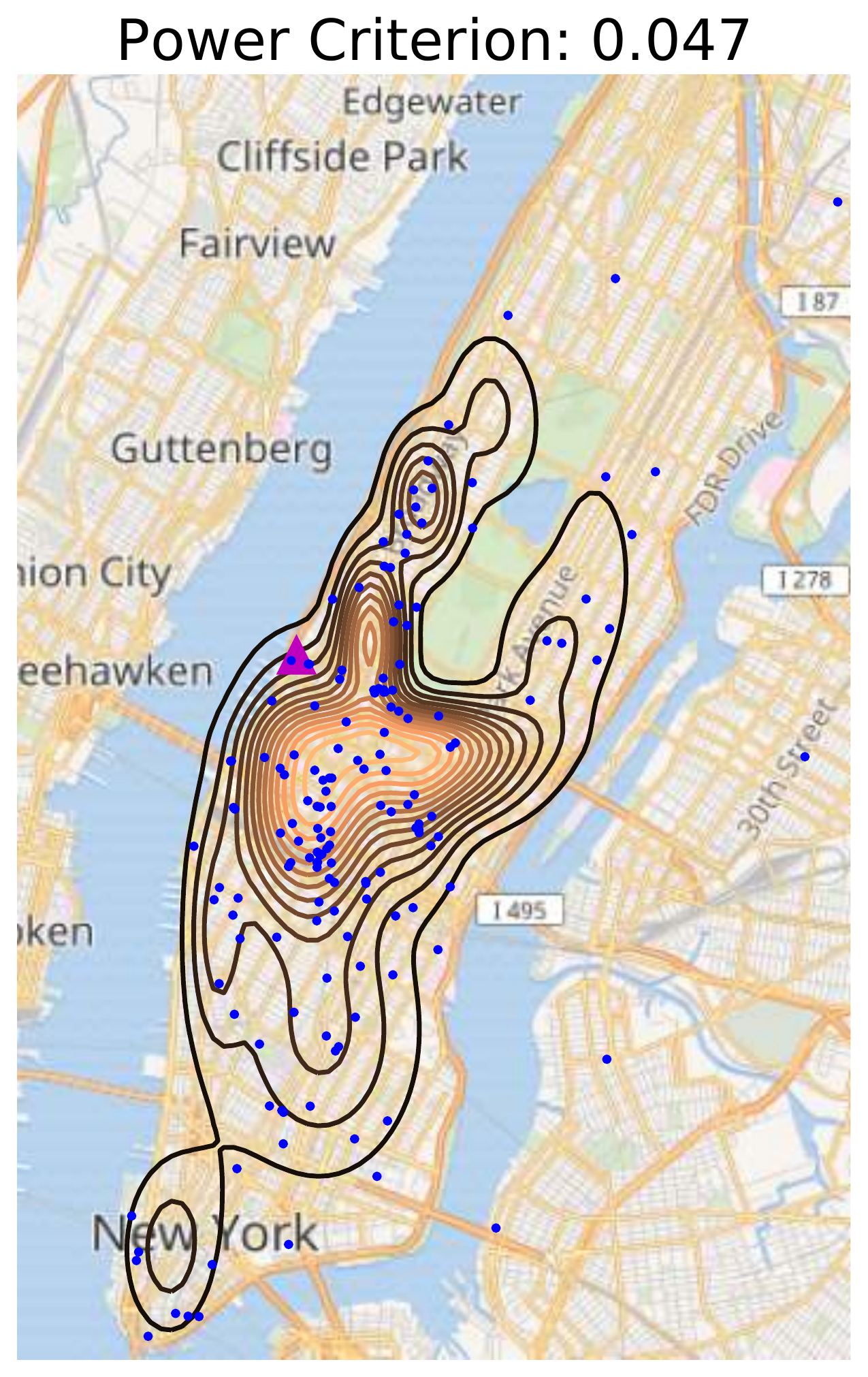}
\includegraphics[width=\imgw]{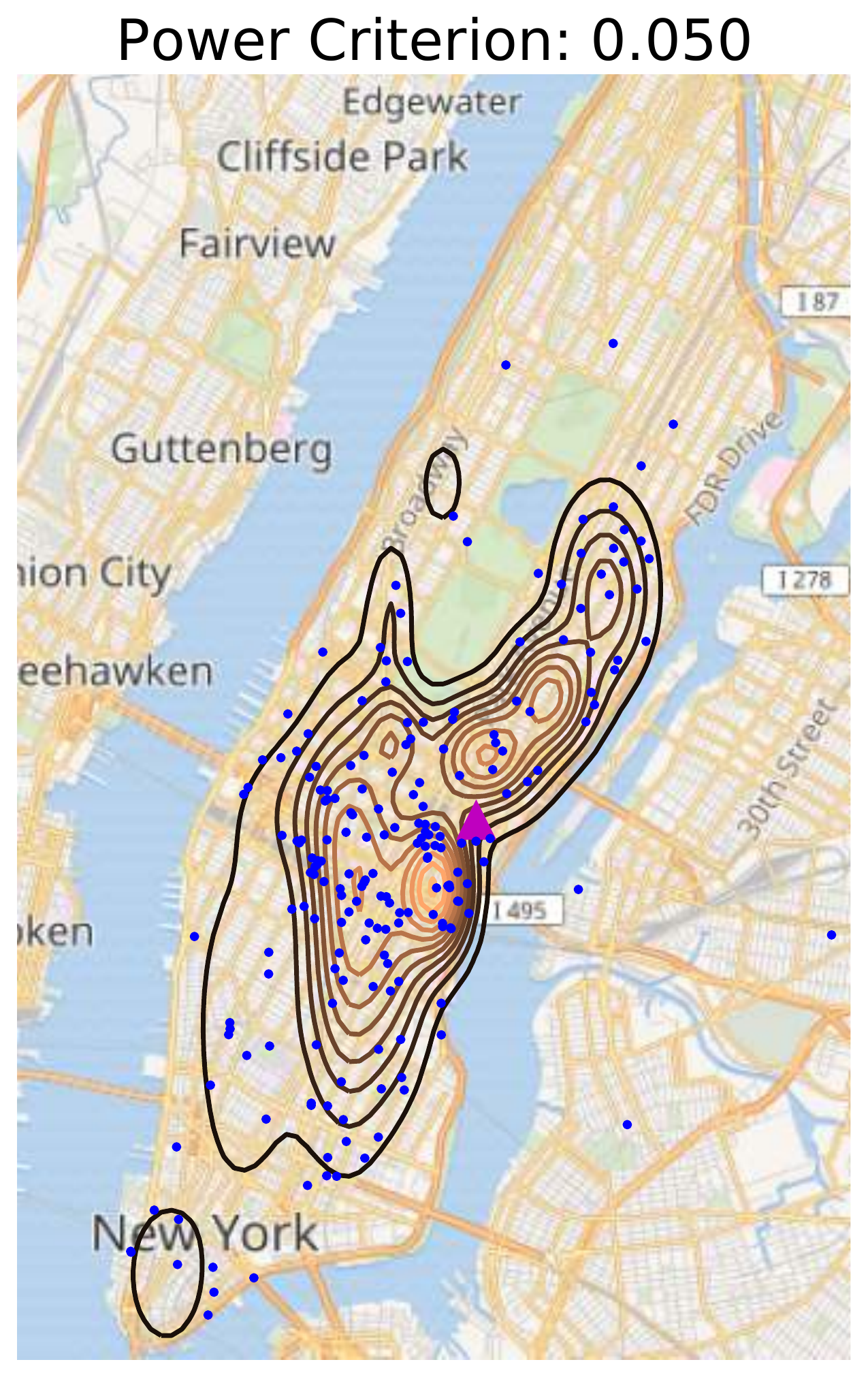}
\includegraphics[width=\imgw]{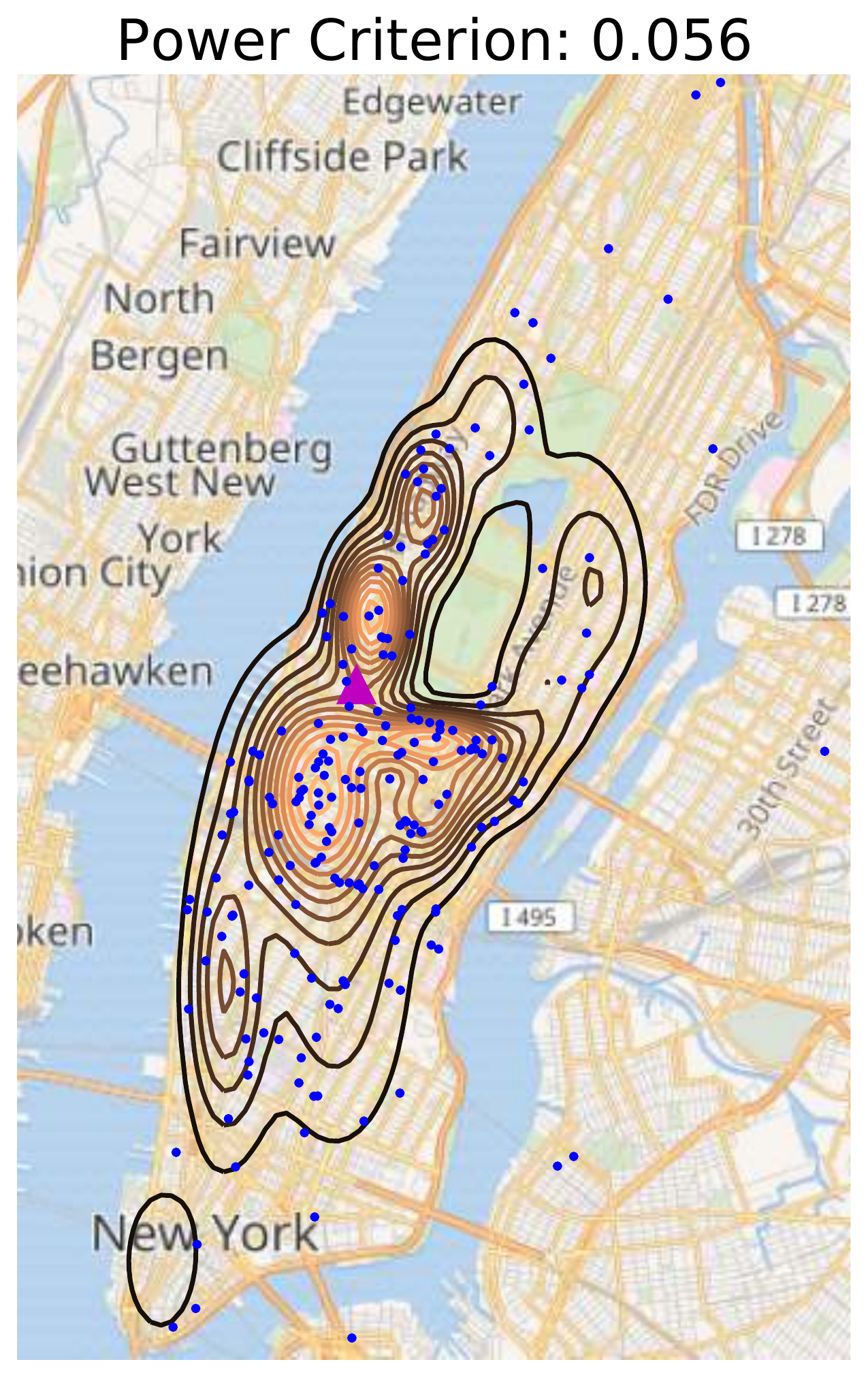}
\includegraphics[width=\imgw]{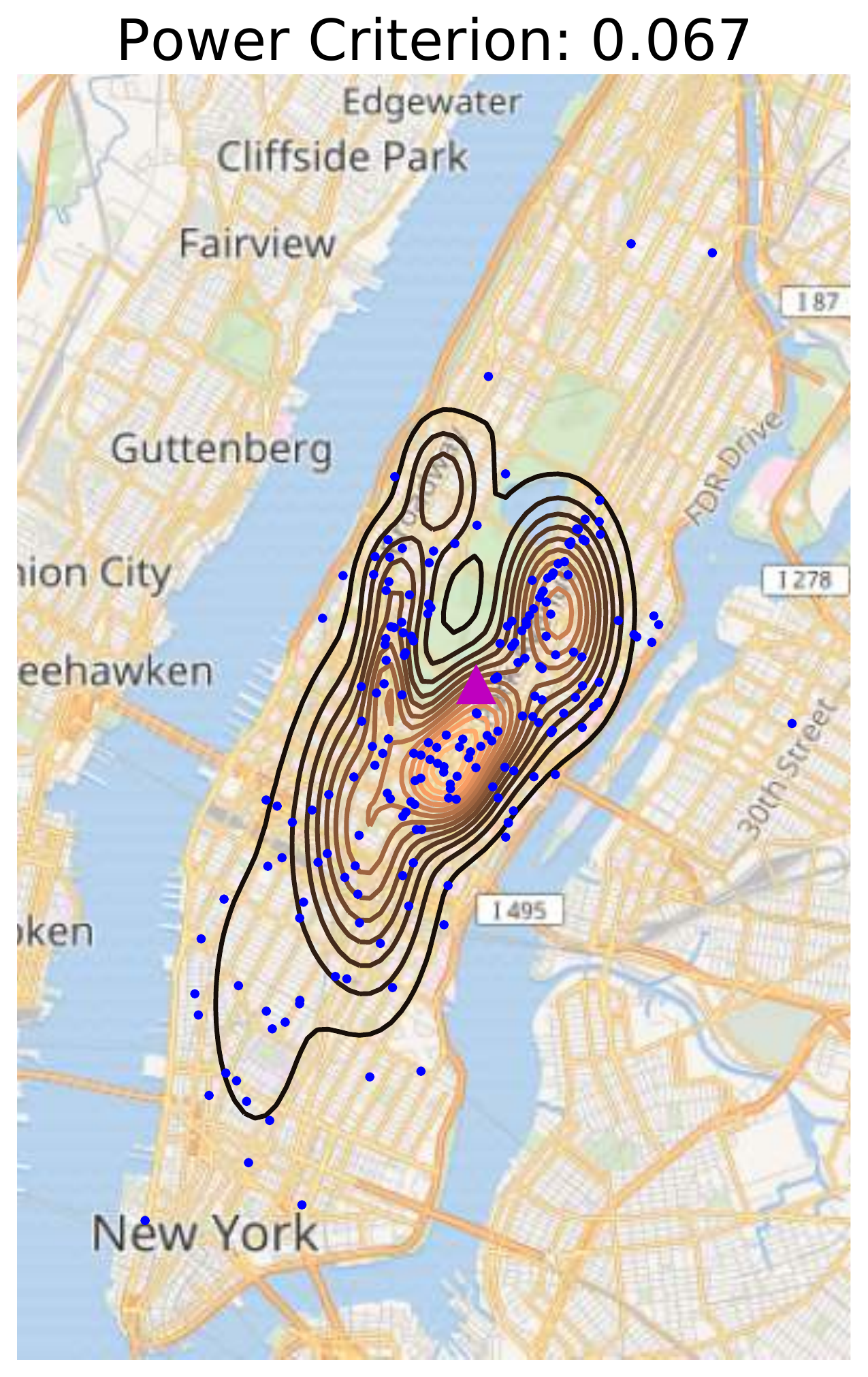}
\includegraphics[width=\imgw]{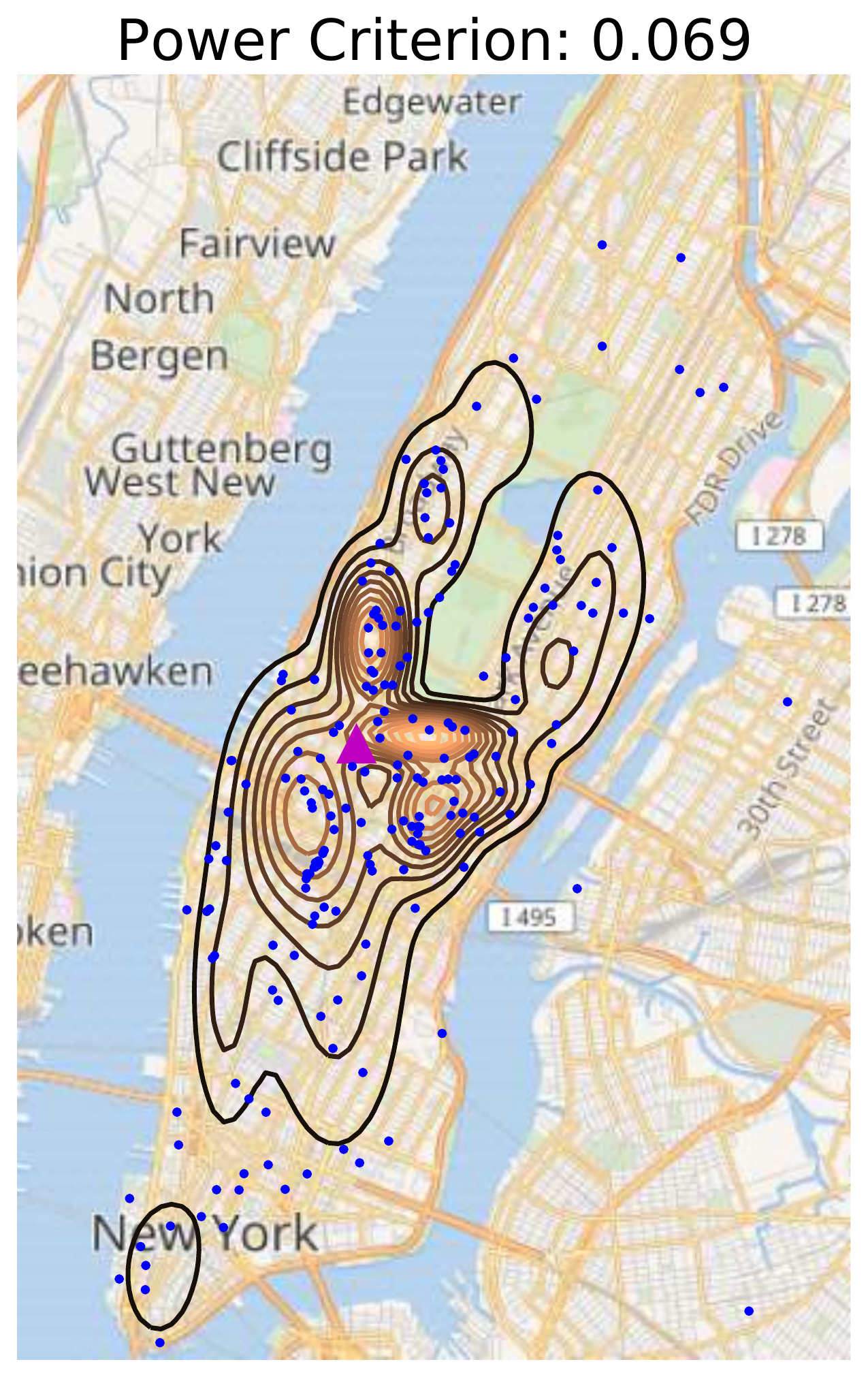}

We observe that the power criterion (dimensionless quantity) roughly
corresponds to the degree of mismatch between the conditional model
$p(\cdot|\mathbf{v})$ and the observed data i.e., high when the mismatch
is large. We note that power criterion values are affected by the
choice of the two kernels $k,l$, quality of the trained model, and
the sample size used to compute the power criterion. Changing the
two kernels may (and likely will) change the values of the power criterion,
and the ordering of these cases. Thoroughly studying the effects of
these factors on the computed power criterion will be an interesting
topic of future research.
\end{document}